\documentclass[acmsmall]{acmart}

\usepackage[english]{babel}
\usepackage{caption, subcaption} 
\usepackage{booktabs,multirow} 
\usepackage{longtable} 
\usepackage[ruled, linesnumbered, vlined, procnumbered]{algorithm2e} 
\usepackage{mathtools} 
\usepackage{interval} 
\usepackage{pifont} 
\usepackage[inline,shortlabels]{enumitem} 
\usepackage[per-mode = symbol]{siunitx} 
\usepackage{tikz} 
\usepackage[capitalise, noabbrev]{cleveref} 
\usepackage[shortcuts]{extdash} 
\usepackage[htt]{hyphenat} 
\usepackage{placeins}
\usepackage{float}

\setlist[description]{font=\normalfont\itshape\textbullet\space}

\newcommand{\isep}{\mathrel{{.}\,{.}}\nobreak}

\setcopyright{acmlicensed}
\copyrightyear{2026}
\acmYear{2026}
\acmDOI{XXXXXXX.XXXXXXX}

\acmJournal{TELO}
\acmVolume{1}
\acmNumber{1}
\acmArticle{1}
\acmMonth{2}

\begin{document}

\title{Pareto-Optimal Anytime Algorithms via Bayesian Racing}

\author{Jonathan Wurth}
\email{jonathan.wurth@uni-a.de}
\affiliation{%
  \institution{Universität Augsburg}
  \city{Augsburg}
  \country{Germany}
}
\orcid{0000-0002-5799-024X}

\author{Helena Stegherr}
\email{helena.stegherr@uni-a.de}
\affiliation{%
  \institution{Universität Augsburg}
  \city{Augsburg}
  \country{Germany}
}
\orcid{0000-0001-7871-7309}

\author{Neele Kemper}
\email{neele.kemper@uni-a.de}
\affiliation{%
  \institution{Universität Augsburg}
  \city{Augsburg}
  \country{Germany}
}
\orcid{0000-0003-2110-1006}

\author{Michael Heider}
\email{michael.heider@uni-a.de}
\affiliation{%
  \institution{Universität Augsburg}
  \city{Augsburg}
  \country{Germany}
}
\orcid{0000-0003-3140-1993}

\author{Jörg Hähner}
\email{joerg.haehner@uni-a.de}
\affiliation{%
  \institution{Universität Augsburg}
  \city{Augsburg}
  \country{Germany}
}
\orcid{0000-0003-0107-264X}

\renewcommand{\shortauthors}{Wurth et al.}

\begin{abstract}
  Selecting an optimization algorithm requires comparing candidates across problem instances, but the computational budget for deployment is often unknown at benchmarking time.
  Current methods either collapse anytime performance into a scalar, require manual interpretation of plots, or produce conclusions that change when algorithms are added or removed.
  Moreover, methods based on raw objective values require normalization, which needs bounds or optima that are often unavailable and breaks coherent aggregation across instances.
  We propose a framework that formulates anytime algorithm comparison as Pareto optimization over time: an algorithm is non-dominated if no competitor beats it at every timepoint.
  By using rankings rather than objective values, our approach requires no bounds, no normalization, and aggregates coherently across arbitrary instance distributions without requiring known optima.
  We introduce PolarBear (Pareto-optimal anytime algorithms via Bayesian racing), a procedure that identifies the anytime Pareto set through adaptive sampling with calibrated uncertainty.
  Bayesian inference over a temporal Plackett-Luce ranking model provides posterior beliefs about pairwise dominance, enabling early elimination of confidently dominated algorithms.
  The output Pareto set together with the posterior supports downstream algorithm selection under arbitrary time preferences and risk profiles without additional experiments.
\end{abstract}

\begin{CCSXML}
<ccs2012>
   <concept>
       <concept_id>10010147.10010257.10010258.10010259.10003268</concept_id>
       <concept_desc>Computing methodologies~Ranking</concept_desc>
       <concept_significance>500</concept_significance>
       </concept>
   <concept>
       <concept_id>10003752.10010070.10010071.10010077</concept_id>
       <concept_desc>Theory of computation~Bayesian analysis</concept_desc>
       <concept_significance>500</concept_significance>
       </concept>
   <concept>
       <concept_id>10003752.10010070.10010071.10010286</concept_id>
       <concept_desc>Theory of computation~Active learning</concept_desc>
       <concept_significance>300</concept_significance>
       </concept>
   <concept>
       <concept_id>10003752.10003809.10003716.10011136.10011797</concept_id>
       <concept_desc>Theory of computation~Optimization with randomized search heuristics</concept_desc>
       <concept_significance>500</concept_significance>
       </concept>
   <concept>
       <concept_id>10010147.10010341.10010342.10010345</concept_id>
       <concept_desc>Computing methodologies~Uncertainty quantification</concept_desc>
       <concept_significance>300</concept_significance>
       </concept>
 </ccs2012>
\end{CCSXML}

\ccsdesc[500]{Computing methodologies~Ranking}
\ccsdesc[500]{Theory of computation~Bayesian analysis}
\ccsdesc[300]{Theory of computation~Active learning}
\ccsdesc[500]{Theory of computation~Optimization with randomized search heuristics}
\ccsdesc[300]{Computing methodologies~Uncertainty quantification}

\keywords{anytime optimization, algorithm benchmarking, Plackett-Luce, Bayesian experiment design, racing algorithms, algorithm selection}


\maketitle

\section{Introduction}

Selecting an optimization algorithm for deployment requires comparing candidates across problem instances of interest.
However, the computational budget available for optimization is often unknown at benchmarking time.
It may depend on available resources, user patience, or external constraints that only materialize at deployment time.
When deployment arrives, budget information takes many forms: a fixed allocation (``at most 1000 evaluations''), a range (``between 1 and 2 hours''), a distribution reflecting resource uncertainty (``probably around 90 minutes, but possibly cut short''), or a preference weighting early versus late performance (``solutions found in the first minute matter most'').
This raises a fundamental question: \emph{what should we compute offline to support algorithm selection under any budget information that may arise at deployment time?}
This question is naturally addressed by anytime algorithms, which can return a valid solution at any point during execution, with performance defined at every potential budget.
For black-box optimization on noiseless problems, this property holds naturally, since we can track the best solution found so far.
Analyzing anytime performance means evaluating not just at a fixed final budget, but throughout execution.

A powerful tool to analyze the anytime performance of stochastic optimization algorithms is the Empirical Attainment Function (EAF), which is the empirical estimate of the probability that an algorithm attains a solution with objective value of at least $z$ by time $t$~\cite{lopez-ibanez2024}.
A common summary of this is the EAF-based Empirical Cumulative Distribution Function (ECDF), which partially integrates the EAF using an interval of objective values by counting the proportion of the interval that is attained by the algorithm at each point in time.
The lower bound of the interval is usually chosen as the best possible (or best known possible) attainable objective value, i.e., the global optimum.
The EAF and EAF-based ECDF (hereafter simply ECDF) are commonly plotted and analyzed visually.
For tasks involving automatic evaluation, such as automated algorithm configuration or design, the Area over the Convergence Curve (AOCC) has established itself as a popular measure~\cite{lopez-ibanez2014,vanstein2024a,ye2022,denobel2021}.
It is, despite its name, most easily defined as the expected area \emph{under} the ECDF, i.e., its integral over time.

While EAF and ECDF are sophisticated approaches for analyzing anytime performance across repeated executions and different algorithms on a single problem instance, aggregating across multiple instances comes with several problems.
On a single instance, the interval of objective values to plot in the EAF or integrate in the ECDF can simply be derived from the observed data.
However, this interval is instance-dependent, which means that it is not possible to combine or average the EAF and ECDF across instances without some form of normalization.
In practice, this is addressed by min-max normalizing the objective values per instance-dependent interval and then aggregating across instances~\cite{lopez-ibanez2024}.
We argue that this step of min-max normalization is not principled.
Min-max normalization requires the interval bounds, typically the global optimum and a worst-case value.
Firstly, for many problems, the global optimum is unknown or expensive to determine, limiting applicability to well-studied benchmark functions, or requiring to derive these bounds from observed data.
In the latter case, adding a new algorithm that finds better or worse solutions shifts all normalizations, which means that historical comparisons become invalid, and results depend on which algorithms are included.
Secondly, equal distances in normalized space usually do not correspond to equal difficulty.
Improving from a normalized value of 0.9 to 0.5 may be trivial, while improving from 0.1 to 0.05 may require exponentially more effort. 
Averaging normalized values implicitly assumes uniform difficulty across the scale.

While AOCC inherits these same issues, it has one more fundamental limitation: it collapses anytime performance into a single scalar.
While this makes it easy to use as a tuning objective, since it imposes a total order, it obscures the tradeoffs over time.
An algorithm that converges quickly but stagnates becomes indistinguishable from one that starts slowly but continues improving, yet these represent different behaviors that users may value differently.

Additionally, conclusions about algorithm performance are inherently uncertain when experimental budgets are limited~\cite{vermetten2022a}.
Running each algorithm on a handful of instances may suggest one is better, but how confident should we be?
When have we gathered enough evidence to act?
Traditional methods provide point estimates or p-values, neither of which directly answers these questions.
A Bayesian approach quantifies uncertainty as probability: we can ask ``what is the probability that A is better than B under some preference over budgets, given what we have observed?'' and receive a calibrated answer that updates as evidence accumulates.

These points motivate a framework for anytime performance evaluation that is 
(1) scale-free, 
(2) preserves tradeoffs over time, 
(3) quantifies its uncertainty, and 
(4) is fully automatable.
This paper proposes a Bayesian framework for anytime algorithm comparison that satisfies these criteria.
The central idea is to treat each budget or point in time as a separate objective, yielding a Pareto set of non-dominated algorithms over time.
Using rankings rather than objective values enables coherent aggregation across arbitrary instances without normalization.
We propose a racing procedure, called \textsc{PolarBear}, to efficiently identify this Pareto set with calibrated uncertainty by eliminating dominated algorithms early and terminating when pairwise relationships are resolved.
The framework requires no prior knowledge about objective bounds or optima, preserves anytime tradeoffs, and provides fully automated evaluation.
The output is a Pareto set over potential budgets together with a posterior that support downstream selection under arbitrary time preferences and risk profiles.

\section{Scale-free Anytime Performance}
\label{sec:anytime-performance}

We define an anytime algorithm $A$ as a randomized procedure with access to an objective function $f: \mathcal{X} \to \mathcal{Y}$.
For random seeds $\omega \in \Omega$, execution of $A(f, \omega)$ produces a trajectory $\mathbf{x}_A: \mathcal{T} \to \mathcal{X}$ mapping each timepoint $t \in \mathcal{T} \subset \mathbb{R}_{\geq 0}$ to a solution $x \in \mathcal{X}$.
The performance trajectory is $\mathbf{g}_A: \mathcal{T} \to \mathcal{Y}$, $t \mapsto f(\mathbf{x}(t))$.
Assuming minimization and noiseless $f$ as well as a total order on $\mathcal{Y}$, the best-so-far trajectory $\mathbf{g}_A^*(t) = \min_{s \leq t} \mathbf{g}_A(s)$ can be computed externally by tracking all evaluated solutions.
An algorithm thus induces a distribution over trajectories $\mathbf{g}^*$.
Note that algorithms with budget-dependent components (e.g., adaptation based on the proportion of budget spent) define a family of anytime algorithms, one per configured budget $b$.
Each member $A(b)$ has a well-defined trajectory and can be included as a separate algorithm.

Rather than modeling best-so-far trajectories $\mathbf{g}^*(t)$ directly, we observe only the induced rankings: given algorithms $\mathbf{A} = \{A_1, \ldots, A_n\}$ evaluated on the same instance at the same timepoints, we record the order of $\mathbf{g}^*_{A_i}(t)$ at each time $t$.
This is a deliberate reduction, motivated by the following observations.

Consider what we actually know when comparing algorithms.
For example, we know that algorithm $A$ found a solution with objective value 47.3 and algorithm $B$ found one with value 52.1.
We therefore know $A < B$, assuming minimization.
But what does the difference of 4.8 mean?
Is it meaningful?
Is it harder to achieve than a difference of 4.8 elsewhere in the objective space?
Answering these questions requires knowledge we typically do not have: the structure of the fitness landscape, the distribution of objective values, the relationship between objective differences and search difficulty.
The ranking $A < B$ is a fact; the meaning of 4.8 is an interpretation that requires assumptions we cannot generally justify.

The standard solution to comparing across instances is min-max normalization: transform objective values to $[0, 1]$ via $\tilde{y} = (y - y_{\min}) / (y_{\max} - y_{\min})$, where $y_{\min}$ and $y_{\max}$ are instance-specific bounds.
This makes values numerically comparable, but the comparability is superficial.
Min-max normalization implicitly assumes that difficulty is uniform across the objective range.
Under this assumption, improving from normalized value 0.9 to 0.8 represents the same algorithmic achievement as improving from 0.2 to 0.1.
In practice, this is rarely true.
On many landscapes, early progress is easy while the final approach to the optimum is hard; on others, escaping a local optimum is the key difficulty.
The former intuition is usually encoded by applying a logarithmic transformation before normalization, so that values closer to $y_{\min}$ receive increasingly greater weight, i.e., progress near the optimum is harder and therefore more informative.
The intuition may be reasonable, but the transformation is only coherent if $y_{\min}$ is the true global optimum.
When the global optimum is unknown and $y_{\min}$ is derived from observed data, the log transformation amplifies the instability: a new algorithm that finds a better solution shifts $y_{\min}$, which compresses the entire previous range and drastically changes the relative importance of all prior observations.

We argue that the problem with normalization is not merely practical but fundamental.
Any such procedure either requires information about the problem instances that is unavailable in black-box settings, or extracting this information is at least as hard as solving the instance itself (in the case of the global optimum, exactly as hard).
Min-max normalization is therefore not an approximation to a correct procedure; it is an unprincipled default that happens to produce numbers in a standard range.
The log transformation adds another layer of assumptions while amplifying sensitivity to data-derived bounds.

The optimization community has already recognized the problem of interpreting the scale of objective values through the design of modern algorithms.
Many state-of-the-art metaheuristics, including CMA-ES and differential evolution, deliberately discard magnitude information during search. 
The motivation is invariance: algorithm behavior should not depend on monotonic transformations of the objective function~\cite{hansen2023}.
This makes algorithms robust to poorly scaled objectives, objectives with outliers, and objectives where the relationship between value and difficulty is unknown.
If invariance is the right principle for designing search algorithms, it should also be the right principle for evaluating them.
An algorithm that is invariant to monotonic transformations of the objective should be evaluated by a method that is equally invariant.

Rankings require none of these assumptions. 
The statement that $A$ beats $B$ on a given instance is meaningful regardless of objective scale, landscape structure, or which other algorithms were included in the comparison.
One might object that rankings discard potentially useful information.
This is true, but only relevant if magnitude carries reliable information.
When objective scale is meaningful, known, and comparable across instances, cardinal information can improve statistical efficiency.
Such conditions sometimes hold in carefully constructed synthetic benchmarks with known optima and well-understood structure.
But we argue that this is the exception rather than the rule (even the well-studied BBOB suite is notably not one such exception, i.e., ``difficulty versus function value is not uniform''~\cite{hansen2009}).
The default situation in black-box optimization is that we know little about the objective function beyond what we observe during search.
Rankings extract exactly the information we can trust, namely relative performance, without requiring assumptions about what objective differences mean.
They are therefore not a compromise forced by missing information but the maximally robust choice given typical conditions.

Given that we observe rankings, we now ask: what is the appropriate model?
Since observed trajectories are independent measurements per algorithm, our modeling process should reflect this property.
Specifically, this means respecting the independence of irrelevant alternatives (IIA), i.e., the inclusion of an algorithm $C$ should not change our belief about the relative performance of $A$ versus $B$.
For ordinal data, this has the implication that pairwise comparisons are the primitive unit; a full ranking over $n$ items is then simply a consistent encoding of all $\binom{n}{2}$ pairwise comparisons.
For a single pair $\{A, B\}$, the only meaningful quantity is how often $A$ beats $B$, or more generally, the win probability $P(A > B)$.
Any measure that treats ranks as cardinal (e.g., Borda scores or expected rank) assigns meaning to rank differences, which depend on the number of items present and therefore violate IIA.
Notably, win probabilities are a fair attribution scheme by construction, i.e., are additive by mutual exclusivity of winning and satisfy Shapley properties such as efficiency, symmetry, and marginality.
Win probabilities are thus the canonical measure for ordinal data under IIA.

Crucially, rankings aggregate coherently across arbitrary instance distributions.
Let $\mathcal{I}$ denote a distribution over problem instances.
For each instance $I \sim \mathcal{I}$, a ranking over algorithms $\mathbf{A}$ is observed.
Pooling these rankings estimates the marginal win probability:
\begin{equation}
  \theta_{A} = \mathbb{E}_{I \sim \mathcal{I}}[P(A \text{ ranks first within } \mathbf{A} \mid I)] \,.
\end{equation}
This is true marginalization: the quantity $\theta_A$ has a direct interpretation as the probability that algorithm $A$ is best in $\mathbf{A}$ on a random draw from $\mathcal{I}$.
In contrast, averaging normalized objective values across instances yields a number with no clear probabilistic or decision-theoretic meaning.

The preceding argument applies at each timepoint.
At time $t$, algorithm $A$ is preferred to $B$ if it wins more often:
\begin{equation}
  A \overset{t}{\succ} B \iff \theta_A(t) > \theta_B(t) \,,
\end{equation}
where $\theta_A(t)$ denotes the marginal win probability of algorithm $A$ at time $t$ across the instance distribution $\mathcal{I}$.
Different algorithms may be preferred with different available budgets.
An algorithm that converges quickly may dominate early, while a more exploratory algorithm may dominate at longer runtimes.
Both represent valid strategies, and which one is preferable depends on the user's time preference.

We therefore define anytime dominance as dominance across all timepoints:
\begin{equation}
  A \overset{\mathcal{T}}{\succ} B \iff A \overset{t}{\succ} B \: \forall t \in \mathcal{T} \,.
\end{equation}

An algorithm that is anytime-dominated is inferior regardless of how the user values time: no matter which timepoints are of interest, there exists another algorithm strictly better everywhere.
Note that we write strict inequality throughout for simplicity, since equal win probabilities occur with probability zero for continuous density over $\theta$ (cf. \cref{subsec:bayesian}), i.e., happen \emph{almost never}.

The \emph{anytime Pareto set} consists of all algorithms not anytime-dominated by any other:
\begin{equation}
  \mathcal{P} = \{A : \nexists B \in \mathbf{A} \text{ s.t. } B \overset{\mathcal{T}}{\succ} A\} \,.
\end{equation}
This set contains exactly the algorithms that are optimal under some preference functional over time.
Collapsing to a single ``best'' algorithm requires specifying such a preference; the Pareto set is the minimal output that supports any such choice.

While we focus on single-objective black-box optimization in this paper, the framework applies wherever algorithms can be ranked.
Reinforcement learning (RL) agents can be compared by expected episode return; multi-objective optimizers and multi-objective RL methods by hypervolume or expected utility under a weight distribution.
These latter metrics induce total orders over otherwise incomparable outcomes, which reintroduce domain assumptions: hypervolume requires a reference point, expected utility requires a weight prior, trading the full scale-free property for applicability to partial-order settings.
We leave native treatment of partial orders to future work.

\section{Bayesian Plackett-Luce Models for Anytime Performance Analysis}
\label{sec:plackett-luce}

The previous section has established that win probabilities are the canonical measure for ordinal comparison under IIA.
We now require a model for inference.
The Bradley-Terry model~\cite{bradley1952} directly parameterizes pairwise win probabilities: given ratings $\theta_A(t), \theta_B(t) > 0$, the probability that $A$ beats $B$ at time $t$ is
\begin{equation}
  P(A \overset{t}{\succ} B) = \frac{\theta_A(t)}{\theta_A(t) + \theta_B(t)} \,.
\end{equation}
Bradley-Terry handles pairwise comparisons, but we observe full rankings: running $n$ algorithms on an instance yields an ordering of all $n$ for each $t$, not just a single pair.
A full ranking encodes $\binom{n}{2}$ pairwise outcomes that are mutually consistent. 
These cannot simply be treated as independent Bradley-Terry observations, since they derive from a single run.
Obtaining truly independent pairwise comparisons under Bradley-Terry would require $2\binom{n}{2} = n(n-1)$ separate runs, each comparing only one pair.
This makes Bradley-Terry statistically inefficient for our setting.

The Plackett-Luce (PL) model~\cite{plackett1975} extends Bradley-Terry to full rankings while preserving the pairwise win probability structure.
Consider $n$ algorithms with ratings $\theta(t) = (\theta_1(t), \ldots, \theta_n(t))$, constrained to the probability simplex: $\theta_i(t) \geq 0$ and $\sum_i \theta_i(t) = 1 \, \forall t$.
The probability of observing ranking $\pi = (\pi_1, \pi_2, \ldots, \pi_n)$ at time $t$, where $\pi_k$ denotes the algorithm in position $k$, follows a sequential choice process:
\begin{equation}
  P(\pi \mid \theta(t)) = \prod_{k=1}^{n} \frac{\theta_{\pi_k}(t)}{\sum_{j=k}^{n} \theta_{\pi_j}(t)} \,.
\end{equation}
First place is drawn with probability proportional to ratings; second place is drawn from the remaining alternatives proportionally; and so on.
This process extends naturally to partial rankings where only the top $m < n$ positions are observed.
We denote the set of all observed rankings as $\mathcal{R}$, with individual rankings $r \in \mathcal{R}$.
Each ranking $r$ is associated with a timepoint $t_r$ and a weight $w_r$ (equal to $1$ by default).
The full likelihood is:
\begin{equation}
  P(\mathcal{R} \mid \symbf{\theta}) = \prod_{r \in \mathcal{R}} P(r \mid \theta(t_r))^{w_r} \,.
\end{equation}

The key property is that PL preserves the Bradley-Terry pairwise structure: marginalizing over all other algorithms recovers $P(A \overset{t}{\succ} B) = \theta_A(t) / (\theta_A(t) + \theta_B(t))$.
This is IIA, directly following from Luce's choice axiom~\cite{luce1979}: the pairwise win probability depends only on $A$ and $B$, not on which other algorithms are present, and adding or removing algorithms does not change inference about any pair.

The PL model admits an equivalent characterization through latent utilities.
Each algorithm $i$ has a true quality $\mu_i(t)$, and on each comparison a noisy utility $U_i(t) = \mu_i(t) + \epsilon_i$ is realized, where $\epsilon_i$ are independent draws from a standard Gumbel distribution.
Algorithms are ranked by their realized utilities.
This Thurstonian interpretation yields the PL model with $\theta_i(t) = \exp(\mu_i(t)) / \sum_j \exp(\mu_j(t))$.
The interpretation is intuitive: algorithms have fixed latent qualities, but observed performance is subject to independent random variation.

When multiple algorithms can be executed in parallel, the relevant question shifts from ``which algorithm is best?'' to ``which combination is best?''.
The PL model provides a natural answer through its Thurstonian interpretation, since the maximum of independent Gumbel random variables is itself Gumbel-distributed, with location parameter equal to the log-sum-exp of the individual locations.
In terms of ratings, this means the probability that the best algorithm in a portfolio $S$ at time $t$ beats the best algorithm in portfolio $S'$ at time $t$ is:
\begin{equation}
  P(\max_{i \in S} U_i(t) > \max_{j \in S'} U_j(t)) = \frac{\sum_{i \in S} \theta_i(t)}{\sum_{i \in S} \theta_i(t) + \sum_{j \in S'} \theta_j(t)} \,.
\end{equation}
The portfolio rating $\theta_{S}(t) = \sum_{i \in S} \theta_i(t)$ is simply the sum of individual ratings, interpreted as the probability that the portfolio contains the overall winner.
This aggregation is coherent when quality is max-aggregated: running algorithms in parallel yields the maximum of their individual outcomes.
It holds whenever the user takes the single best solution found, i.e., the solution with the lowest objective value or highest expected return.
For hypervolume and related indicators in multi-objective optimization, the property does not hold in general, since the hypervolume of combined Pareto fronts can exceed the maximum of the individual hypervolumes when fronts cover complementary regions of the objective space:
\begin{equation}
  \text{HV}(\bigcup_{i \in S} P_i) \geq \max_i \text{HV}(P_i) \,.
\end{equation}
The sum formula then underestimates the true portfolio value.
However, this can be addressed without additional experiments by computing the joint hypervolume from existing Pareto fronts and introducing the portfolio as a virtual algorithm in the ranking model.
IIA ensures that existing inference remains valid when algorithms are added.
Note that portfolio comparisons are generally only meaningful under fixed parallel budget.
Comparing a size-$k$ portfolio against a single algorithm conflates algorithmic quality with parallelization advantage; fair comparison requires all candidates to use the same number of parallel slots.

The PL likelihood assumes a strict total order, meaning that no two algorithms achieve identical performance.
For continuous objective values, exact ties happen mathematically \emph{almost never}.
However, for example when multiple algorithms converge to the same optimum up to machine precision (common on easy problems with sufficient budget), or when objective values are discretized, tied positions arise.
We handle these ties by expansion, following \cite{rojas-delgado2022}: a $k$-way tie is replaced by all $k!$ possible orderings of the tied algorithms, each weighted with $w_r = 1/k!$.
This preserves the expected contribution to the likelihood.
When $k$ is large, enumerating all $k!$ orderings is impractical.
We then subsample a tractable amount uniformly from the set of permutations, which introduces additional variance but no bias.
However, if the data contains many ties, the expanded rankings may still significantly slow down inference.
An alternative approach introduces additional parameters encoding the probability of $k$-way ties~\cite{turner2020}.
While this handles ties natively within the likelihood, it loses algorithm-wise IIA: the probability that $A$ beats $B$ depends on whether $C$ is involved in a tie with either.
For our use case, this tradeoff is unfavorable.

\subsection{Why Bayesian?}
\label{subsec:bayesian}

In practice, we observe a finite number of rankings $\mathcal{R}$ on a set of instances $\mathbf{I} = \{I_1, I_2, \ldots, I_P\}$ sampled from an instance distribution $\mathcal{I}$ at timepoints $ \mathbf{T} = (t_1, t_2, \ldots, t_T)$, and wish to infer the latent ratings $\symbf{\theta} = (\theta(t_1), \theta(t_2), \ldots, \theta(t_T))$.
A point estimate $\hat{\symbf{\theta}}$, such as the maximum likelihood estimate, provides no measure of uncertainty.
With limited data, two algorithms may have similar empirical win rates, but we cannot know from $\hat{\symbf{\theta}}$ alone whether this reflects true equivalence or insufficient evidence.

The Bayesian approach treats $\symbf{\theta}$ as a random variable and computes the posterior distribution given observed rankings $\mathcal{R}$:
\begin{equation}
  P(\symbf{\theta} \mid \mathcal{R}) \propto P(\symbf{\theta}) \prod_{r \in \mathcal{R}} P(r \mid \symbf{\theta})^{w_r} \,,
\end{equation}
where $P(\symbf{\theta})$ is a prior distribution encoding beliefs before observing data.
The posterior directly answers the questions we care about: What is the probability that algorithm $A$ is better than $B$ at time $t$?
How confident are we in this probability?
This uncertainty quantification is essential for sequential decision-making, for example about anytime dominance.
The Bayesian posterior provides the natural tool: we act when $P(\theta_B(t) > \theta_A(t) \: \forall t \mid \mathcal{R})$ exceeds a decision threshold, i.e., when our belief is sufficiently concentrated.
This is the probability we actually care about, namely how confident we are that $B$ dominates $A$ given what we have observed.

Frequentist approaches to the same problem yield p-values: the probability of observing data at least as extreme as what was observed, assuming a null hypothesis.
This is not the probability that the null is true, and the distinction matters for decision-making.
A p-value of 0.01 does not mean there is a 1\% chance the null is true; a posterior probability of 0.99 for dominance does mean, given the model and data, we believe dominance holds with 99\% probability.

A common concern with multiple comparisons in frequentist inference is the accumulation of Type I errors: testing many pairs at significance level $\alpha$ leads to an expected number of false positives proportional to the number of tests.
Corrections such as Bonferroni or Holm adjust significance thresholds to control family-wise error rates.
In the Bayesian framework, this concern does not arise in the same form.
Each posterior probability is a coherent belief conditional on model and data.
Computing many posteriors does not degrade any individual one; there is no ``alpha to spend''.
If we wish to know the probability of at least one erroneous elimination, we can compute it directly from the joint posterior rather than applying a correction to thresholds.
The guarantees are conditional on the model being well-specified, specifically that the PL model adequately describes the data-generating process.
However, given that we observe only rankings and require IIA, the PL model is not one choice among many but the canonical model for this data structure.

\subsection{Priors and Models}
The prior $P(\symbf{\theta})$ determines how information is shared across timepoints and encodes assumptions about the structure of algorithm performance over time.
We present a hierarchy of models ranging from fully independent to smoothly correlated, an overview of which is given in \cref{tab:models}.

Two parameterizations of $\symbf{\theta}$ are natural.
The first works directly on the simplex: $\theta(t) \in \Delta^{n-1}$ with Dirichlet priors.
The second works in log-space: define unconstrained utilities $\mu(t) \in \mathbb{R}^n$ with $\theta(t) = \text{softmax}(\mu(t))$, and place priors on $\mu$.
The log-space parameterization is more flexible for temporal modeling, since stochastic processes and related constructions usually apply directly to unconstrained real values.
Since the PL likelihood depends only on differences $\mu_i(t) - \mu_j(t)$ in log-space, not absolute values, adding a constant to all utilities leaves the likelihood unchanged.
For identifiability of unconstrained utilities, we impose a sum-to-zero constraint: $\sum_i \mu_i(t) = 0$ for all $t$.
This is implemented via the Helmert matrix $Q \in \mathbb{R}^{n \times (n-1)}$, which maps $n-1$ unconstrained latent functions $\eta_q(t)$ to $n$ utility functions satisfying the constraint~\cite{lancaster1965}.
We place identical priors on $\eta_q$; the Helmert transformation ensures identifiability without affecting the likelihood.
Note that the individual latent functions $\eta_q$ do not map to specific algorithms; the Helmert transformation $\mu(t) = Q \eta(t)$ combines them into algorithm-specific utilities.
More details on the Helmert matrix are given in \cref{app:helmert}.

\begin{table}[t]
\centering
\caption{Temporal Plackett-Luce model variants for anytime performance analysis.}
\label{tab:models}
\begin{tabular}{llccc}
\toprule
Model & Prior structure & Space & Temporal & Complexity in $T$ \\
\midrule
Independent Dirichlet & $\theta(t) \sim \text{Dir}(\alpha)$ & Simplex & Independent & $O(T)$ \\
Hierarchical Dirichlet & $\theta(t) \sim \text{Dir}(\kappa/n)$, shared $\kappa$ & Simplex & Independent & $O(T)$ \\
Gaussian Process (GP) & $\eta_q \sim \mathcal{GP}(0, k)$ & Log & Smooth & $O(T^3)$/$O(T^2 M)$ \\
Hilbert Space GP & $\eta_q \approx \sum_j \beta_j \phi_j$ & Log & Smooth & $O(Tm)$ \\
Random Walk & $\eta_q(t_{i+1}) \sim \mathcal{N}(\eta_q(t_i), \sigma^2 \Delta t)$ & Log & Markov & $O(T)$ \\
B-spline & $\eta_q = \sum_j \beta_j B_j$ & Log & Smooth & $O(TK)$ \\
\bottomrule
\end{tabular}
\vspace{1em}
\small
\textit{Notes:} $T$ = number of timepoints, $M$ = inducing points, $m$ = HSGP basis functions, $K$ = spline knots.
\end{table}

\subsubsection{Independent Models}
\label{subsubsec:independent-models}

Our simplest prior treats timepoints as independent and exchangeable.
Under the Dirichlet model, $\theta(t) \sim \text{Dirichlet}(\alpha, \ldots, \alpha)$ independently for each $t$, where $\alpha > 0$ controls concentration.
Small $\alpha$ favors concentration on a few algorithms; large $\alpha$ favors uniform ratings.
A hierarchical variant shares concentration across timepoints: $\kappa \sim \text{Exponential}(\lambda)$ and $\theta(t) \sim \text{Dirichlet}(\kappa/n, \ldots, \kappa/n)$.
This pools information about the overall level of differentiation among algorithms while maintaining conditional independence across time.
These independent models are appropriate when timepoints are few or when no temporal structure is expected.
A practical advantage of the independent Dirichlet model is that inference is embarrassingly parallel: each timepoint can be processed separately, also making it attractive for very large $T$.

\subsubsection{Temporal Models}
\label{subsubsec:temporal-models}

When performance evolves smoothly over time, temporal models improve statistical efficiency by sharing information across neighboring timepoints.
We consider three constructions, all specified on the unconstrained latent functions $\eta_q(t)$.

\paragraph{Gaussian Process (GP)}

We model $\eta$ via independent GPs:
\begin{equation}
  \eta_q(\cdot) \sim \mathcal{GP}(0, k(\cdot, \cdot)) \,,
\end{equation}
where $k(t, t') = \sigma^2 K(t, t'; \ell)$ is a covariance kernel controlling smoothness, with amplitude $\sigma$ and lengthscale $\ell$.
The lengthscale $\ell$ controls the rate of temporal variation: small $\ell$ allows rapid fluctuations, large $\ell$ enforces slow changes.
Priors on $\sigma$ and $\ell$ are weakly informative, using a log-normal and inverse gamma prior, respectively.

We use the Matérn class of kernels~\cite{rasmussen2005}:
\begin{equation}
C_\nu(t, t'; \ell) = \frac{2^{1-\nu}}{\Gamma(\nu)} \left(\sqrt{2\nu} \frac{|t-t'|}{\ell}\right)^\nu K_\nu\left(\sqrt{2\nu} \frac{|t-t'|}{\ell}\right) \,,
\end{equation}
where $K_\nu$ is the modified Bessel function of the second kind, and $\nu$ is a positive parameter that controls smoothness.
Common choices are:
\begin{itemize}
  \item Matérn-3/2 ($\nu = 3/2$): once differentiable, allowing moderately rough functions.
  \item Matérn-5/2 ($\nu = 5/2$): twice differentiable, smoother than Matérn-3/2.
  \item Squared exponential ($\nu \to \infty$): infinitely differentiable, very smooth.
\end{itemize}
The kernel choice encodes prior beliefs about temporal smoothness: Matérn-3/2 allows rougher trajectories with less regular curvature, while Matérn-5/2 enforces smoother transitions with continuous curvature.
The squared exponential kernel assumes that performance evolves in a highly regular, infinitely smooth manner.
We recommend Matérn-3/2 as the default choice, with Matérn-5/2 when smoother evolution is expected.
The squared exponential kernel should be used with caution; its assumption of infinite differentiability is rarely justified in practice and should only be applied with compelling justification.

Inference of GPs scales cubically in the number of timepoints, which becomes prohibitive for large $T$.
We use two approximation strategies to address this:
\begin{enumerate}
  \item 
  The sparse GP with inducing points~\cite{quinonero-candela2005} uses a set of $M < T$ inducing locations $\{\tilde{t}_1, \ldots, \tilde{t}_M\}$ to approximate the full GP.
  The function values at the inducing points serve as a low-rank bottleneck, reducing complexity to $O(T M^2)$.
  We use the Deterministic Training Conditional (DTC) approximation and place the inducing points uniformly across data quantiles.
  This approach works well when a moderate number of inducing points capture the essential variability.
  \item 
  The Hilbert Space Gaussian Process (HSGP)~\cite{riutort-mayol2022} approximation exploits the spectral representation of stationary covariance functions.
  For a stationary kernel on a bounded domain $[0, L]$, the covariance function admits an eigendecomposition:
  \begin{equation}
    k(t, t') = \sum_{j=1}^{\infty} s_k (\sqrt{\lambda_j}) \phi_j(t) \phi_j(t') \,,
  \end{equation}
  where $\phi_j$ and $\lambda_j$ are eigenfunctions and eigenvalues of the Laplacian operator with Dirichlet 
  boundary conditions, and $s_k$ is the spectral density of the kernel $k$.
  The HSGP truncates this expansion after $m$ terms:
  \begin{equation}
    \eta_q(t) \approx \sum_{j=1}^{m} \beta_{q,j} (s_k (\sqrt{\lambda_j}))^{\frac{1}{2}} \phi_j(t) \,,
  \end{equation}
  where $\beta_{q,j} \sim \mathcal{N}(0, 1)$ are independent coefficients.
  The boundary $L$ is chosen as $L = c \cdot t_T$ where $c > 1$ is an extension factor that reduces boundary effects.
  The number of basis functions $m$ and boundary extension $c$ are chosen using the heuristics given by \citet{riutort-mayol2022}, which select values based on the kernel type, domain size, and expected lengthscale range to achieve a specified approximation accuracy.
  Larger $m$ improves the approximation but increases computational cost.
  The approximation reduces inference complexity from 
  $O(T^3)$ to $O(T m + m)$, making it practical for hundreds or thousands of timepoints while preserving the GP's covariance structure.
\end{enumerate}

\paragraph{Random Walk}

An alternative for discrete timepoints is the Gaussian random walk:
\begin{equation}
  \eta_q(t_{i+1}) \mid \eta_q(t_i) \sim \mathcal{N}(\eta_q(t_i), \sigma^2 \Delta t_i)
\end{equation}
where $\Delta t_i = t_{i+1} - t_i$.
This is a Markov model: the future depends on the past only through the present.
The innovation variance $\sigma^2$ controls smoothness.
Random walks are computationally efficient and appropriate when the temporal grid is fixed and dynamics are local.

\paragraph{B-spline}

The third option represents each latent function as a linear combination of B-spline basis functions:
\begin{equation}
  \eta_q(t) = \sum_{j=1}^{K} \beta_{q,j} B_j(t) \,,
\end{equation}
where $B_j$ are B-spline basis functions of degree $d$ (typically $d=3$ for cubic splines) and $\beta_{q,j}$ are coefficients with Gaussian priors.
By default, we choose $K=T/2$.
The number and placement of the $K + d + 1$ knots control flexibility.
B-splines are computationally efficient and allow localized control over smoothness, but require choosing the knot sequence.
We place knots at quantiles of the observed timepoints, which adapts to the data density, similar to the inducing points.

\subsubsection{Model Choice}

The choice between independent and temporal models reflects assumptions about the data-generating process.
Independent models assume no relationship between timepoints; temporal models assume smooth evolution.
If temporal structure exists but is not modeled, posteriors remain valid but wider than necessary.
If smooth evolution is assumed but violated, temporal models may be overconfident.
When in doubt, independent models trade efficiency for robustness to misspecification.
However, even under independent priors, posteriors exhibit smoothness across time.
This arises from the data: rankings at adjacent timepoints derive from the same underlying trajectories, and best-so-far values change only when an algorithm finds an improvement.
Temporal models encode this smoothness explicitly in the prior, while independent models let it emerge from the data.
Beyond assumptions, the models differ in what queries they support.
Independent models yield marginal posteriors at each timepoint; temporal models yield a joint posterior across time.
Only the latter supports queries about simultaneous events, such as the probability that one algorithm is better at all timepoints.
When such joint statements are required, temporal models are necessary regardless of efficiency considerations.

\subsection{Computational Inference}
Given observed rankings $\mathcal{R}$ and a prior $P(\symbf{\theta})$, we require samples of the posterior $P(\symbf{\theta} \mid \mathcal{R})$.
We consider several inference strategies, trading accuracy against computational cost:
\begin{itemize}
  \item Markov Chain Monte Carlo (MCMC) methods provide asymptotically exact posterior samples and are the gold standard for accuracy.
  Hamiltonian Monte Carlo (HMC)~\cite{neal2011} with the No-U-Turn Sampler (NUTS)~\cite{homan2014} is state of the art for sampling complex posteriors of continuous parameters.
  \item Automatic Differentiation Variational Inference (ADVI)~\cite{kucukelbir2017} approximates the posterior with a simpler family (typically Gaussian) by minimizing the Kullback-Leibler divergence.
  We differentiate between meanfield and fullrank ADVI, using a fully factorized Gaussian and Gaussian with a full-rank covariance matrix for the approximation, respectively.
  \item The Laplace approximation constructs a Gaussian centered at the maximum a posteriori (MAP) estimate, with covariance given by the inverse Hessian of the negative log-posterior~\cite{tierney1986}.
  This is fast (requiring only optimization) and accurate when the posterior is approximately Gaussian.
  \item Pathfinder~\cite{zhang2022a} is a recent method that traces multiple optimization paths to the MAP, constructs sequences of Gaussian approximations along the way, and merges them using importance resampling.
  It combines the speed of optimization-based methods with improved robustness over Laplace.
\end{itemize}

We observe rankings at a discrete grid of timepoints $\mathbf{T} = (t_1, \ldots, t_T)$.
The endpoints $t_1$ and $t_T$ define the interval of interest; the intermediate points control the resolution at which performance is tracked.
We find that for most applications, a uniform or log-spaced grid of $T \in [10 \isep 100]$ timepoints suffices.
Finer grids increase computational cost, while coarser grids risk missing rapid transitions.
In principle, one could adaptively place timepoints where ranking order changes, preserving full information with minimal redundancy.
We have not found this necessary in practice.

We provide implementations\footnote{The code is provided to reviewers and will be released under an open-source license upon publication.} of all models in PyMC~\cite{pymc2023}, with both independent models and the full GP also being available in Stan~\cite{stan2026}.

\begin{figure}
     \centering
     \begin{subfigure}[b]{0.49\textwidth}
         \centering
         \includegraphics[width=\textwidth]{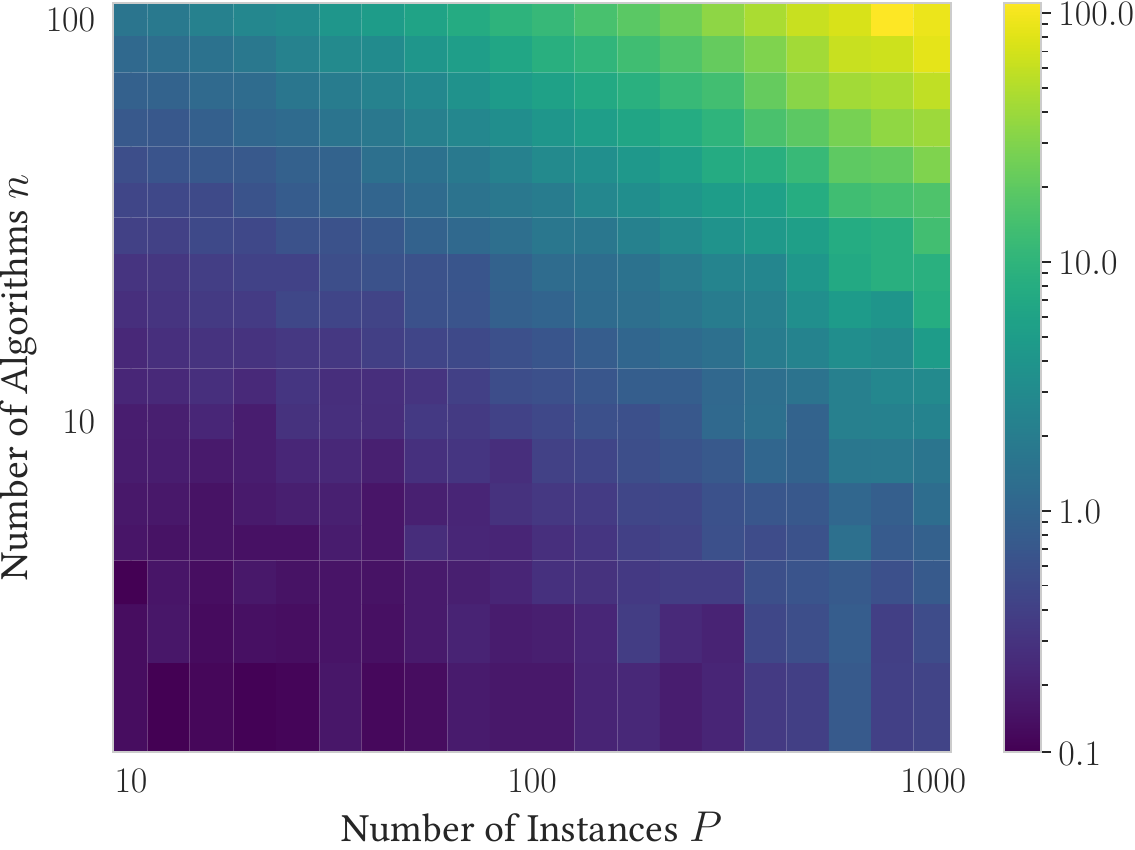}
         \caption{Pathfinder approximation of the independent Dirichlet model for a single timepoint, on a single CPU core. Note the log-scale on all axes.}
         \label{fig:pathfinder-runtime}
     \end{subfigure}
     \hfill
     \begin{subfigure}[b]{0.49\textwidth}
         \centering
         \includegraphics[width=\textwidth]{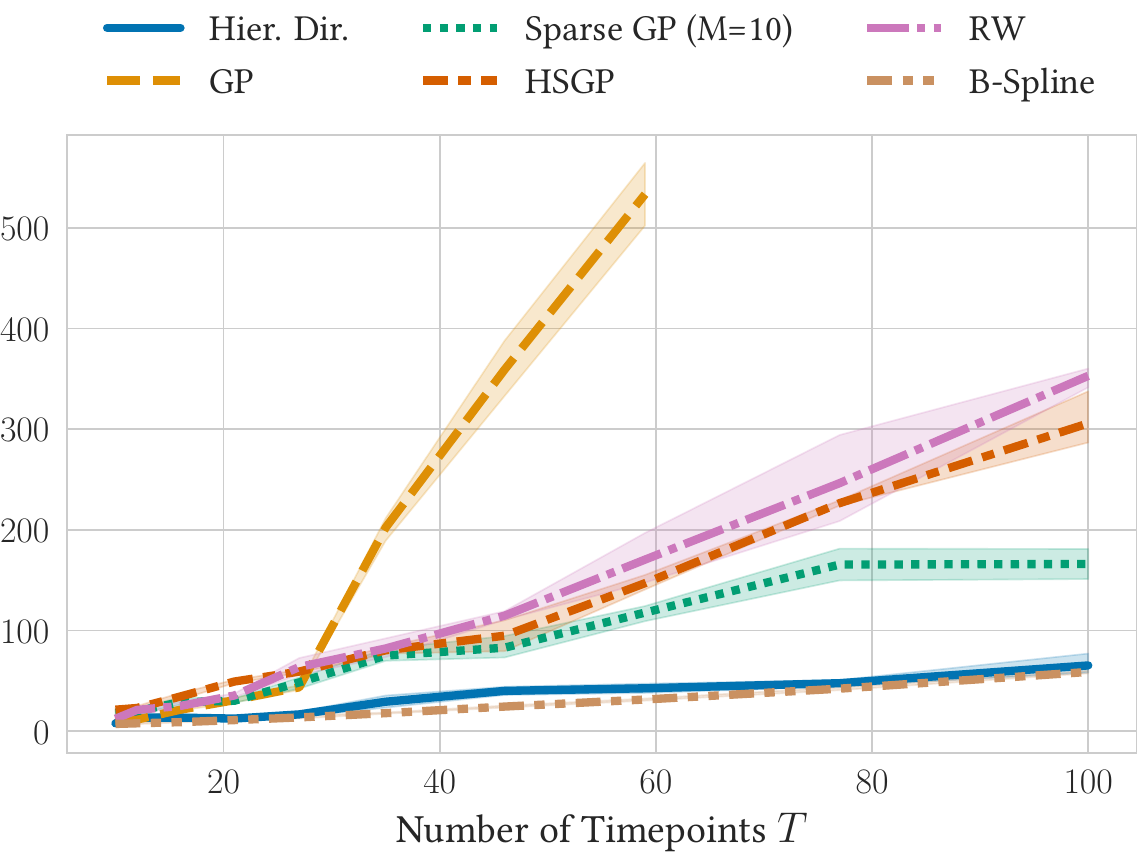}
         \caption{Temporal PL models for $n=5$ and $P=30$, using MCMC with 4 chains, 4 cores, 1000 warmup samples and 2000 posterior samples per chain.}
         \label{fig:temporal-runtime}
     \end{subfigure}
     \caption{Mean execution time in seconds (5 repetitions) for selected models and approximation methods.}
\end{figure}

Overall computational cost depends on three factors: number of unique timepoints $T$, number of algorithms $n$, and number of sampled instances $P$, giving $|\mathcal{R}| = P \cdot T$.
For MCMC, cost additionally depends on posterior geometry: correlated, curved, or multimodal posteriors require more conservative exploration.
In HMC/NUTS this typically means smaller step sizes and longer trajectories, so each posterior sample uses more leapfrog steps (more gradient evaluations) and mixes more slowly.
PyMC supports multiple execution backends (JAX, Numba) and sampler backends (NumPyro, BlackJAX, nutpie).
In our experience, the PyMC implementation with JAX backend and NumPyro sampler provides the fastest execution time of all MCMC samplers (including Stan's sampler).
We emphasize that this may, however, change on different hardware or package versions.
We also find that PL posteriors of the independent Dirichlet model are typically well-behaved, near-Gaussian with moderate correlation, making Pathfinder and Laplace reliable approximations.
Inference for this model is embarrassingly parallel across timepoints.
For typical benchmarking scenarios ($n \leq 30, P \leq 100$), inference remains under 10 seconds per timepoint using Stan's Pathfinder (see \cref{fig:pathfinder-runtime}), being about one order of magnitude faster than MCMC at indistinguishable quality.
PyMC's Pathfinder generally provides speedups over MCMC for most temporal models with only slightly to moderately degraded posteriors.
However, we observe that Pathfinder fails to produce reliable approximations for the HSGP and random walk models, likely due to the strong posterior correlations induced by their parameterizations. 
For these models, we recommend MCMC.
ADVI underperforms in our (arguably limited) experiments, and contrary to conventional expectations, tends to overestimate uncertainty compared to MCMC even when clearly converged (verified by monitoring the evidence lower bound (ELBO)).
This happens for both the meanfield and fullrank variants.

\Cref{fig:temporal-runtime} shows how execution time increases with the number of considered timepoints $T$ for a synthetic benchmarking setup with $n=5$ algorithms and $P=30$ instances, using MCMC for all models.
The full GP exhibits the expected cubic scaling, exceeding 500 seconds by $T = 60$ and becoming impractical for finer grids.
The sparse GP approximation with $M = 10$ inducing points offers meaningful improvement, reaching approximately 180 seconds at $T = 100$.
HSGP and the random walk show similar performance at around 300 seconds for $T = 100$.
The B-spline model is fastest among the temporal-aware options, staying below 100 seconds even at $T = 100$, together with the hierarchical Dirichlet.
These timings represent a single slice of the parameter space ($n = 5$, $P = 30$); relative performance may differ for other configurations and data.
The results also illustrate that asymptotic complexity (e.g., $O(T)$) can be misleading in practice, and that constant factors and implementation details matter substantially.
We therefore recommend treating these benchmarks as indicative rather than definitive, and suggest users should profile on their specific problem size before committing to a model.
Note the standard practice for MCMC diagnostics to ensure unbiased posteriors: $\hat{R} < 1.01$, ESS $> 400$ per parameter, no divergent transitions.
For approximate methods, we recommend spot-checking against MCMC on representative subproblems.
For general guidance on Bayesian workflows we refer to~\cite{gelman1995,vehtari2021}, as well as the documentation of Stan\footnote{\url{https://mc-stan.org/docs/}} and PyMC\footnote{\url{https://www.pymc.io}}.

\section{\textsc{PolarBear}: Pareto-Optimal Anytime Algorithms via Bayesian Racing}

At deployment, the practitioner's preference over budgets can be encoded as a functional $u: \mathbb{R}^T \to \mathbb{R}$ that aggregates performance across time.
Given algorithms $\mathbf{A} = \{A_1, \ldots, A_n\}$ with win probabilities $\theta(t)$, the value of algorithm $A$ under $u$ is:
\begin{equation}
  V_A^u = u(\theta_A(t_1), \ldots, \theta_A(t_T)) \,.
\end{equation}
We restrict attention to strictly monotonically increasing preferences:
$u$ is strictly monotonically increasing if $\theta(t) > \theta'(t)$ for all $t$ implies $u(\theta) > u(\theta')$.
This captures the intuition that higher win probability is always better: no reasonable preference is indifferent to, or even penalizes, an algorithm for being more likely to win everywhere.
When $u$ is known, the selection problem reduces to estimating $\symbf{\theta}$ with sufficient precision to identify $A^*_u = \arg\max_A V_A^u$.
When $u$ is unknown at benchmarking time, we ask: what is the minimal output that supports selection under \emph{any} strictly monotonically increasing $u$?

Let $\mathcal{U}$ denote the class of strictly monotonically increasing preference functionals.
This class includes weighted integrals $u(\theta) = \int w(t) \theta(t) \, dt, w(t) \geq 0 \forall t$, budget distributions $u(\theta) = \mathbb{E}_{t \sim \mathcal{B}}[\theta(t)]$ for any distribution $\mathcal{B}$, final-time preference $u(\theta) = \theta(t_T)$, and arbitrary strictly monotonic compositions thereof.

\begin{proposition}
Algorithm $A$ is optimal under some $u \in \mathcal{U}$ if and only if $A \in \mathcal{P}$.
\end{proposition}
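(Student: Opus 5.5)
The plan is to prove the two directions separately, working throughout with a fixed parameter $\symbf{\theta}$ on the finite grid $\mathbf{T}$. Here ``optimal under $u$'' means $A \in \arg\max_{A'} V^u_{A'}$, and, as in \cref{sec:anytime-performance}, we use the almost-sure absence of ties in $\theta$ to treat all relevant inequalities as strict.

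\emph{Only if (optimal $\Rightarrow$ non-dominated).} We argue by contraposition. Suppose $A \notin \mathcal{P}$. Then some $B \in \mathbf{A}$ satisfies $B \overset{\mathcal{T}}{\succ} A$, that is, $\theta_B(t) > \theta_A(t)$ for every $t \in \mathbf{T}$. For any $u \in \mathcal{U}$, strict monotonicity then gives $V^u_B > V^u_A$, so $A$ is not a maximizer under any $u$. This direction is immediate from the definitions.

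\emph{If (non-dominated $\Rightarrow$ optimal under some $u$).} Fix $A \in \mathcal{P}$. We need a single strictly monotone $u$ under which $A$ attains the maximum. The linear functionals $\sum_t w(t)\theta(t)$ are not enough, since they only recover the supported (convex-hull) part of the Pareto set. We therefore use a Chebyshev-type, or more simply a reference-point, construction. Define
\begin{equation}
  u_A(\theta) = \min_{t \in \mathbf{T}} \bigl(\theta(t) - \theta_A(t)\bigr) + \varepsilon \sum_{t \in \mathbf{T}} \theta(t) \,,
\end{equation}
with $\varepsilon > 0$ small. The first term is monotone, and it is strictly increasing whenever every coordinate increases. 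The $\varepsilon$-term strengthens this to strict monotonicity in the paper's sense, so $u_A \in \mathcal{U}$.

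We then evaluate $u_A$ on the algorithms. For $A$ itself, $u_A(\theta_A) = \varepsilon \sum_t \theta_A(t)$. For any $B \neq A$, non-dominance of $A$ gives a timepoint $t_B$ with $\theta_B(t_B) < \theta_A(t_B)$. Hence the min term is at most $-\delta_B$, where $\delta_B = \theta_A(t_B) - \theta_B(t_B) > 0$. Next set $\delta = \min_B \delta_B > 0$, which is possible because $\mathbf{A}$ is finite. Since all $\theta$ lie in $[0,1]$, the $\varepsilon$-terms differ by at most $\varepsilon T$, so choosing $\varepsilon < \delta / T$ yields $u_A(\theta_B) < u_A(\theta_A)$ for all $B$. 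Thus $A = \arg\max V^{u_A}$.

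The main obstacle is this second direction, specifically guaranteeing strict monotonicity while keeping $A$ optimal. The finiteness of $\mathbf{A}$ is what gives the uniform margin $\delta$. The step also uses the convention that ties occur almost never: with ties, a weakly non-dominated $A$ could tie at some $t$, which would give only weak optimality. A fallback is the simpler lexicographic-free choice $u(\theta) = \min_t(\theta(t) - \theta_A(t))$ composed with a strictly increasing perturbation, but the $\varepsilon$-augmented Chebyshev form above is the one I would write out.
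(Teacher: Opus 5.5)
Your proof is correct and uses the same reference-point (Chebyshev) functional $\min_t(\theta(t)-\theta_A(t))$ as the paper, and your forward direction is identical to the paper's. The one real difference is the $\varepsilon$-term. Under the paper's definition of $\mathcal{U}$ (strict increase whenever every coordinate strictly increases), the plain min with $\varepsilon = 0$ is already in $\mathcal{U}$, so the paper needs no margin $\delta$, no finiteness argument and no no-ties convention, and it gets weak optimality from $u^*(\theta_B) \le 0 = u^*(\theta_A)$. Your augmentation is therefore not what puts $u_A$ in $\mathcal{U}$ ``in the paper's sense''; instead it gives unique optimality and strict monotonicity in the stronger coordinatewise sense, and it is exactly this term that makes you depend on the absence of ties.
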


\begin{proof}
$(\Rightarrow)$ Suppose $A \notin \mathcal{P}$, i.e., $A$ is dominated by some $B$.
Then $\theta_B(t) > \theta_A(t)$ for all $t$.
For any strictly monotonically increasing $u$, this implies $u(\theta_B) > u(\theta_A)$, so $A$ is not optimal.

$(\Leftarrow)$ Suppose $A \in \mathcal{P}$, i.e., $A$ is non-dominated.
Define the preference functional:
\begin{equation}
  u^*(\theta) = \min_t \left(\theta(t) - \theta_A(t)\right) \,.
\end{equation}
This is strictly monotonically increasing: if $\theta(t) > \theta'(t)$ for all $t$, then $\theta(t) - \theta_A(t) > \theta'(t) - \theta_A(t)$ for all $t$.
Let $t^* = \arg\min_t (\theta(t) - \theta_A(t))$.
Then:
\begin{equation}
  u^*(\theta) = \theta(t^*) - \theta_A(t^*) > \theta'(t^*) - \theta_A(t^*) \geq \min_t (\theta'(t) - \theta_A(t)) = u^*(\theta') \,.
\end{equation}
\begin{itemize}
  \item For algorithm $A$: $u^*(\theta_A) = \min_t (\theta_A(t) - \theta_A(t)) = 0$.
  \item For any $B \neq A$: since $A$ is non-dominated, $B$ does not dominate $A$, so there exists $t^*$ with $\theta_B(t^*) \leq \theta_A(t^*)$.
  Thus, $u^*(\theta_B) = \min_t (\theta_B(t) - \theta_A(t)) \leq \theta_B(t^*) - \theta_A(t^*) \leq 0$.
\end{itemize}

Therefore, $u^*(\theta_A) = 0 \geq u^*(\theta_B)$ for all $B \neq A$, so $A$ is optimal under $u^*$.
\end{proof}

The Pareto set $\mathcal{P}$ is therefore the minimal sufficient output, since it contains exactly the algorithms that could be optimal for some user, and excluding any non-dominated algorithm would preclude optimality for some preference.
Conversely, dominated algorithms are suboptimal regardless of how the user values time.

However, $\mathcal{P}$ depends on the true win probabilities $\symbf{\theta}$, which are unknown.
We observe only rankings from a finite number of experimental runs, which induces uncertainty about Pareto membership. 
For some algorithms, we may be confident they are dominated; for others, confident they are non-dominated; for yet others, uncertain.
A naive approach runs all algorithms for a fixed number of instances, and reports the posterior, allocating samples uniformly regardless of which comparisons are already resolved.
We seek instead to identify $\mathcal{P}$ using as few experiments as possible, and minimizing experimental cost requires adaptive decision-making.
This means that after each batch of experiments, we update our beliefs and decide what to run next, which is exactly the setting of sequential Bayesian experimental design.
At each round, we observe rankings, update the posterior over $\symbf{\theta}$, and choose which algorithms to sample further.
The IIA property of the Plackett-Luce model gives this procedure a particular structure, since inference about the pair $\{A, B\}$ is unaffected by the presence or absence of other algorithms:
\begin{enumerate}
  \item An algorithm can be eliminated once it is confidently dominated, and future samples need not include it.
  \item Eliminating an algorithm does not invalidate inference about remaining algorithms.
  \item New algorithms can be added at any time without affecting existing posteriors; they enter with the prior and are updated by new observations.
\end{enumerate}
Sequential Bayesian inference under IIA thus naturally yields a racing procedure where algorithms compete, losers are eliminated when confidently dominated, and sampling continues among the survivors.
We call this procedure \textsc{PolarBear}\footnote{\textbf{P}areto-\textbf{o}ptima\textbf{l} \textbf{a}nytime algo\textbf{r}ithms via \textbf{B}ay\textbf{e}si\textbf{a}n \textbf{r}acing}.

It remains to specify when to terminate the race, i.e., at which point we have gathered enough information to support confident selection for arbitrary preferences.
We formalize this through regret, or the cost incurred by selecting a suboptimal algorithm.
Given posterior $P(\symbf{\theta} \mid \mathcal{R})$, selecting algorithm $A$ under preference $u$ incurs expected regret:
\begin{equation}
  \text{Regret}(A, u) = \mathbb{E}\left[\max_{A'} V_{A'}^u - V_A^u \mid \mathcal{R}\right] \,,
\end{equation}
where $P(V_A^u \mid \mathcal{R})$ is constructed by propagating posterior samples of $\symbf{\theta}$ through $u$.
When $u$ is unknown, we consider worst-case regret over $\mathcal{U}$:
\begin{equation}
  \sup_{u \in \mathcal{U}} \min_{A \in \hat{\mathcal{P}}} \text{Regret}(A, u) \,,
\end{equation}
where $\hat{\mathcal{P}}$ is our current estimate of the Pareto set.
This regret decomposes into two components, namely \emph{reducible} and \emph{irreducible} regret.
The former encodes uncertainty about pairwise orderings $\theta_A(t) \gtrless \theta_B(t)$, which decreases with additional samples as posteriors concentrate.
The latter is the preference ambiguity among true Pareto members.
For any two distinct non-dominated algorithms $A$ and $B$, there exist $u, u' \in \mathcal{U}$ with $V_A^u > V_B^u$ and $V_B^{u'} > V_A^{u'}$, and no amount of data resolves which is ``better'' without knowing $u$.
Consequently, we aim to minimize the reducible regret, which we achieve by determining all pairwise orderings with high certainty, so that remaining uncertainty reflects only the user's unspecified preference.
Specifically, \textsc{PolarBear} terminates when, for every pair of candidate algorithms and every timepoint, the posterior confidently determines their relation.
This is deliberately conservative, expressing the intuitive sentiment that, at this point, the cost of additional experiments is greater than the potential gain in information, no matter the preference.

\subsection{Confidence-Based Relations}

The racing procedure requires decisions about the relation between two algorithms under uncertainty.
We define two such relations: dominance (one algorithm is confidently better) and equivalence (neither is meaningfully better).
We formalize these through posterior probabilities over pairwise win rates, calculated by counting the proportion of posterior samples satisfying the condition.

\paragraph{Dominance}

Algorithm $A$ $\alpha$-dominates $B$ at time $t$ when the posterior concentrates on $A$ having higher win probability:
\begin{equation}
  A \overunderset{t}{\alpha}{\succ} B \iff P\left(\frac{\theta_A(t)}{\theta_A(t) + \theta_B(t)} > 0.5 \mid \mathcal{R}\right) = P(\theta_A(t) > \theta_B(t) \mid \mathcal{R}) \geq \alpha \,,
\end{equation}
where $\alpha \in (0.5, 1]$ is a decision threshold, typically 0.95 or 0.99. 
The win probability threshold is at 0.5, i.e., any credible advantage, however small, constitutes dominance.
By symmetry,
\begin{equation}
  P(\theta_A(t) > \theta_B(t) \mid \mathcal{R})  + P(\theta_B(t) > \theta_A(t) \mid \mathcal{R}) = 1 \,.
\end{equation}
Anytime $\alpha$-dominance is defined jointly for each $t \in \mathbf{T}$:
\begin{equation}
  A \overunderset{\mathbf{T}}{\alpha}{\succ} B \iff P(\theta_A(t) > \theta_B(t) \: \forall t \in \mathbf{T} \mid \mathcal{R}) \geq \alpha \,.
\end{equation}
Note that anytime $\alpha$-dominance is strictly more conservative:
\begin{equation}
   B \overunderset{\mathbf{T}}{\alpha}{\succ} A \implies B \overunderset{t}{\alpha}{\succ} A \,,
\end{equation}
but not conversely.
Furthermore, since anytime dominance is a partial order, the above symmetry does not hold here.

\paragraph{Equivalence}

Algorithms $A$ and $B$ are  $(\epsilon, \alpha)$-equivalent at time $t$ when the posterior concentrates on their pairwise win probability being near $0.5$:
\begin{equation}
  A \overunderset{t}{\alpha}{\approx} B \iff P\left(\frac{\theta_A(t)}{\theta_A(t) + \theta_B(t)} \in [0.5 - \epsilon, 0.5 + \epsilon] \mid \mathcal{R}\right) \geq \alpha \,,
\end{equation}
where $\epsilon \in [0, 0.5)$ defines the region of practical equivalence (ROPE) around 0.5, typically 0.05.
This is analogous to the use of ROPE in Bayesian hypothesis testing~\cite{kruschke2018} but adapted to pairwise win probabilities.
The parameter $\epsilon$ encodes the user's indifference: when we believe the pairwise win probability is within $\epsilon$ of even odds, the distinction is not decision-relevant (e.g., 0.45 to 0.55 for $\epsilon=0.05$).
Larger $\epsilon$ means a decision is easier to reach and declares more algorithms equivalent; smaller $\epsilon$ requires more evidence but distinguishes finer differences.
Note that the earlier definition of $\alpha$-dominance is not mutually exclusive with $\alpha$-equivalence since the posterior can concentrate in $(0.5, 0.5 + \epsilon]$; for decision-making however, $\alpha$-dominance takes precedence.

\subsection{Design Choices}

\textsc{PolarBear} admits two orthogonal design choices: when to eliminate an algorithm and when to stop sampling an algorithm.
These interact with model choice to yield different tradeoffs between statistical efficiency, computational cost, and the strength of output.

\paragraph{Elimination}

The mode of elimination determines when an algorithm is considered dominated and subsequently removed from the candidate Pareto set:
\begin{description}
  \item[Pointwise:] Algorithm $A$ is eliminated if $B$ $\alpha$-dominates $A$ at each timepoint $t$ individually:
  \begin{equation}
    \exists B \in \mathbf{A}: \forall t \in \mathbf{T}: B \overunderset{t}{\alpha}{\succ} A \,.
  \end{equation}
  \item[Joint:] Algorithm $A$ is eliminated if $B$ is better at all timepoints simultaneously:
  \begin{equation}
    \exists B \in \mathbf{A}: B \overunderset{\mathbf{T}}{\alpha}{\succ} A  \,.
  \end{equation}
\end{description}
Under pointwise elimination, each timepoint is tested separately.
This is natural for independent models, where posteriors factorize across time (cf. \cref{subsubsec:independent-models}).
However, pointwise elimination can lead to premature elimination when posteriors are negatively correlated across timepoints.
This occurs when uncertainty is about trajectory \emph{shape} rather than \emph{level}. 
If the posterior entertains both ``$B$ better early, worse late'' and ``$B$ worse early, better late'' scenarios, then the events ``$B$ wins at $t_1$'' and ``$B$ wins at $t_2$'' are negatively correlated. 
Marginally, $B$ may appear to dominate at each timepoint with high probability, while the joint event ``$B$ dominates everywhere'' has substantially lower probability. 
Joint elimination avoids this pathology, requiring a joint posterior over all timepoints, which is available from temporal models (cf. \cref{subsubsec:temporal-models}).

\paragraph{Resolution}

The resolution mode determines when to stop sampling an algorithm because its relationship with all other algorithms is sufficiently established.
This is defined over pairwise resolution $\mathbf{C}: \mathbf{T} \times \binom{\mathbf{A}}{2} \mapsto \{\texttt{T}, \texttt{F}\}$ at each $t$, where an algorithm $A_i$ is resolved if all pairs involving $A$ are resolved for all $t$:
\begin{equation}
  A_i \text{ resolved} \iff \forall t \in \mathbf{T} \,, \forall A_j \in \mathbf{A} \:: \mathbf{C}(t, \{A_i, A_j\}) = \texttt{T} \,.
\end{equation}
All pairs involving an eliminated algorithm $A$ are naturally marked as resolved.
This follows from the semantics: $A$ is dominated, so its relationship with every other algorithm is determined (inferior everywhere). 
Remaining algorithms need not be compared against $A$, and $A$'s elimination may cause other algorithms to become fully resolved if $A$ was their only unresolved partner.
We differentiate between two modes for the remaining pairwise resolution:
\begin{description}
  \item[Strict:] A pair $\{A, B\}$ is resolved at $t$ when we can be confident about some relation at $t$:
  \begin{equation}
    \mathbf{C}(t, \{A, B\}) = \texttt{T} \iff (A \overunderset{t}{\alpha}{\succ} B) \lor (B \overunderset{t}{\alpha}{\succ} A) \lor (A \overunderset{t}{\alpha}{\approx} B) \,.
  \end{equation}
  \item[Crossing:] A pair $\{A, B\}$ is resolved early at all timepoints when evidence of crossing trajectories is detected, i.e., a pair is resolved if we are certain that A dominates (or ties) at some $t$ and B dominates (or ties) at some $t'$, with at least one strict dominance:
  \begin{equation}
    \begin{gathered}
      \mathbf{C}(t, \{A, B\}) = \texttt{T} \: \forall t \in \mathbf{T} \iff \\
      \exists t, t' \in \mathbf{T}:
      (A \overunderset{t}{\alpha}{\succ} B \lor A \overunderset{t}{\alpha}{\approx} B) \land (B \overunderset{t'}{\alpha}{\succ} A \lor A \overunderset{t'}{\alpha}{\approx} B)
      \land \, (A \overunderset{t}{\alpha}{\succ} B \lor B \overunderset{t'}{\alpha}{\succ} A) \,.
    \end{gathered}
  \end{equation}
\end{description}
The strict resolution mode requires confident determination of every pairwise relationship at every timepoint.
This is necessary when the user may apply arbitrary preference functionals, since any timepoint could be decision-relevant.
However, for the narrower goal of identifying Pareto set membership, strict resolution can be wasteful.
Consider two algorithms with crossing trajectories: $A$ is better early, $B$ is better late. Once we are confident of this crossing, i.e., confident that $A$ wins at some $t_1$ and $B$ wins at some $t_2$, we know neither can dominate the other.
Both must remain in the Pareto set regardless of what happens at other timepoints.
Continuing to sample this pair until all timepoints are resolved yields tighter posteriors but cannot change the Pareto membership conclusion.
The crossing resolution is an extension of the strict resolution that exploits this structure.
A pair is resolved early when evidence of crossing is detected: one algorithm confidently dominates (or ties) at some timepoint, the other confidently dominates (or ties) at another, with at least one strict dominance.
This reduces sample cost when algorithms have genuinely different anytime profiles.
The tradeoff is that crossing resolution guarantees only Pareto set membership, not tight posteriors everywhere.
If a user's preference concentrates on specific timepoints, additional targeted sampling may be needed for final selection.

\subsection{The Racing Procedure}

Algorithm~\ref{alg:polarbear} summarizes the \textsc{PolarBear} procedure\footnote{The implementation of \textsc{PolarBear} is provided to reviewers and will be released upon publication as part of a broader open-source toolbox.}.
The algorithm maintains a candidate Pareto set $\hat{\mathcal{P}}$, initialized to all anytime algorithms under consideration, and iteratively samples, updates beliefs, eliminates dominated algorithms, and resolves pairwise relationships until all pairs in $\hat{\mathcal{P}}$ are resolved at all timepoints.
We refer to a single such iteration as a \emph{round}.

\begin{algorithm}[t]
\caption{\textsc{PolarBear}}
\label{alg:polarbear}
\KwIn{Algorithms $\mathbf{A}$, timepoints $\mathbf{T}$, decision $\alpha$, equivalence threshold $\epsilon$, batch size $b$}
\KwOut{Pareto set $\hat{\mathcal{P}}$, posterior $P(\symbf{\theta} \mid \mathcal{R})$}

$\hat{\mathcal{P}} \gets \mathbf{A}$\;
$\mathcal{R} \gets \emptyset$\;
$\mathbf{C}(t, \{A_i, A_j\}) \gets \texttt{F}$ for all $t \in \mathbf{T}$, $\{A_i, A_j\} \in \binom{\mathbf{A}}{2}$\;

\While{$\exists \{A, B\} \in \binom{\hat{\mathcal{P}}}{2}, t \in \mathbf{T}: \mathbf{C}(t, \{A, B\}) = \texttt{F}$}{
    \tcp{Sample unresolved algorithms}
    $\hat{\mathbf{A}} \gets \{A \in \hat{\mathcal{P}} : \exists B \in \hat{\mathcal{P}}, t \in \mathbf{T}: \mathbf{C}(t, \{A, B\}) = \texttt{F}\}$\;
    Sample $b$ instances from $\mathcal{I}$, run algorithms in $\hat{\mathbf{A}}$\;
    $\mathcal{R} \gets \mathcal{R} \cup \{\text{new rankings}\}$\;
    \tcp{Update posterior}
    Compute $P(\symbf{\theta} \mid \mathcal{R})$\;
    \tcp{Check elimination}
    \ForEach{$A \in \hat{\mathcal{P}}$}{
        \If{$\exists B \in \hat{\mathcal{P}}: B$ dominates $A$ (pointwise or joint)}{
            $\hat{\mathcal{P}} \gets \hat{\mathcal{P}} \setminus \{A\}$\;
            $\mathbf{C}(t, \{A, \cdot\}) \gets \texttt{T}$ for all $t$\;
        }
    }
    \tcp{Update resolution status}
    \ForEach{$\{A, B\} \in \binom{\hat{\mathcal{P}}}{2}$}{
        \ForEach{$t \in \mathbf{T}$}{
            \If{$(A \overunderset{t}{\alpha}{\succ} B) \lor (B \overunderset{t}{\alpha}{\succ} A) \lor (A \overunderset{t}{\alpha}{\approx} B)$}{
                $\mathbf{C}(t, \{A, B\}) \gets \texttt{T}$\;
            }
        }
        \tcp{Crossing resolution (optional)}
        \If{crossing detected for $\{A, B\}$}{
            $\mathbf{C}(t, \{A, B\}) \gets \texttt{T}$ for all $t \in \mathbf{T}$\;
        }
    }
}

\Return{$\hat{\mathcal{P}}$, $P(\symbf{\theta} \mid \mathcal{R})$}
\end{algorithm}

\paragraph{Sampling Strategy}

Pointwise elimination, while a weaker criterion than joint elimination, has the practical advantage that not all algorithms require evaluation at all timepoints.
For each algorithm $A \in \hat{\mathcal{P}}$, we can define its maximum unresolved timepoint:
\begin{equation}
  \tau_A = \max \, \left\{ \, t \in \mathbf{T} : \exists B \in \hat{\mathcal{P}}, \mathbf{C}(t, \{A, B \}) = \texttt{F} \, \right\} \,.
\end{equation}
Once all pairs involving $A$ are resolved at late timepoints, we can stop $A$ earlier in subsequent rounds, reducing computational cost without losing information.
Algorithms that are fully resolved are naturally not sampled at all.

\paragraph{Batch Size}

Each round samples $b$ new instances, running each unresolved algorithm $A$ on each instance up to time $\tau_A$.
The choice of $b$ trades off sample efficiency against computational overhead.
At one extreme, $b = 1$ updates beliefs after every ranking trajectory.
This minimizes algorithm runs since sampling stops immediately when a pair resolves, with no wasted evaluations.
However, this requires recomputing the posterior after each instance, and while inference is typically fast compared to algorithm execution, the overhead accumulates over many rounds.
At the other extreme, large $b$ amortizes inference cost over many instances and exploits parallel hardware.
Modern compute environments increasingly favor batch execution; running $b \cdot |\hat{\mathcal{P}}|$ algorithm instances in parallel may take little more wall-clock time than running one.
The cost is potential waste, as some runs may occur after a pair has effectively resolved, contributing no new information.

Adaptive batching offers a practical compromise.
Some pairs resolve quickly with few samples; others, particularly nearly equivalent algorithms or those with subtle crossing behavior, may require substantially more evidence.
While optimal batch sizing could be formulated as a decision-theoretic problem, we find a simple heuristic effective in practice: we double $b$ when no pairs resolve in a round (indicating slow progress) and halve it when many (e.g., $>20\%$) resolve (indicating diminishing returns from large batches), clipped to $[b_{\min}, b_{\max}]$.
This allows the procedure to increase parallelism precisely when it is needed, reaching large batch sizes for difficult pairs while maintaining efficiency during early rounds when easy eliminations dominate.
More sophisticated strategies could adapt $b$ based on how close resolution of each pair is to its decision threshold $\alpha$, or estimate the expected number of samples until resolution by simulating fantasy observations drawn from the current posterior.
We leave such refinements to future work.

\paragraph{Early Termination}

Note that the procedure can be preemptively terminated at any round.
The current candidate set $\hat{\mathcal{P}}$ and posterior $P(\symbf{\theta} \mid \mathcal{R})$ are always valid, i.e., $\hat{\mathcal{P}}$ contains all algorithms not yet confidently dominated, and the posterior reflects all observed evidence.
Early termination yields wider credible intervals and potentially unresolved pairs, but no bias.
This is a general consequence of the Bayesian treatment: users with constrained experimental budgets can stop when posteriors are sufficiently concentrated for their purposes, even before the built-in termination criterion is met.
The built-in criterion is deliberately conservative, ensuring that the reducible component of worst-case regret is driven toward zero.

\paragraph{Dynamically Adding Algorithms At Runtime}

The IIA property of PL permits adding algorithms to the race at any time.
A new algorithm $A_{\text{new}}$ enters with the prior, joins $\hat{\mathcal{P}}$, and is marked unresolved against all existing candidates.
Posterior beliefs about existing algorithms are unaffected; only new observations involving $A_{\text{new}}$ update its rating.
Specifically, $A_{\text{new}}$ is first executed on a batch of cached instances previously sampled from $\mathcal{I}$, and then injected into the existing rankings, using the cached best-so-far trajectories of the existing algorithms. 
This can be repeated until it is either eliminated or catches up with the other Pareto candidates, at which point the race continues as before.
This enables iterative workflows where initial racing identifies a preliminary Pareto set, and new candidates (e.g., generated by an automated algorithm design strategy) are introduced without restarting the analysis.

\subsection{Algorithm Selection under Time Preferences and Risk Profiles}

The output of \textsc{PolarBear} is a candidate Pareto set $\hat{\mathcal{P}}$ together with a concentrated posterior $P(\symbf{\theta} \mid \mathcal{R})$.
This posterior supports selection under arbitrary time preferences and risk profiles, allowing fast and principled decision-making at deployment time.

Given a preference $u \in \mathcal{U}$, the posterior induces a distribution over values $P(V_A^u \mid \mathcal{R})$, computed by propagating posterior samples of $\symbf{\theta}$ through $u$.
Different risk profiles correspond to different decision criteria over this distribution:
\begin{description}
  \item[Probability of being best (P2BB):] Select the algorithm most likely to be optimal:
  \begin{equation}
    A^* = \arg\max_{A \in \hat{\mathcal{P}}} P(V_A^u \geq V_B^u \; \forall B \in \hat{\mathcal{P}} \mid \mathcal{R}) \,.
  \end{equation}
  \item[Expected value (risk-neutral):] Select the algorithm with highest posterior mean:
  \begin{equation}
    A^* = \arg\max_{A \in \hat{\mathcal{P}}} \mathbb{E}[V_A^u \mid \mathcal{R}] \,.
  \end{equation}
  \item[Lower quantile (risk-averse):] Select the algorithm with best worst-case performance at quantile level $\gamma$:
  \begin{equation}
    A^* = \arg\max_{A \in \hat{\mathcal{P}}} Q_\gamma[V_A^u \mid \mathcal{R}] \,,
  \end{equation}
  where $Q_\gamma$ denotes the $\gamma$-quantile (e.g., $\gamma = 0.05$ for the 5th percentile).
\end{description}
All criteria are computed from posterior samples without additional experiments or inference.
The choice of risk profile reflects the user's attitude toward uncertainty, i.e., risk-neutral users maximize expected performance, while risk-averse users protect against unfavorable outcomes.

When multiple algorithms can be deployed in parallel, the goal shifts from selecting a single algorithm to selecting a portfolio.
As established in \cref{sec:plackett-luce}, the portfolio rating under the PL model is the sum of individual ratings when quality is max-aggregated:
\begin{equation}
  \theta_S(t) = \sum_{A \in S} \theta_A(t) \,,
\end{equation}
representing the probability that the portfolio $S$ contains the winner at time $t$, where multiplicity counts (running the same algorithm twice contributes twice its rating).
The optimal portfolio of size $k$ under preference $u$ and decision criterion $D$ is then:
\begin{equation}
  S^* = \arg\max_{S \in \mathcal{M}_k(\hat{\mathcal{P}})} D\left[V_S^u \mid \mathcal{R}\right] \,,
\end{equation}
where $\mathcal{M}_k(\hat{\mathcal{P}})$ denotes all size-$k$ multisets over $\hat{\mathcal{P}}$ and $V_S^u = u(\theta_S(t_1), \ldots, \theta_S(t_T))$.
For linear preferences (e.g., weighted integrals, budget distributions), the optimal size-$k$ portfolio under expected value and probability of being best is the individually optimal algorithm repeated $k$ times.
After resolution, posteriors are sufficiently concentrated that quantile-based selection typically agrees. 
Other criteria and nonlinear preferences require evaluating portfolios jointly.
For small $k$ and $|\hat{\mathcal{P}}|$, exhaustive enumeration over all $\binom{|\hat{\mathcal{P}}| + k - 1}{k}$ multisets is tractable.
For larger cases, standard combinatorial optimization methods apply, such as greedy construction, simulated annealing, or iterated local searches.
Since portfolio evaluation requires only summing over existing posterior samples and no additional inference, these methods can explore many candidates efficiently.

The selection criteria presented here condition on time preferences and risk profiles.
This is orthogonal to traditional instance-based algorithm selection~\cite{kerschke2019}, which selects algorithms based on instance features.
The two approaches are complementary: instance-based selection asks ``given this problem, which algorithm?'', while anytime selection asks ``given my budget preferences, which algorithm?''.
Combining both axes, i.e., selecting based on instance features and time preferences jointly, is a natural extension that we leave to future work.

\section{Case Studies}
\label{sec:case-studies}

We evaluate \textsc{PolarBear} through three case studies.
The first uses a synthetic ground truth to verify that the procedure correctly identifies Pareto sets and eliminates dominated algorithms.
The second applies \textsc{PolarBear} to a classical benchmarking setup where traditional methods for anytime performance evaluation (EAF, ECDF, AOCC) are applicable, enabling direct comparison of assumptions and conclusions.
The third demonstrates a comparison under an arbitrary instance distribution where global optima are unknown, using wall-clock time rather than function evaluations as the budget axis.
All experiments use a decision threshold of $\alpha = 0.99$ and equivalence threshold of $\epsilon = 0.05$.
We use an initial batch size of $b=8$, and the batch size adaptation heuristic with $b_{\min} = 8$ and $b_{\max} = 64$ for the first, and $b_{\max} = 128$ for the second and third case study.

\subsection{Synthetic Ground Truth}
\label{subsec:synthetic}

We sample ground truth ratings $\symbf{\theta}^*$ from a Matérn prior with $\nu = 3/2$ and $\sigma \sim \text{Lognormal}(0, 0.5)$ and $\ell \sim \text{Inv-Gamma}(5, 3)$, with uniformly spaced timepoints.
Rankings are generated by the Thurstonian interpretation: adding independent Gumbel noise to log-ratings $\symbf{\mu}^* = \log \symbf{\theta}^*$ and ranking the result, which recovers the exact PL data-generating process, i.e., rankings with win probabilities $\symbf{\theta}^*$.
Inference uses the same model class, i.e., the GP model, but with a slightly different hyperprior of $\ell \sim \text{Inv-Gamma}(5, 2)$.
We deliberately overestimate the rate of temporal variation here to show robustness against this kind of misspecification.
Despite this mismatch, we expect correct Pareto set recovery as long as the model remains flexible enough to capture the true trajectory structure.
We also vary the inference approach across examples, using the exact GP, HSGP and sparse GP with inducing points with different temporal resolution, showing that approximations can suffice. 
Elimination is done using joint dominance, with crossing algorithm pairs getting resolved early.

\begin{figure}
  \centering
  \begin{subfigure}[b]{0.32\textwidth}
      \centering
      \includegraphics[width=\textwidth]{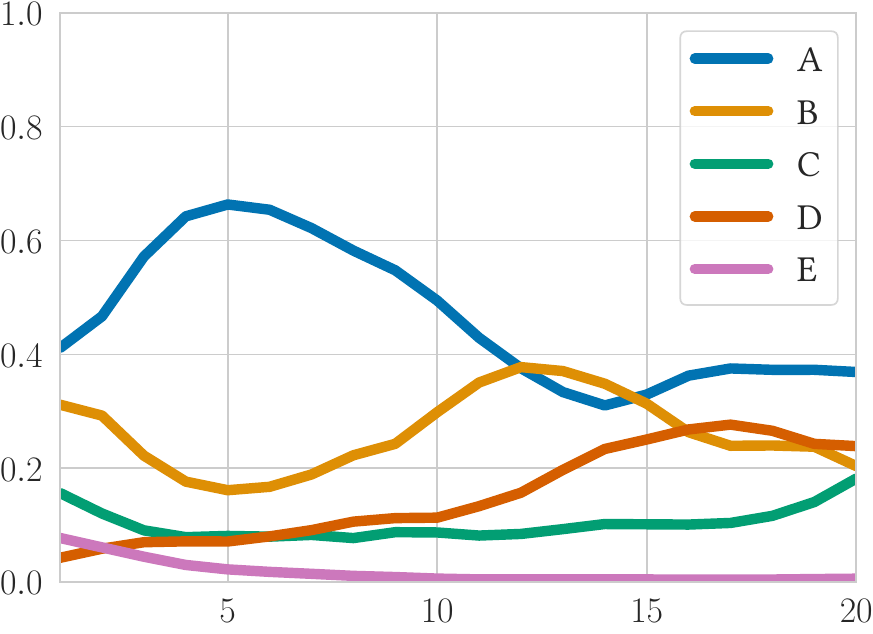}
      \caption{Ground truth ratings.}
      \label{fig:ground-truth-A}
  \end{subfigure}
  \hfill
  \begin{subfigure}[b]{0.32\textwidth}
      \centering
      \includegraphics[width=\textwidth]{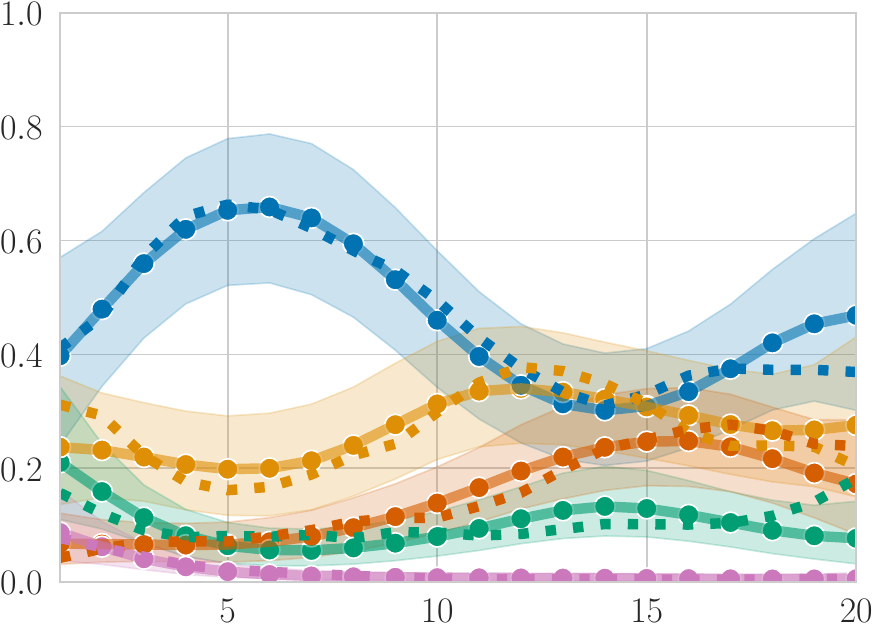}
      \caption{Round 1: $E$ is eliminated by $A$.}
      \label{fig:posterior-A-1}
  \end{subfigure}
  \hfill
  \begin{subfigure}[b]{0.32\textwidth}
      \centering
      \includegraphics[width=\textwidth]{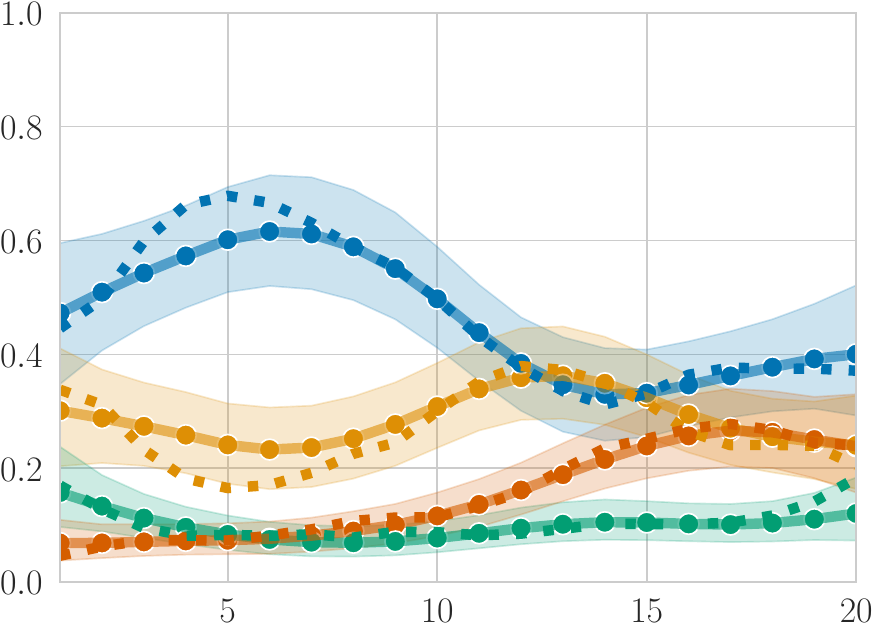}
      \caption{Round 2: $C$ is eliminated by $A$.}
      \label{fig:posterior-A-2}
  \end{subfigure}
  \hfill
  \begin{subfigure}[b]{0.32\textwidth}
      \centering
      \includegraphics[width=\textwidth]{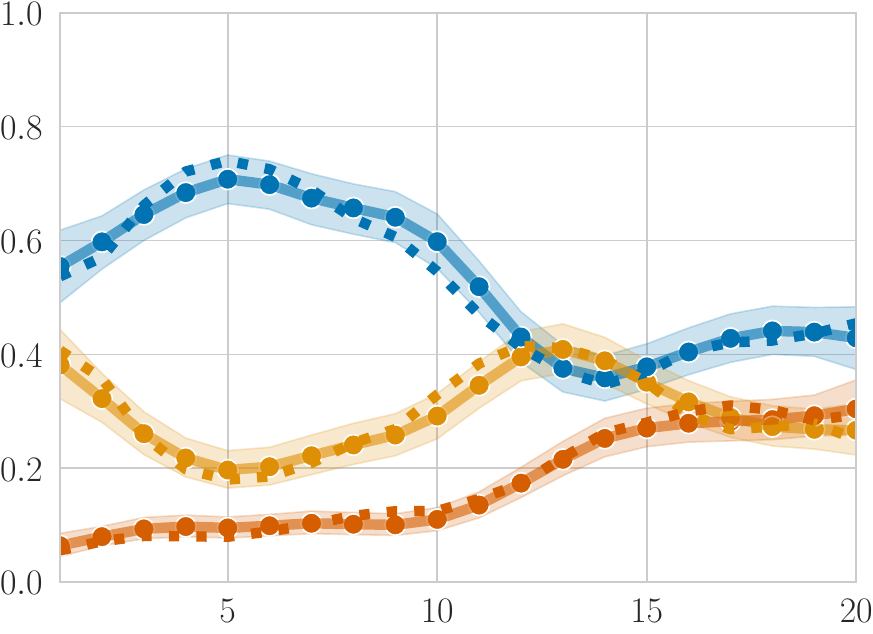}
      \caption{Round 6: $D$ is eliminated by $A$.}
      \label{fig:posterior-A-6}
  \end{subfigure}
  \hfill
  \begin{subfigure}[b]{0.32\textwidth}
      \centering
      \includegraphics[width=\textwidth]{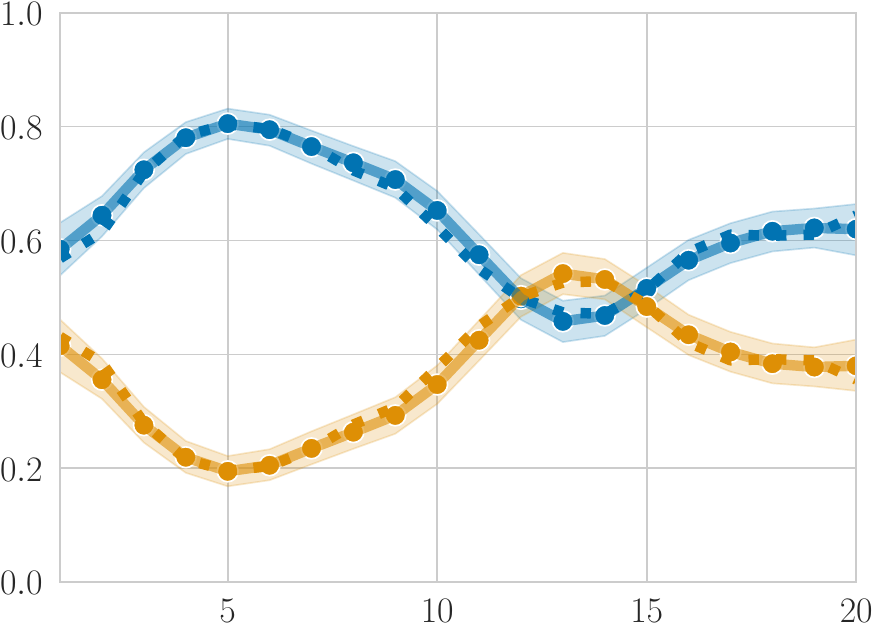}
      \caption{Pareto set $\hat{\mathcal{P}} = \{A, B\}$.}
      \label{fig:final-posterior-A}
  \end{subfigure}
  \hfill
  \begin{subfigure}[b]{0.32\textwidth}
      \centering
      \includegraphics[width=\textwidth]{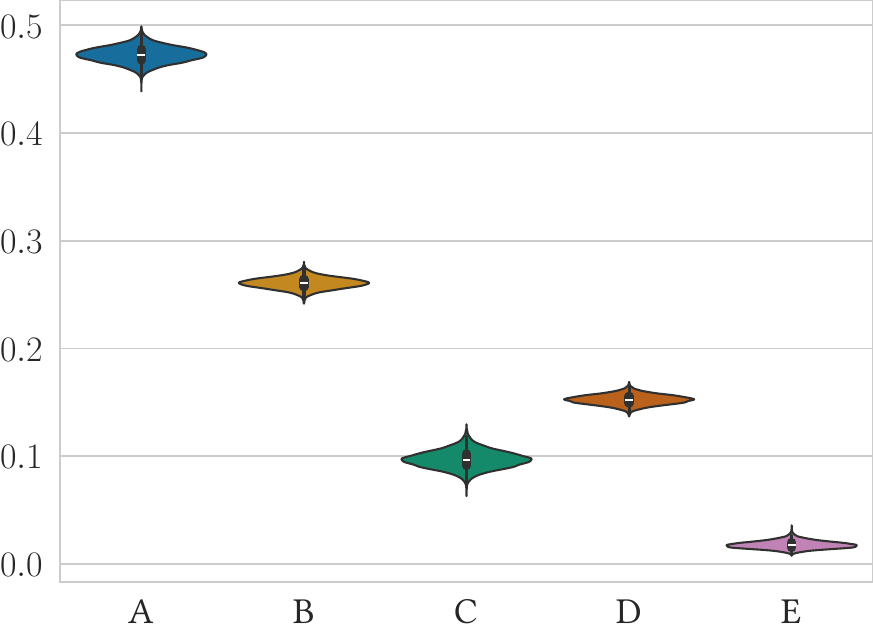}
      \caption{$\symbf{V}^u$ for $u(\theta) = \int \theta(t) \, dt$.}
      \label{fig:value-A}
  \end{subfigure}
  \caption{A synthetic example with 5 algorithms. Figures (b) to (e) show the mean and 95\% credible interval of the rating posterior (y-axis) over time (x-axis), with ground truth ratings from (a) as dotted lines. Figure (f) shows the resulting posterior over preference values for a uniform budget preference. Note that ratings at a specific $t$ always sum to 1.}
  \label{fig:synthetic-A}
\end{figure}

\begin{figure}
  \centering
    \begin{subfigure}[b]{0.49\textwidth}
      \centering
      \includegraphics[width=\textwidth]{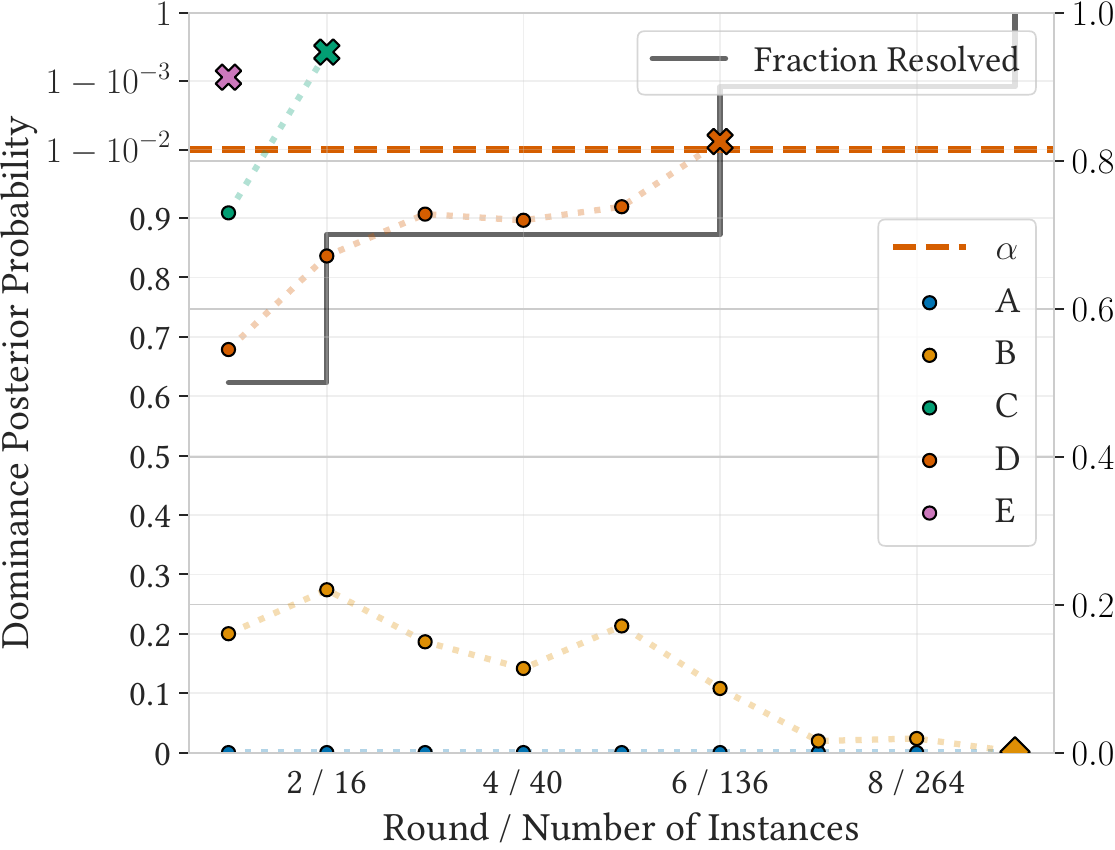}
      \caption{Example 1.}
      \label{fig:dominance-evolution-A}
  \end{subfigure}
  \hfill
  \centering
    \begin{subfigure}[b]{0.49\textwidth}
      \centering
      \includegraphics[width=\textwidth]{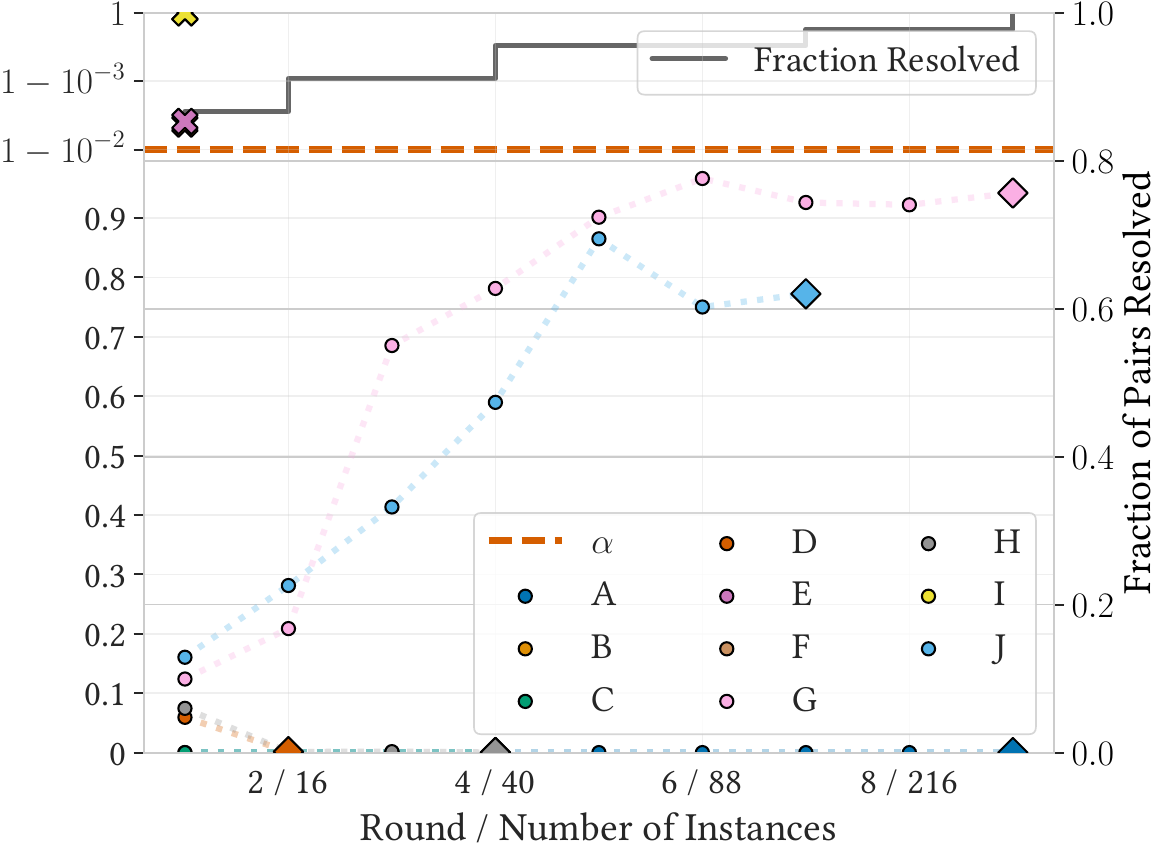}
      \caption{Example 3.}
      \label{fig:dominance-evolution-C}
  \end{subfigure}
  \caption{Dominance posterior probabilities over rounds. Points show $\max_{A \in \mathbf{A}} P(A \overset{\mathbf{T}}{\succ} B)$ for each algorithm $B$ at each round (left y-axis). Note that the left y-axis changes to a log-scale for $p>0.9$. A cross marks elimination, i.e., when the dominance probability climbs above the decision threshold $\alpha$. A diamond marks that the algorithm is fully resolved and a Pareto set member. The black line shows the proportion of resolved algorithm pairs (right y-axis).}
  \label{fig:synthetic-dominance-evolution}
\end{figure}

\begin{figure}
  \centering
  \begin{subfigure}[b]{0.32\textwidth}
      \centering
      \includegraphics[width=\textwidth]{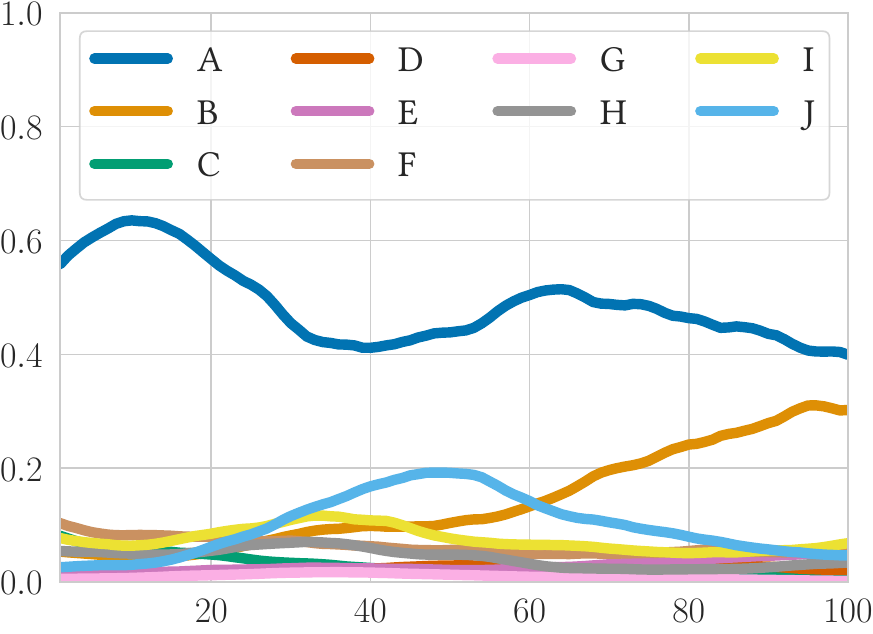}
      \caption{Ground truth ratings.}
      \label{fig:ground-truth-B}
  \end{subfigure}
  \hfill
  \begin{subfigure}[b]{0.32\textwidth}
      \centering
      \includegraphics[width=\textwidth]{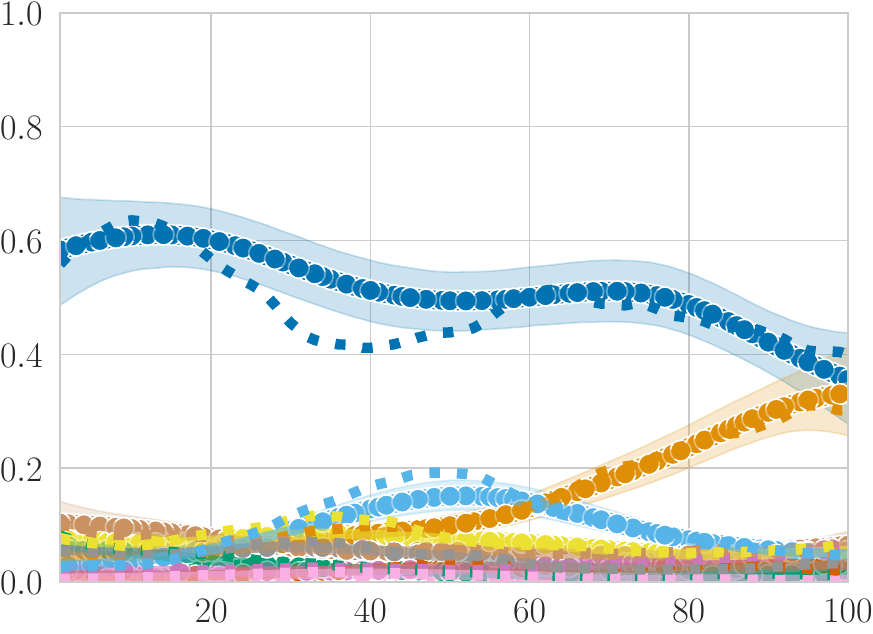}
      \caption{Round 1: $\{A, B\}$ remain.}
      \label{fig:posterior-B-1}
  \end{subfigure}
  \hfill
  \begin{subfigure}[b]{0.32\textwidth}
      \centering
      \includegraphics[width=\textwidth]{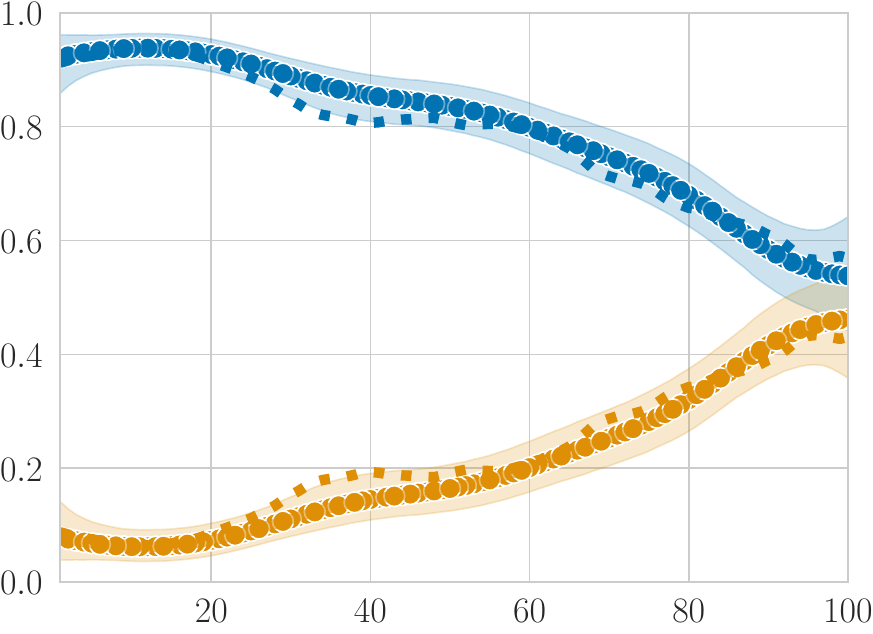}
      \caption{Round 2: $\{A, B\}$ still undecided.}
      \label{fig:posterior-B-2}
  \end{subfigure}
  \hfill
  \begin{subfigure}[b]{0.32\textwidth}
      \centering
      \includegraphics[width=\textwidth]{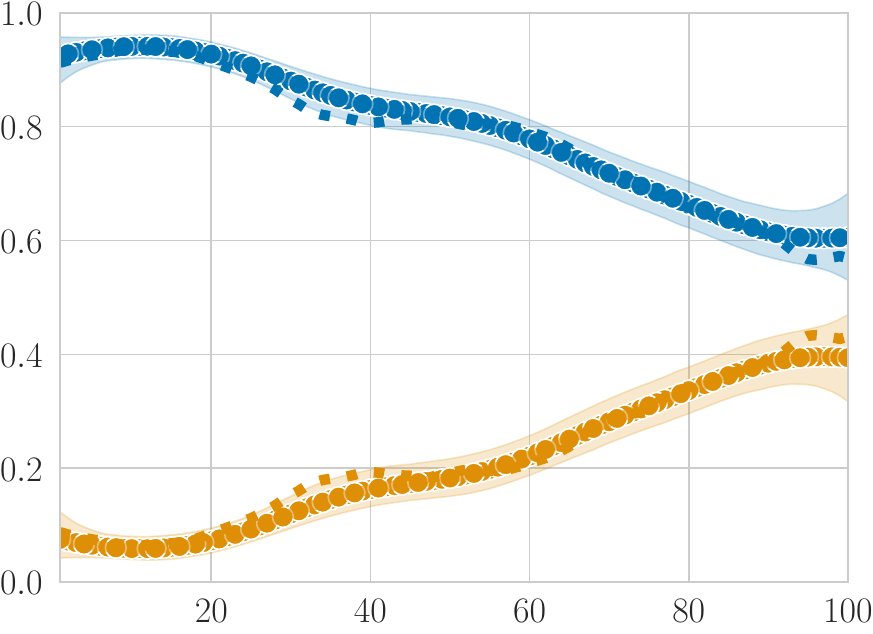}
      \caption{Round 3: $B$ is eliminated by $A$.}
      \label{fig:posterior-B-3}
  \end{subfigure}
  \hfill
  \begin{subfigure}[b]{0.32\textwidth}
      \centering
      \includegraphics[width=\textwidth]{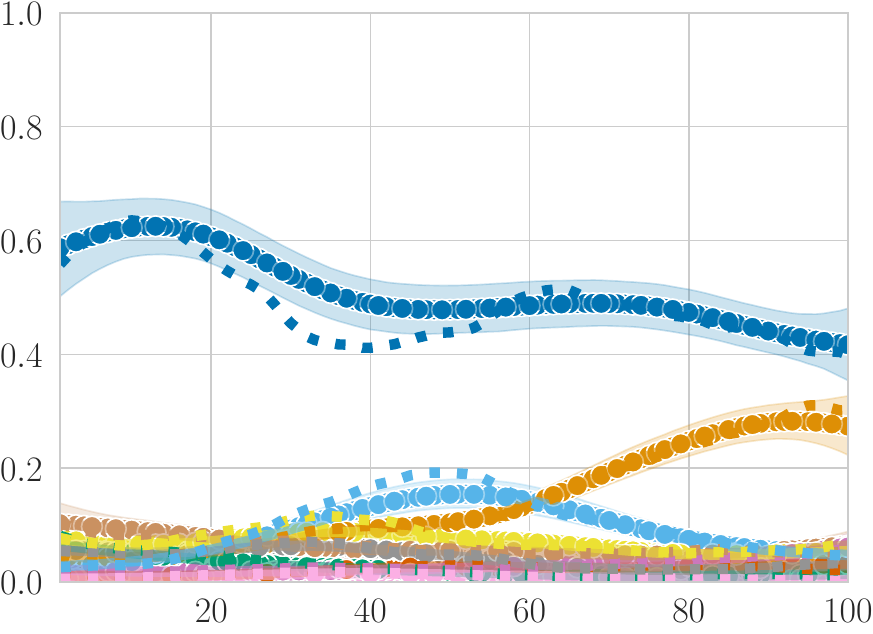}
      \caption{Final Posterior with $\hat{\mathcal{P}} = \{A\}$.}
      \label{fig:final-posterior-B}
  \end{subfigure}
  \hfill
  \begin{subfigure}[b]{0.32\textwidth}
      \centering
      \includegraphics[width=\textwidth]{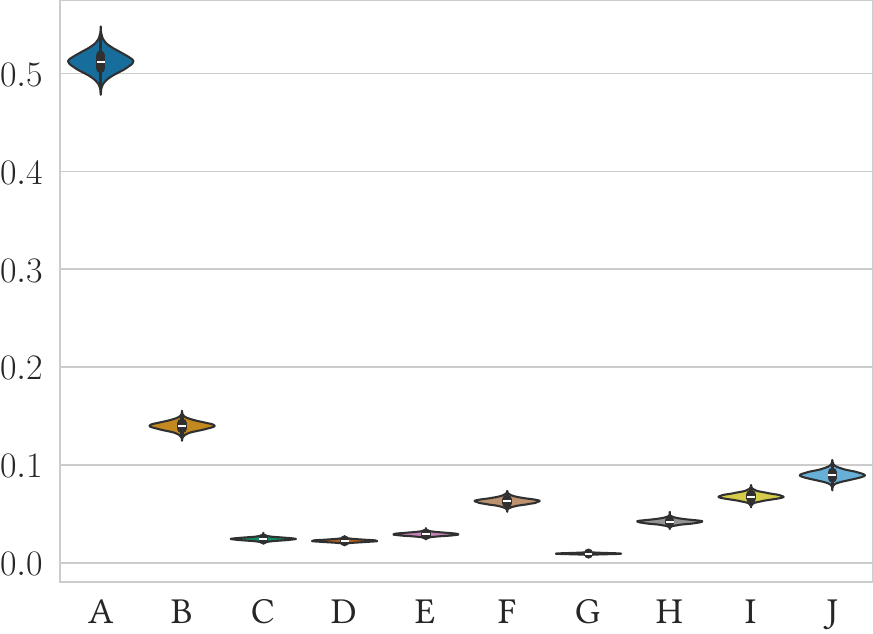}
      \caption{$\symbf{V}^u$ for $u(\theta) = \int \theta(t) \, dt$.}
      \label{fig:value-B}
  \end{subfigure}
  \caption{A synthetic example with 10 algorithms. Conventions as in \cref{fig:synthetic-A}.}
  \label{fig:synthetic-B}
\end{figure}

\begin{figure}
  \centering
  \begin{subfigure}[b]{0.32\textwidth}
      \centering
      \includegraphics[width=\textwidth]{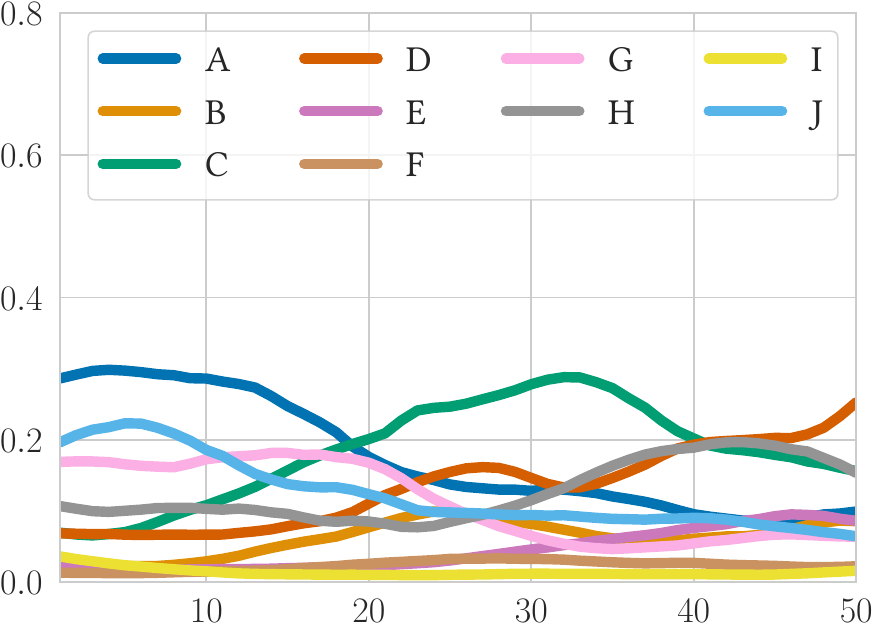}
      \caption{Ground truth ratings.}
      \label{fig:ground-truth-C}
  \end{subfigure}
  \hfill
  \begin{subfigure}[b]{0.32\textwidth}
      \centering
      \includegraphics[width=\textwidth]{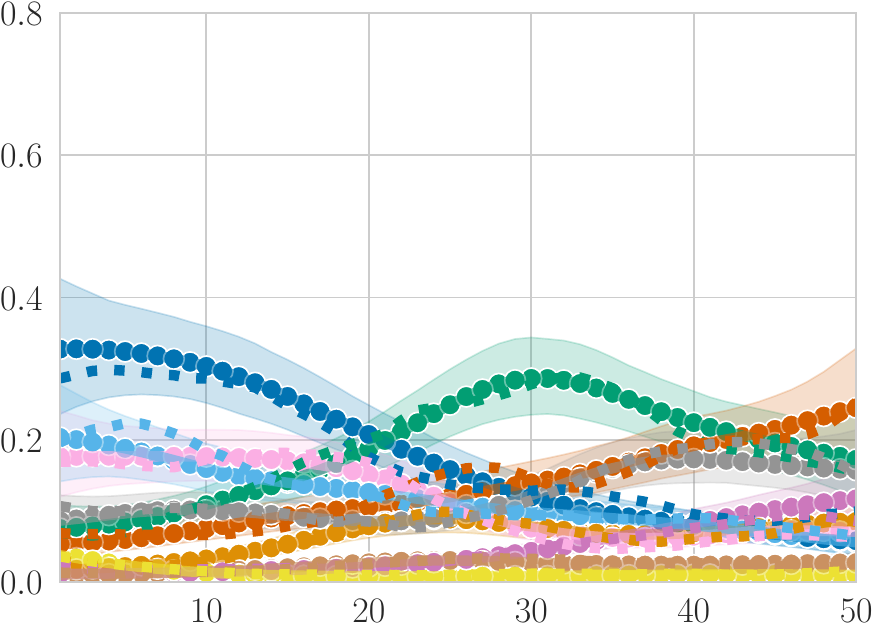}
      \caption{Round 1: $\{B, E, F, I\}$ eliminated.}
      \label{fig:posterior-C-1}
  \end{subfigure}
  \hfill
  \begin{subfigure}[b]{0.32\textwidth}
      \centering
      \includegraphics[width=\textwidth]{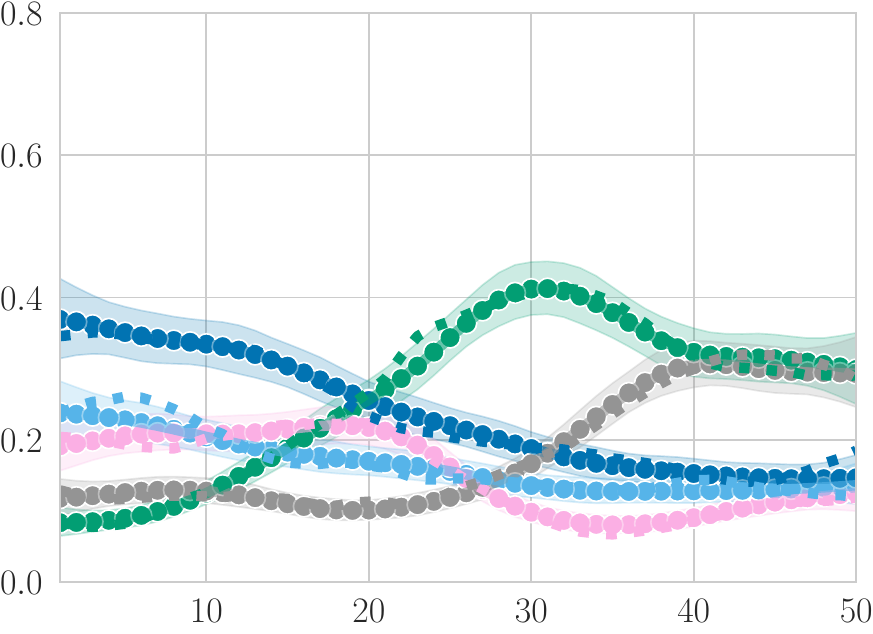}
      \caption{Round 4: $C$ is resolved.}
      \label{fig:posterior-C-2}
  \end{subfigure}
  \hfill
  \begin{subfigure}[b]{0.32\textwidth}
      \centering
      \includegraphics[width=\textwidth]{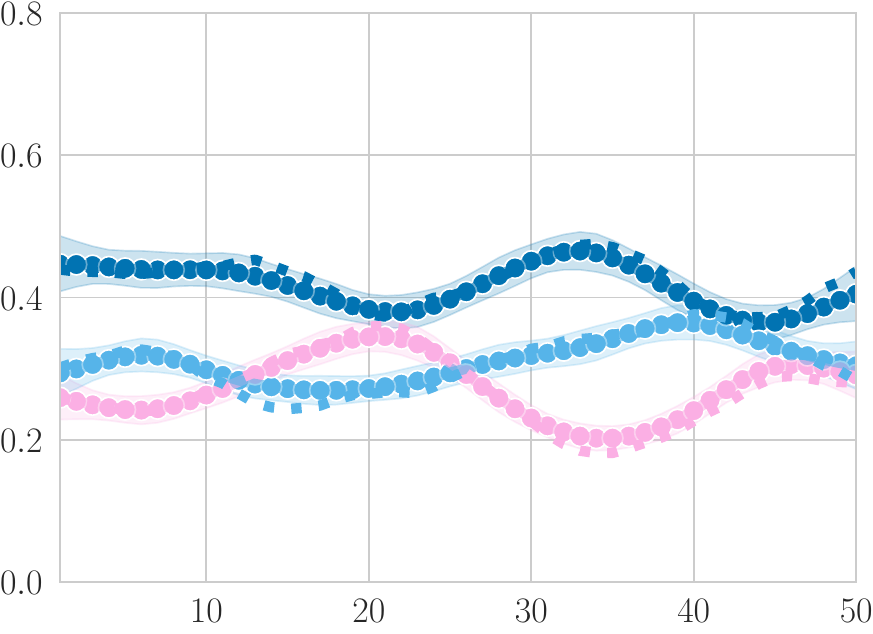}
      \caption{Round 7: $G$ is resolved.}
      \label{fig:posterior-C-3}
  \end{subfigure}
  \hfill
  \begin{subfigure}[b]{0.32\textwidth}
      \centering
      \includegraphics[width=\textwidth]{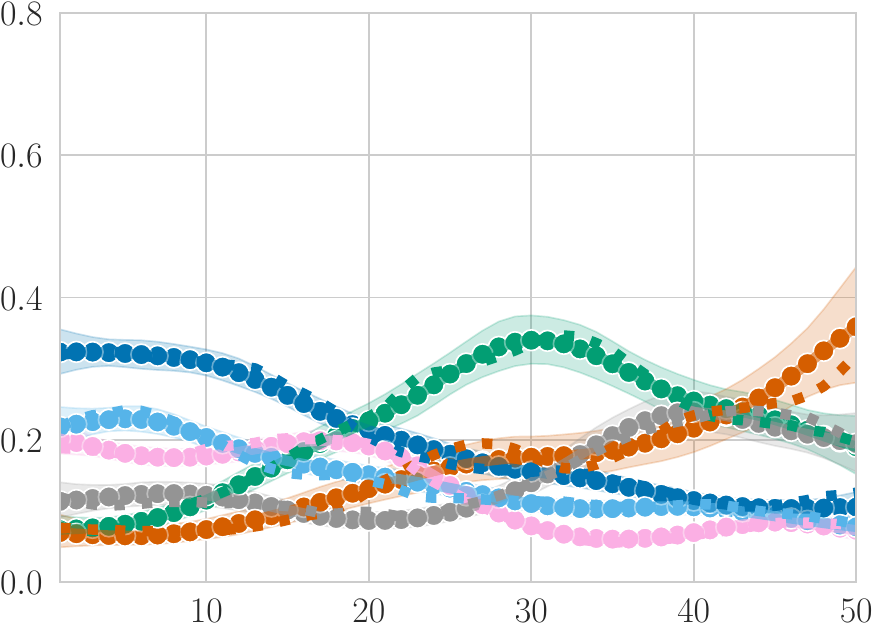}
      \caption{$\hat{\mathcal{P}} = \{A, C, D, G, H, J\}$.}
      \label{fig:final-posterior-C}
  \end{subfigure}
  \hfill
  \begin{subfigure}[b]{0.32\textwidth}
      \centering
      \includegraphics[width=\textwidth]{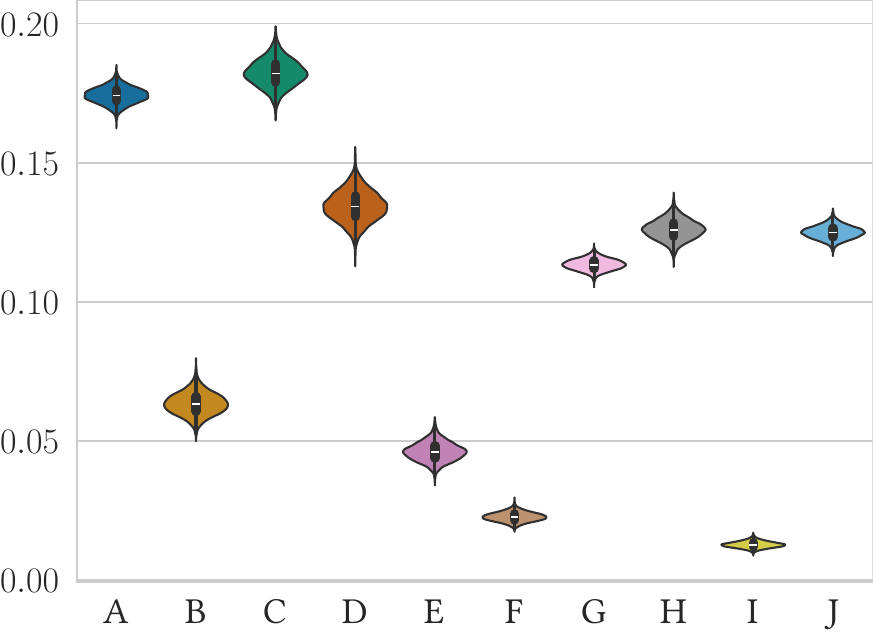}
      \caption{$\symbf{V}^u$ for $u(\theta) = \int \theta(t) \, dt$.}
      \label{fig:value-C}
  \end{subfigure}
  \caption{A synthetic example with 10 algorithms. Conventions as in \cref{fig:synthetic-A}. A resolved algorithm has all pairwise relationships determined; for non-eliminated algorithms, this confirms Pareto set membership.}
  \label{fig:synthetic-C}
\end{figure}

\paragraph{Example 1: Crossing Trajectories}

The first example consists of 5 algorithms $A$ to $E$, with $\mathcal{P} = \{A,B\}$ and $\{C, D, E\}$ being dominated by $A$.
The ground truth rating trajectories are given in \cref{fig:ground-truth-A}.
We use the exact GP model with $T=20$.
The clearly inferior $E$ and $C$ are eliminated after round 1 and 2, or 8 and 16 instances, respectively.
Algorithms $A$ and $D$ have close ratings for $t > 14$, 
however not so close that the ROPE plays a role.
It requires 6 rounds and a total of 136 instances to resolve that $A$ dominates $D$.
While $B$ crosses $A$ twice around $t=14$, this is only by a little margin, and 9 rounds or 328 instances are necessary to be confident in this crossing, after which $\{A, B\}$ is correctly identified as the anytime Pareto set.
\Cref{fig:value-A} shows the posterior over preference values $\symbf{V}^u$ for a uniform preference $u(\theta) = \int \theta(t) \, dt$.
This preference can also be understood as \emph{noninformative}, i.e., we have no prior information about our budget, other than that it lies within $t_1$ and $t_T$.
For this preference, $A$ is the clear winner.
Note that $\{C, D, E\}$ are included in this plot for illustrative purposes.
Since they are anytime-dominated by $A$, they are worse for any strictly monotone preference, and must not be considered further.
The evolution of the posterior over rounds is given in \cref{fig:dominance-evolution-A}.
Note how the number of sampled instances $I$ on the x-axis increases irregularly with rounds due to batch size adaptation.

\paragraph{Example 2: Single Winner}

The second example consists of 10 algorithms $A$ to $J$, with $\mathcal{P} = \{A\}$, i.e., $A$ dominates all other algorithms.
The ground truth rating trajectories are given in \cref{fig:ground-truth-B}.
We use the HSGP model with $T=100$.
After the first round or 8 instances, all algorithms except $B$ are eliminated by $A$.
It takes a total of 3 rounds or 32 instances to assert that $A$ and $B$ do not cross for $t > 90$, and $B$ is subsequently eliminated.
Multiple rounds are expected here: $A$ and $B$ are similar at late timepoints, and the procedure correctly requires more evidence before eliminating.
\Cref{fig:value-B} shows the posterior over $V_u$ for the uniform preference, where $A$ as the sole Pareto set member is naturally optimal.

\paragraph{Example 3: Near-Equivalence}

The third example consists of 10 algorithms $A$ to $J$, with $\mathcal{P} = \{A, C, D, H\}$.
The ground truth rating trajectories are given in \cref{fig:ground-truth-C}.
We use the GP model with $T=50$ and $M=10$ inducing points.
After the first round or 8 instances, $\{B, E, F, I\}$ are eliminated, some dominated by multiple algorithms.
$C$ is resolved early in the second round after 16 instances, since it confidently crosses all remaining Pareto set candidates.
While $G$ and $J$ are dominated by $A$ with respect to the ground truth, both fall within the ROPE relative to $A$ at $t=20$ and $t=42$, respectively.
As a consequence, $G$ is resolved early after round 7 or 152 instances, and $J$ follows after round 9 or 280 instances (cf. \cref{fig:dominance-evolution-C}).
Both are included in the final Pareto set $\mathcal{\hat{P}} = \{A,C,D,G,H,J\}$. 
This is a direct consequence of our choice of decision threshold $\alpha=0.99$ together with equivalence threshold $\epsilon=0.05$.
The former encodes that we only eliminate when highly confident in dominance; the latter means that algorithms with pairwise win probability between 0.45 and 0.55 are considered practically interchangeable.
Under preferences concentrated at $t=20$ or $t=42$, selecting $G$ or $J$ over $A$ incurs negligible regret, and the distinction is not decision-relevant at the specified threshold.
This example illustrates the tradeoff encoded by $\epsilon$: larger values accept near-equivalent algorithms into the Pareto set, reducing sample cost but potentially including marginally inferior alternatives.
Smaller values enforce stricter separation at higher sample cost.
Nevertheless, the posterior over $V_u$ confirms $G$ and $J$ as inferior compared to $A$ under the uniform preference, with $C$ being slightly more competitive than $A$ under expectation (cf. \cref{fig:value-C}).

\subsection{Comparing Algorithms on Known Benchmarks (MA-BBOB)}
\label{subsec:known-benchmarks}

The second case study applies \textsc{PolarBear} to a traditional benchmark setting, i.e., well-studied problem instances with known global optima.
This enables direct comparison of assumptions and conclusions between \textsc{PolarBear} and established methods for anytime performance evaluation, such as EAF, ECDF, and AOCC.
The goal is not to draw conclusions about the specific algorithms but to verify that \textsc{PolarBear} produces conclusions consistent with established methods, while requiring weaker assumptions and providing uncertainty quantification.

\paragraph{Setup}

As an example of a typical benchmarking setup we compare seven algorithms: six modular CMA-ES variants~\cite{denobel2021} differing in step-size adaptation mechanism (\texttt{CSA}, \texttt{MSR}, \texttt{TPA}, \texttt{XNES}, \texttt{MXNES}, \texttt{LPXNES}) and random search (\texttt{RS}).
Step-size adaptation is a core component of evolution strategies with substantial impact on convergence behavior, making these variants a natural test case with expected diversity in anytime behavior.
All CMA-ES variants share a common baseline configuration, which is given in \cref{app:cmaes-config}.
\texttt{RS} is included as a sanity-check baseline, which is generally good practice; here, it also anchors interpretation by providing a natural reference point against which practical usefulness can be judged.

Instances are drawn from MA-BBOB generator~\cite{vermetten2023}, which generates problems as affine combinations of the classical BBOB functions~\cite{hansen2009}.
Crucially, this generator always has the minimal objective value $f_{\text{opt}} = 0$ for each instance, enabling min-max normalization for EAF/ECDF/AOCC aggregation.
We apply standard methodology: objective values of 1000 instances are min-max normalized per instance using the lower bound $10^{-8}$ and upper bound $10^2$, then aggregated.
The budget axis is function evaluations, discretized to $T = 200$ log-spaced timepoints from 100 up to $2000 \cdot d$ evaluations, with dimension $d = 30$.
For \textsc{PolarBear}, we use the independent Dirichlet model with Pathfinder inference, pointwise elimination, and strict resolution.
We deliberately use such a fine grid to demonstrate scalability of the independent model with Pathfinder inference.
This resolution far exceeds what is necessary for characterizing anytime behavior, i.e., $T \in [20 \isep 50]$ would suffice for most practical purposes, but illustrates that computational cost remains tractable even at high temporal resolution.
Inference parallelizes across timepoints, completing in under two minutes for $P = 1000$ instances and $n=7$ algorithms on a modern laptop CPU with 8 threads.

\paragraph{Discussion}

The ECDF is given in \cref{fig:mabbob-ecdf}.
It can be observed that no algorithm dominates over all budgets.
Specifically, \texttt{MXNES} seems to lead by a small margin for $t<\num{2000}$, after which \texttt{CSA} takes first place until $t \approx \num{20000}$.
\texttt{XNES} wins for the remaining budget up to the final $t=\num{60000}$.
The rating posterior after strict resolution is given in \cref{fig:final-posterior-mabbob}.
\textsc{PolarBear} has identified the Pareto set $\hat{\mathcal{P}} = \{\texttt{CSA}, \texttt{TPA}, \texttt{MXNES}, \texttt{XNES}\}$.
Both \texttt{RS} and \texttt{LPXNES} are clearly inferior and eliminated in round 1 and 3 or after 8 and 24 instances (cf. \cref{fig:mabbob-rounds}).
\texttt{MSR} is eliminated in round 21 or 168 instances.
Note that the posterior is smooth over time despite temporal correlation not being explicitly modelled.
ECDF and rating posterior show qualitative agreement: algorithms that appear superior in the ECDF have higher posterior win probabilities.
However, the posterior is more conservative regarding crossing points.
Where the ECDF suggests trajectories of \texttt{CSA} and \texttt{XNES} cross around $t=\num{15000}$, the posterior places the crossing later around $t=\num{30000}$ while reflecting uncertainty about the exact transition.
The earlier crossing on the ECDF is most likely an artifact of its integration of the magnitude of improvement, which is however not reliable in general (cf. \cref{sec:anytime-performance}).

AOCC (\cref{fig:mabbob-aocc}) and value posterior of the uniform preference (\cref{fig:value-mabbob}) show similar trends (as expected since they derive from the ECDF and rating posterior), but differ in details.
While AOCC slightly prefers \texttt{XNES} by mean (since the ECDF places the crossing earlier), the value posterior selects \texttt{CSA} under P2BB, expectation, and lower quantile risk profiles.
However, \texttt{XNES} only loses by a tiny margin.
The remaining ordering agrees, but AOCC only detects a small difference in mean between \texttt{CSA} and \texttt{MSR}, while the value posterior shows clear superiority of \texttt{CSA} under this preference.
\Cref{fig:value-log-mabbob} shows preference values for a log-uniform preference, weighting early performance much more heavily.
Despite \texttt{MXNES} being optimal for $t < \num{1000}$, \texttt{CSA} wins when taking the entire preference into account.

Crucially, the distributions of per-instance AOCC (and thus ECDF) values in \cref{fig:mabbob-aocc} reveal bimodality for most algorithms (except \texttt{LPXNES} and \texttt{RS}), with mass concentrated at both extremes.
We attribute this to the log-transform in objective space, which clusters instances into ``nearly solved'' (high AOCC) and ``far from solved'' (low AOCC).
This illustrates a fundamental limitation of AOCC as typically used: the mean over instances summarizes a bimodal distribution with a single number falling between the modes, where almost no actual instances lie.
The implied ``average performance'' thus corresponds to no typical behavior.
The ranking-based approach sidesteps this by estimating win probability, which is a quantity that remains interpretable regardless of the underlying performance distribution.

The EAF difference of selected algorithm pairs is given in \cref{fig:mabbob-eafdiff}.
The plots agree with \textsc{PolarBear}'s eliminations: \texttt{MXNES} dominates \texttt{LPXNES} and \texttt{RS}, while \texttt{XNES} dominates \texttt{MSR}.
Between \texttt{CSA} and \texttt{TPA}, the latter only is better for a small range around $t < 1000$, which is consistent with the posterior (cf. \cref{fig:final-posterior-mabbob}).
Notably, \texttt{TPA} never has the highest win probability at any individual timepoint, yet remains in the Pareto set because no single algorithm dominates it everywhere.
This reflects the framework's purpose: \texttt{TPA} is optimal under some strictly monotone preference, even if it never leads at any single moment.
\texttt{CSA} and \texttt{MXNES} clearly do not dominate each other since \texttt{MXNES} is better earlier, and \texttt{CSA} is better later; the same is true for \texttt{CSA} versus \texttt{XNES}, where \texttt{CSA} is better earlier, and \texttt{XNES} is better later. 

\textsc{PolarBear} took 59\% less total function evaluations ($1.72 \cdot 10^8$ vs. $4.2 \cdot 10^8$ function evaluations) compared to evaluating all seven algorithms up to the maximal budget on 1000 instances, as standard practice for MA-BBOB~\cite{vermetten2023}.
Beyond eliminating dominated algorithms to save computational budget, the sampling strategy of pointwise elimination reduces effective function evaluations by skipping resolved late timepoints.
Consequently, the posterior in \cref{fig:final-posterior-linear-mabbob} shows higher uncertainty beyond the last crossings while still being confident about the respective relation.
In total, \textsc{PolarBear} sampled 1656 different instances until full resolution of all pairs.
The exact allocation of function evaluations and instances is given in \cref{fig:mabbob-samples}, which shows that \texttt{CSA} and \texttt{XNES} were only evaluated on 168 instances with more than half the total budget (i.e., $t>\num{32000}$).
In general, only Pareto set members were sampled on more than 1000 instances, necessitated by the strict resolution mode, which requires resolving all pairwise relationships to high certainty.
Note that these savings are problem-dependent: they arise because several algorithms were clearly dominated and eliminated early, and because late timepoints could be resolved with comparatively few instances.
Problems with more similar algorithms would require more samples, which is entirely the point of allocating effort adaptively.
Traditional methods lack such a principled stopping criterion and require exhaustive evaluation before analysis.
\textsc{PolarBear} terminates when comparisons are resolved, concentrating effort exactly where it is needed.

This case study shows that traditional anytime analysis and \textsc{PolarBear} largely agree on which algorithms are competitive, validating the ranking-based approach against established methodology.
However, \textsc{PolarBear} offers three advantages: uncertainty quantification, no assumptions about objective scale, and adaptive allocation of computational effort.

\begin{figure}
  \centering
  \begin{subfigure}[b]{0.48\textwidth}
      \centering
      \includegraphics[width=\textwidth]{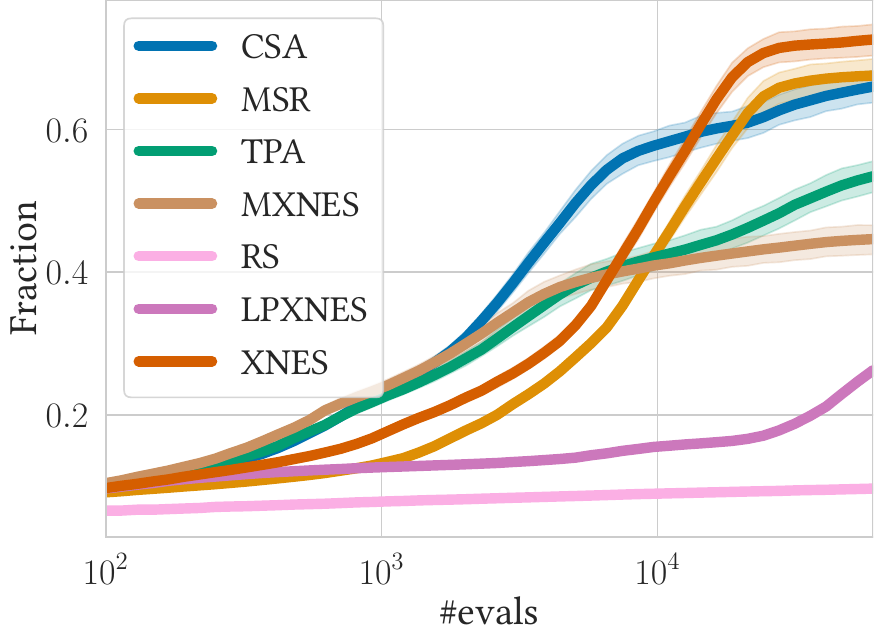}
      \caption{ECDF.}
      \label{fig:mabbob-ecdf}
  \end{subfigure}
  \hfill
  \begin{subfigure}[b]{0.48\textwidth}
      \centering
      \includegraphics[width=\textwidth]{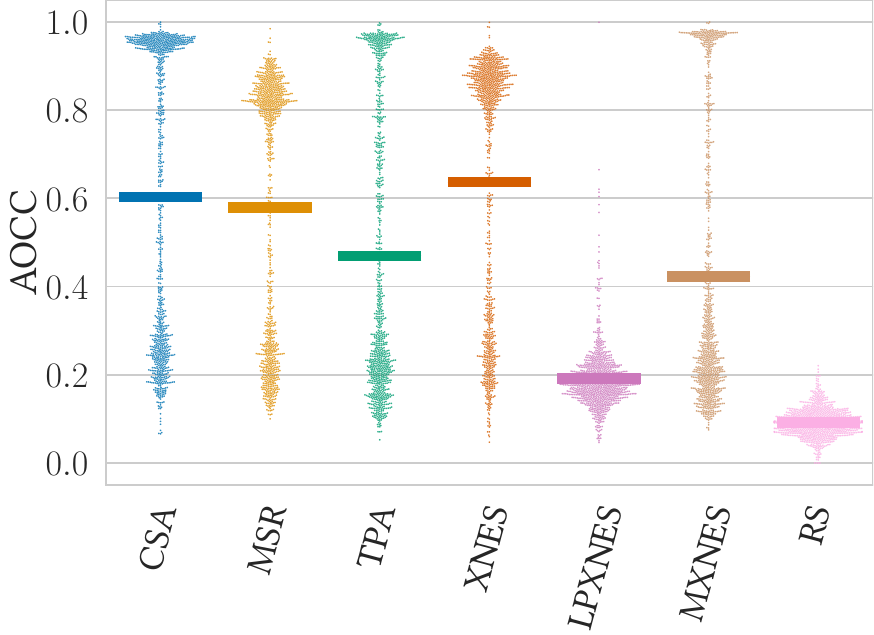}
      \caption{Mean AOCC.}
      \label{fig:mabbob-aocc}
  \end{subfigure}
  \caption{ECDF (a) and AOCC (b) for the MA-BBOB case study, using 1000 instances. The ECDF shows the mean attained fraction of the interval $[10^{-8},10^2]$, scaled logarithmically. The shaded region is the 95\% confidence interval of the mean, constructed by bootstrapping.
  The mean AOCC (dashes) is the area under the ECDF, using a linear x-axis.
  Points are the full AOCC distribution over instances.
  }
  \label{fig:mabbob-ecdf-aocc}
\end{figure}

\begin{figure}
  \centering
  \begin{subfigure}[b]{0.32\textwidth}
      \centering
      \includegraphics[width=\textwidth]{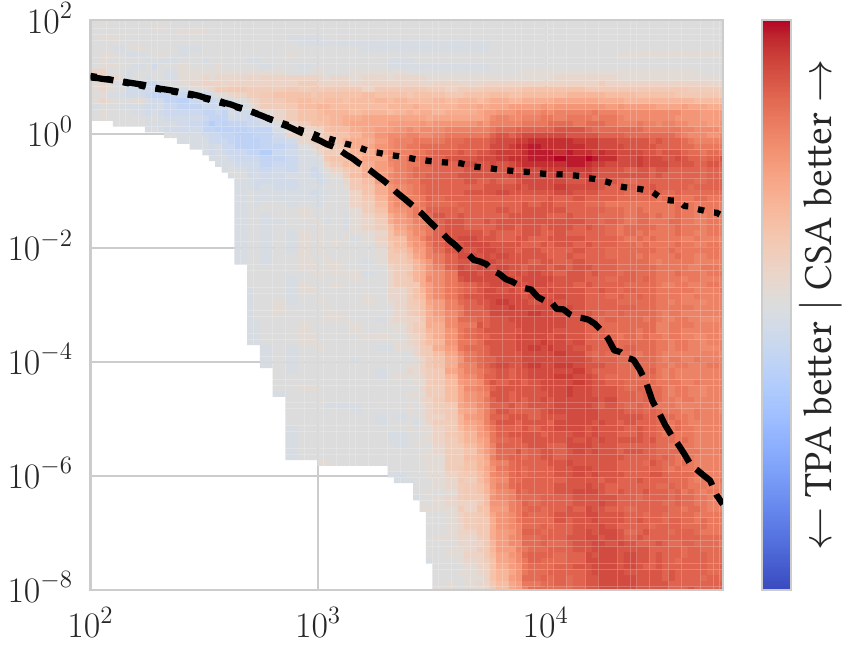}
      \caption{\texttt{CSA} vs. \texttt{TPA}}
      \label{fig:mabbob-eafdiff-CSA-TPA}
  \end{subfigure}
  \hfill 
  \begin{subfigure}[b]{0.32\textwidth}
      \centering
      \includegraphics[width=\textwidth]{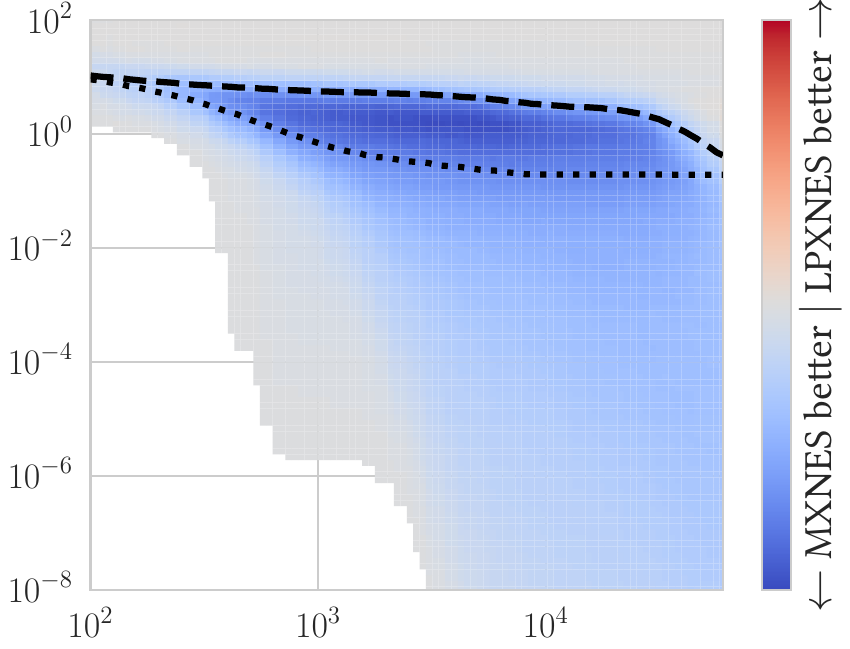}
      \caption{\texttt{LPXNES} vs. \texttt{MXNES}}
      \label{fig:mabbob-eafdiff-LPXNES-MXNES}
  \end{subfigure}
  \hfill
  \begin{subfigure}[b]{0.32\textwidth}
      \centering
      \includegraphics[width=\textwidth]{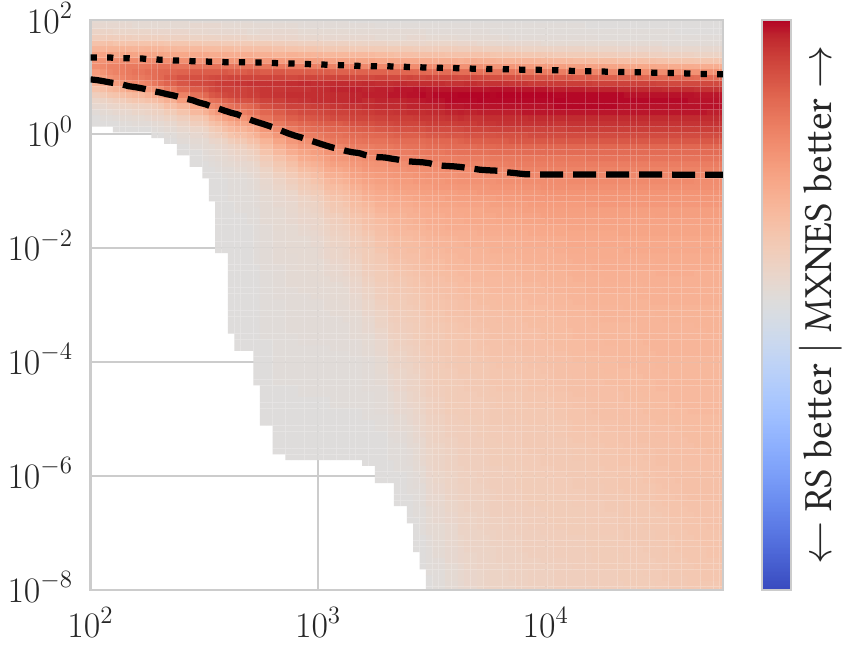}
      \caption{\texttt{MXNES} vs. \texttt{RS}}
      \label{fig:mabbob-eafdiff-MXNES-RS}
  \end{subfigure}
  \hfill 
  \begin{subfigure}[b]{0.32\textwidth}
      \centering
      \includegraphics[width=\textwidth]{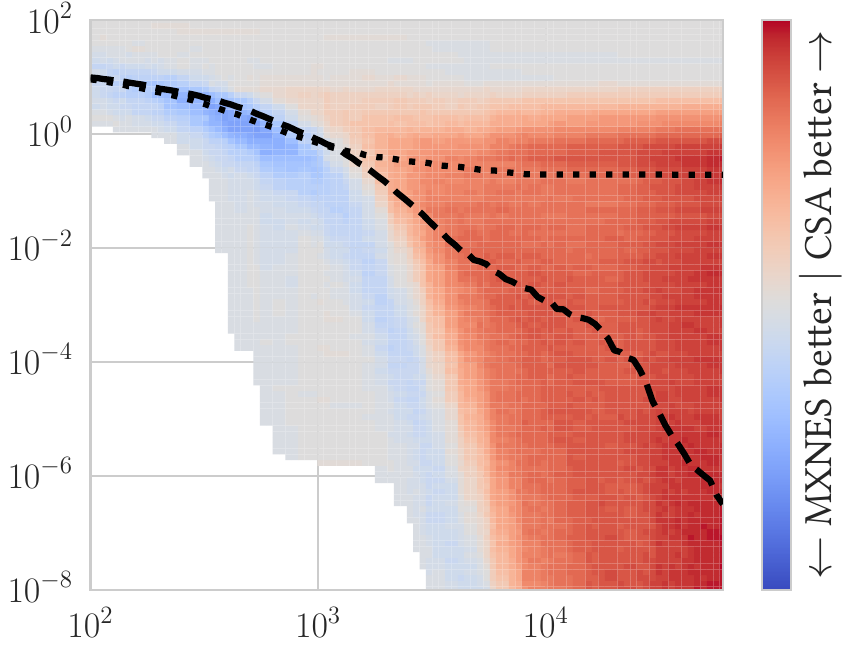}
      \caption{\texttt{CSA} vs. \texttt{MXNES}}
      \label{fig:mabbob-eafdiff-CSA-MXNES}
  \end{subfigure}
  \hfill 
  \begin{subfigure}[b]{0.32\textwidth}
      \centering
      \includegraphics[width=\textwidth]{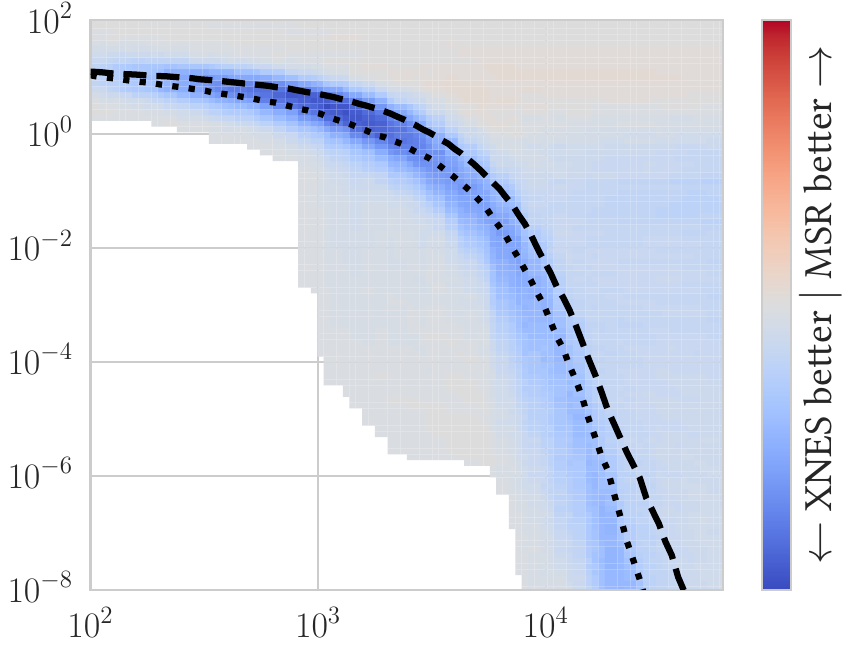}
      \caption{\texttt{MSR} vs. \texttt{XNES}}
      \label{fig:mabbob-eafdiff-MSR-XNES}
  \end{subfigure}
  \hfill 
  \begin{subfigure}[b]{0.32\textwidth}
      \centering
      \includegraphics[width=\textwidth]{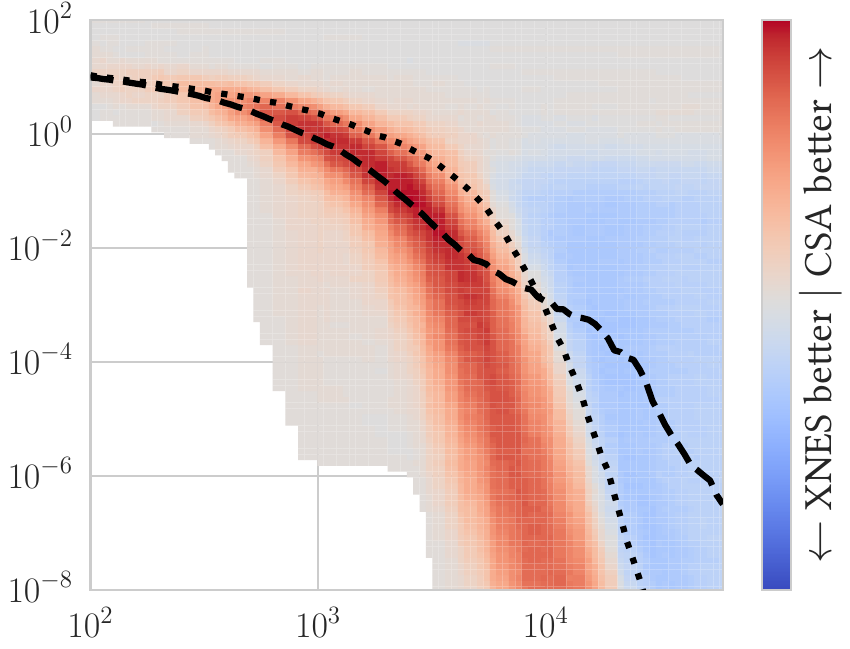}
      \caption{\texttt{CSA} vs. \texttt{XNES}}
      \label{fig:mabbob-eafdiff-CSA-XNES}
  \end{subfigure}
  \caption{EAF difference of selected algorithm pairs of the MA-BBOB case study on 1000 instances. In the EAF, color encodes the proportion of runs that attained at least the objective target $z - f_{\text{opt}}$ (y-axis) at a specific number of function evaluations (x-axis). Here, the difference of two such EAF surfaces is shown, with color encoding the difference in run proportions (red means  first algorithm better, blue means second better, gray means no difference). The dashed line shows the median EAF contour (i.e., for each $t$, the $z - f_{\text{opt}}$ which 50\% of runs have attained) of the first algorithm; dotted line shows the same of the second algorithm. }
  \label{fig:mabbob-eafdiff}
\end{figure}

\begin{figure}
  \centering
  \begin{subfigure}[b]{0.32\textwidth}
      \centering
      \includegraphics[width=\textwidth]{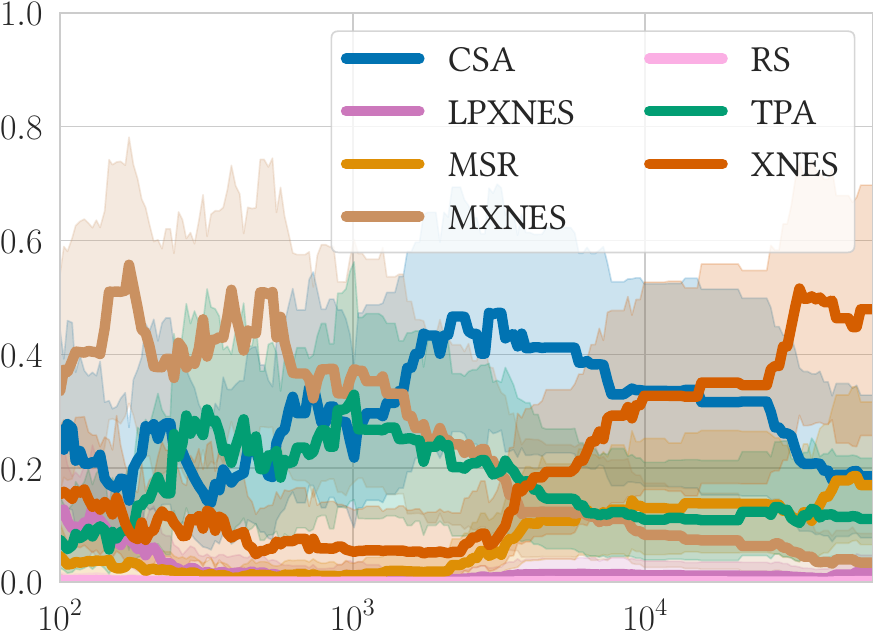}
      \caption{Round 1: \texttt{RS} eliminated.}
      \label{fig:posterior-mabbob-1}
  \end{subfigure}
  \hfill
  \begin{subfigure}[b]{0.32\textwidth}
      \centering
      \includegraphics[width=\textwidth]{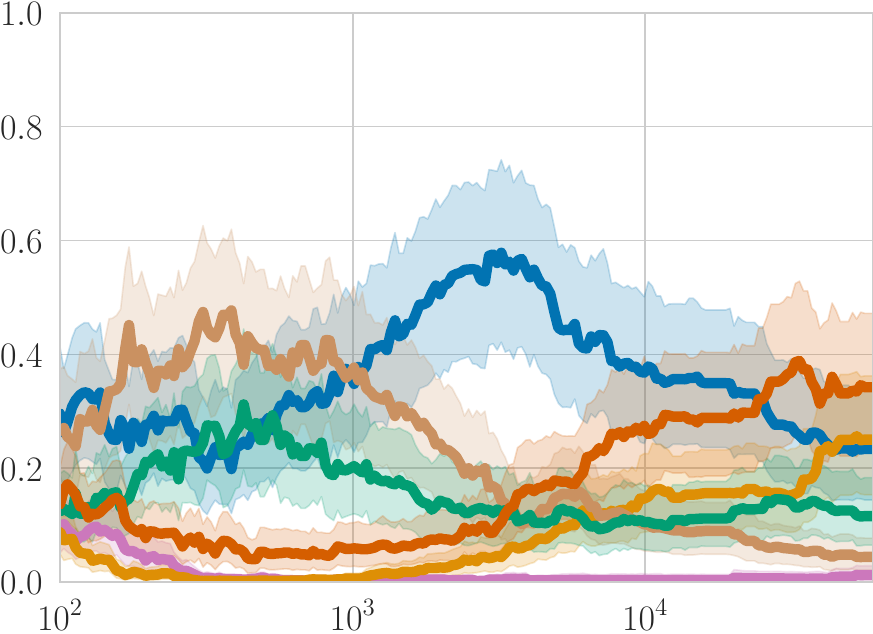}
      \caption{Round 3: \texttt{LPXNES} eliminated.}
      \label{fig:posterior-mabbob-3}
  \end{subfigure}  
  \hfill
  \begin{subfigure}[b]{0.32\textwidth}
      \centering
      \includegraphics[width=\textwidth]{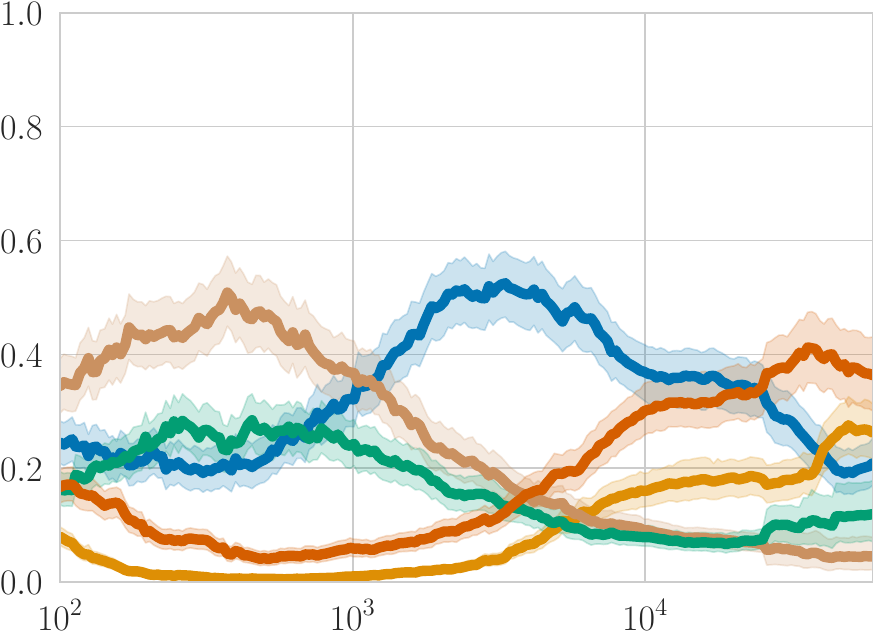}
      \caption{Round 21: \texttt{MSR} eliminated.}
      \label{fig:posterior-mabbob-21}
  \end{subfigure}
  \hfill
  \begin{subfigure}[b]{0.32\textwidth}
      \centering
      \includegraphics[width=\textwidth]{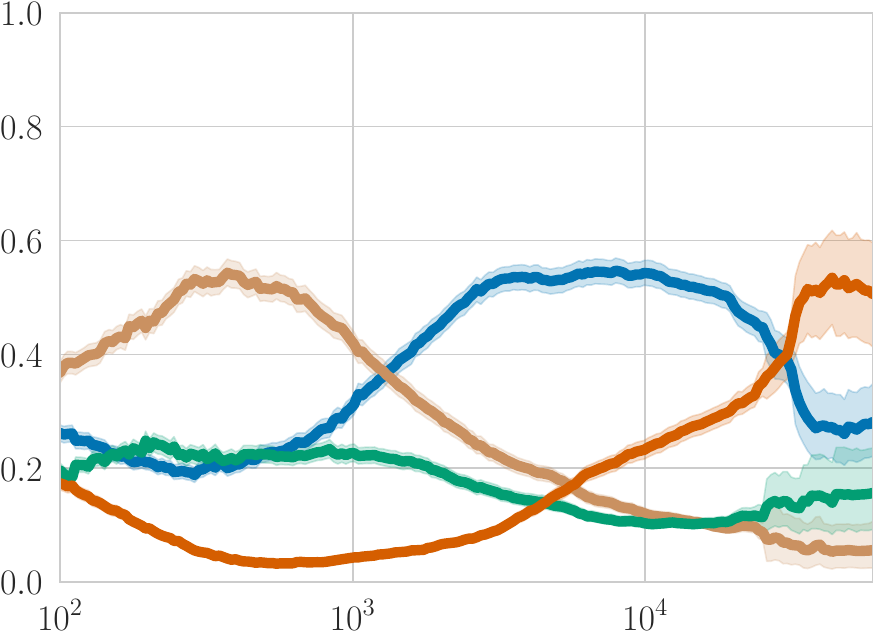}
      \caption{Round 47: \texttt{XNES} is resolved.}
      \label{fig:posterior-mabbob-47}
  \end{subfigure}
  \hfill
  \begin{subfigure}[b]{0.32\textwidth}
      \centering
      \includegraphics[width=\textwidth]{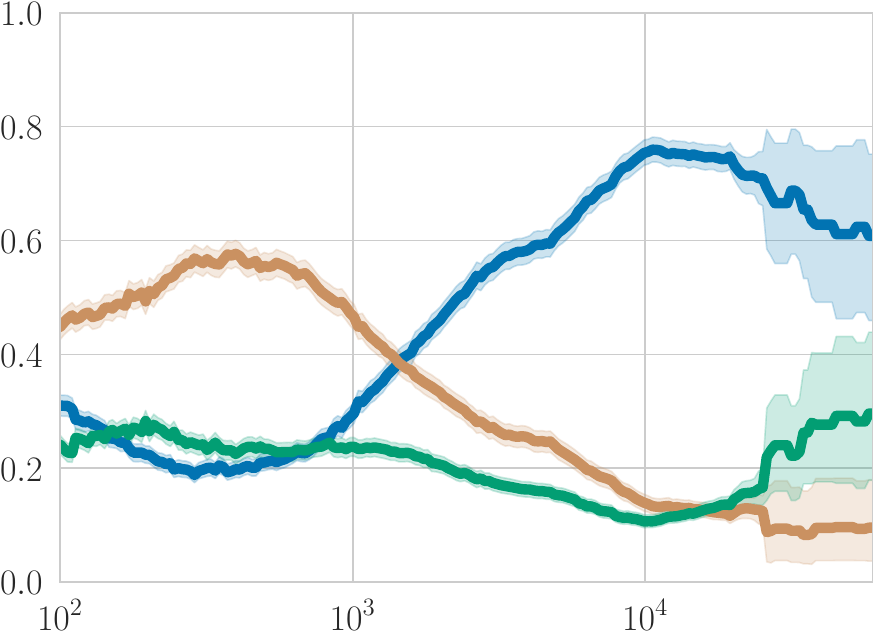}
      \caption{Round 48: \texttt{CSA} is resolved.}
      \label{fig:posterior-mabbob-48}
  \end{subfigure}
  \hfill
  \begin{subfigure}[b]{0.32\textwidth}
      \centering
      \includegraphics[width=\textwidth]{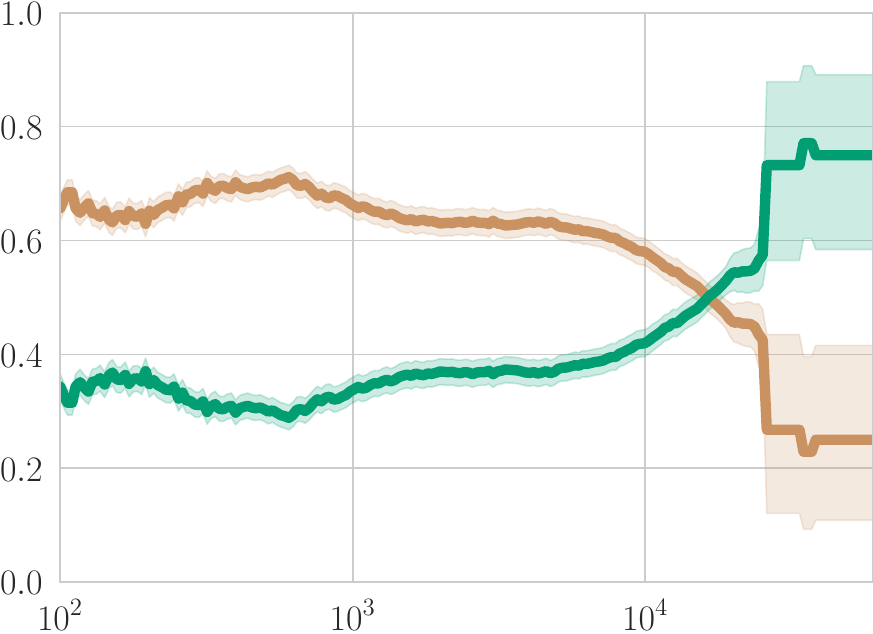}
      \caption{Round 51: \texttt{TPA}, \texttt{MXNES} resolved.}
      \label{fig:posterior-mabbob-51}
  \end{subfigure}
  \caption{Key \textsc{PolarBear} rounds for the MA-BBOB case study. The figures show the mean and 95\% credible interval of the rating posterior (y-axis) over function evaluations (x-axis). A resolved algorithm has all pairwise relationships determined; for non-eliminated algorithms, this confirms Pareto set membership.}
  \label{fig:mabbob-rounds}
\end{figure}

\begin{figure}
  \centering
  \begin{subfigure}[b]{0.48\textwidth}
      \centering
      \includegraphics[width=\textwidth]{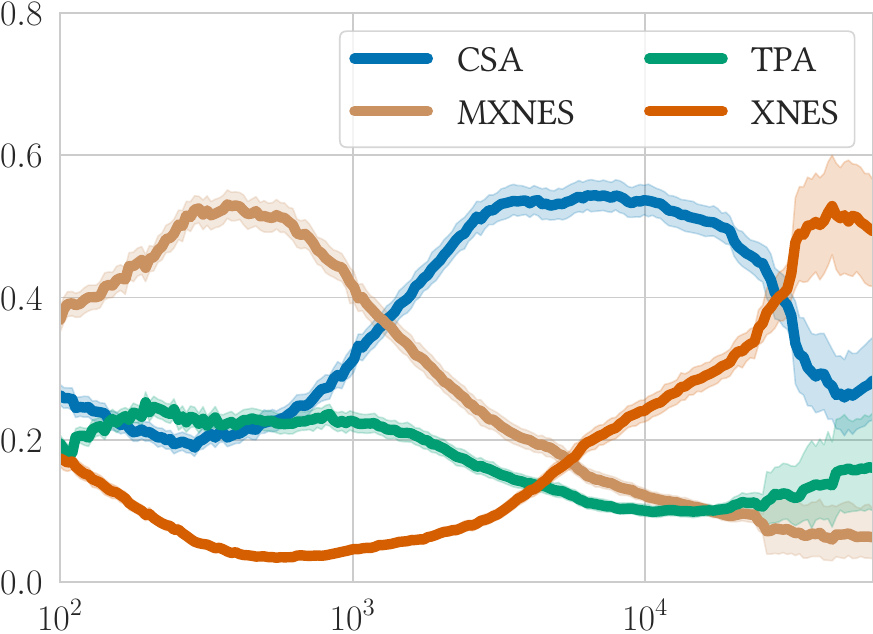}
      \caption{$\hat{\mathcal{P}} = \{\texttt{CSA}, \texttt{TPA}, \texttt{MXNES}, \texttt{XNES}\}$.}
      \label{fig:final-posterior-mabbob}
  \end{subfigure}
  \hfill
    \begin{subfigure}[b]{0.48\textwidth}
      \centering
      \includegraphics[width=\textwidth]{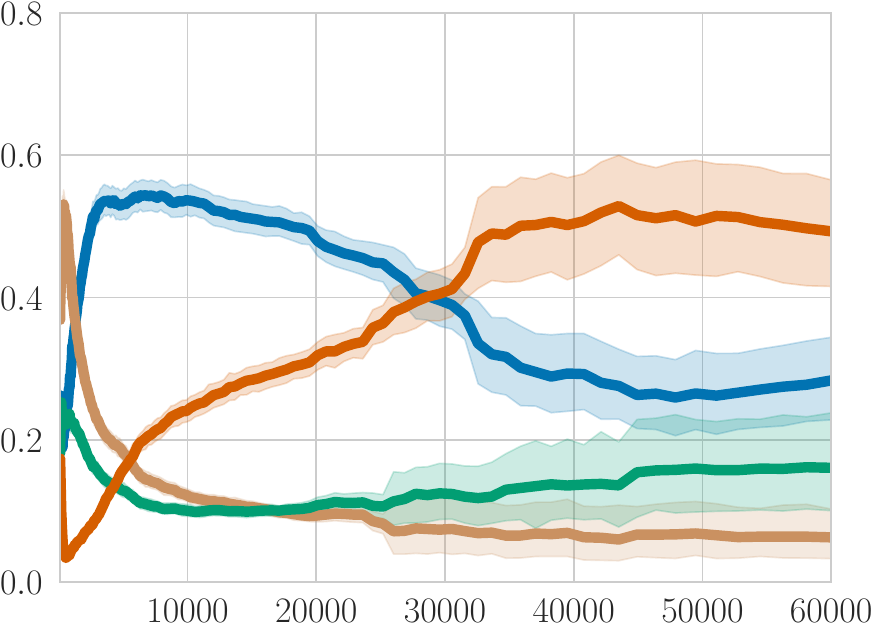}
      \caption{\cref{fig:final-posterior-mabbob} with linear x-axis.}
      \label{fig:final-posterior-linear-mabbob}
  \end{subfigure}
  \hfill 
  \begin{subfigure}[b]{0.48\textwidth}
      \centering
      \includegraphics[width=\textwidth]{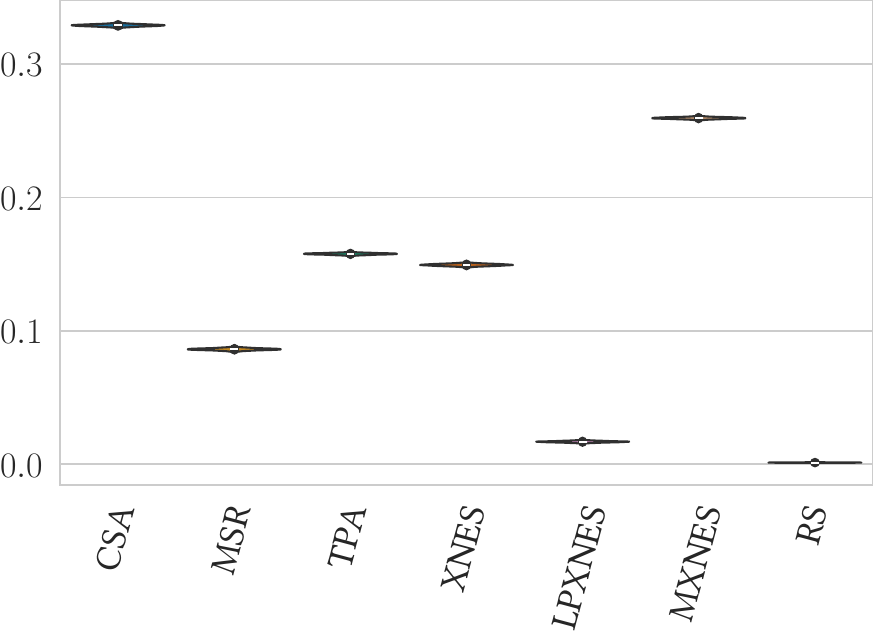}
      \caption{$\symbf{V}^u$ for $u(\theta) = \int 1/t \, \theta(t) \, dt$.}
      \label{fig:value-log-mabbob}
  \end{subfigure}
  \hfill
  \begin{subfigure}[b]{0.48\textwidth}
      \centering
      \includegraphics[width=\textwidth]{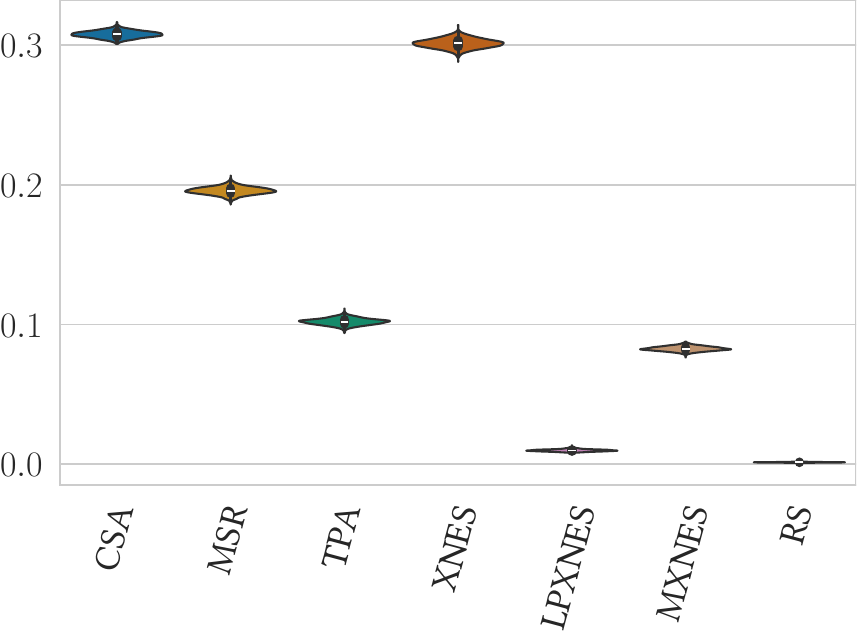}
      \caption{$\symbf{V}^u$ for $u(\theta) = \int \theta(t) \, dt$.}
      \label{fig:value-mabbob}
  \end{subfigure}
  \caption{Figure (a) and (b) show the mean and 95\% credible interval of the final rating posterior of the anytime Pareto set for the MA-BBOB case study over function evaluations on a log and linear x-axis, respectively. Figure (c) and (d) show the resulting posterior over preference values for a uniform and log-uniform (i.e., $w(t)=1/t$) budget preference, being the area under the curve of (a) and (b), respectively. Algorithms in the Pareto set are optimal under different time preferences; selection among them requires specifying such a preference at deployment.}
  \label{fig:mabbob-posterior}
\end{figure}

\begin{figure}
  \centering
  \includegraphics[width=0.8\textwidth]{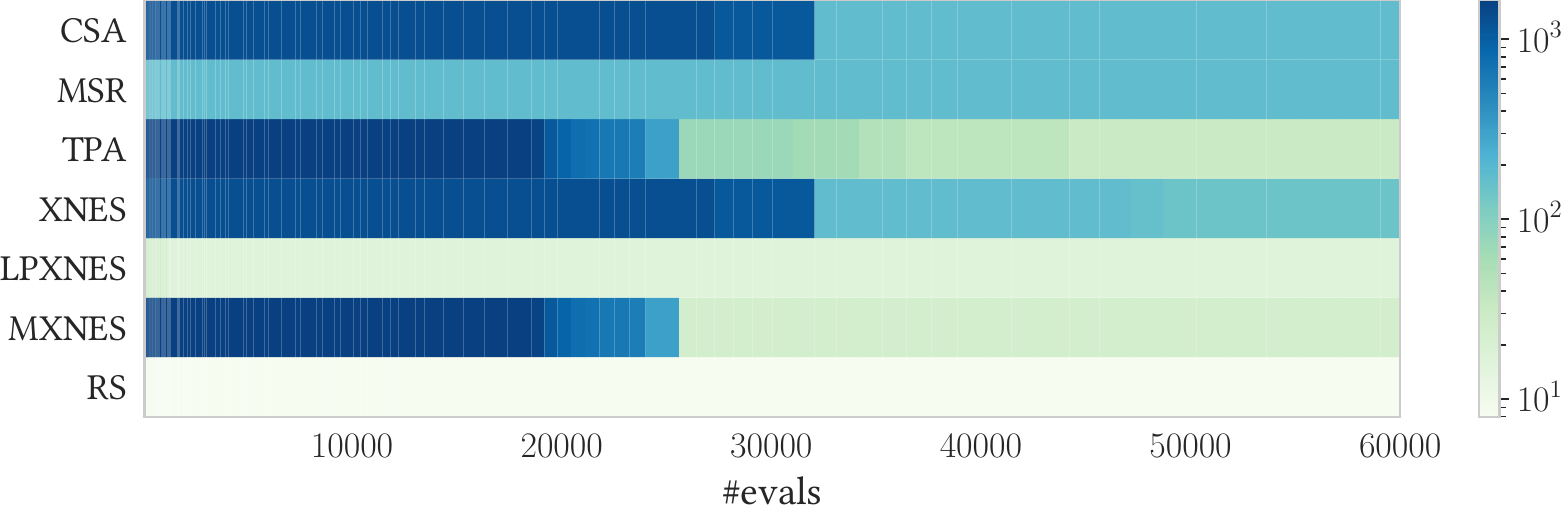}
  \caption{Allocation of function evaluations and instances per algorithm by \textsc{PolarBear} for the MA-BBOB case study. Color shows the number of instances that saw the respective amount of function evaluations. Pointwise elimination allows reducing sample costs even for non-dominated algorithms by evaluating algorithms only up to their largest unresolved budget. This adaptive allocation achieves 59\% fewer total function evaluations compared to the standard practice of running all algorithms to completion on 1000 instances.}
  \label{fig:mabbob-samples}
\end{figure}

\subsection{Comparing Algorithms under Arbitrary Instance Distributions (GP-BBOB)}

The third case study demonstrates \textsc{PolarBear} on a setting where traditional anytime evaluation methods are not easily applicable: an arbitrary instance distribution with unknown global optima (or unknown objective scale in general) and highly heterogeneous dimensionality.

\paragraph{Setup}

We compare eight algorithms: seven modular CMA-ES variants~\cite{denobel2021} differing in covariance matrix adaptation mechanism (cf. \cref{tab:matrix-adaptation}), and random search (\texttt{RS}) as a baseline.
All CMA-ES variants share a common baseline configuration, wich is given in \cref{app:cmaes-config}.
Covariance matrix adaptation is the computational bottleneck of CMA-ES, with full adaptation scaling as $O(d^2)$ per generation and $O(d^3)$ for eigendecomposition.
Cheaper approximations sacrifice adaptation quality for speed.
The practical question for our hypothetical situation is: does expensive adaptation pay off in wall-clock time, particularly on the high-dimensional problems that are of interest for us?

Instances are drawn from GP-BBOB, a custom generator designed to simulate a deployment scenario where objective scale and global optima are unknown.
GP-BBOB constructs problems as spatially-varying sparse mixtures of BBOB functions, producing nonstationary landscapes where the active problem components change across the search domain.
Unlike MA-BBOB, global optima are not analytically recoverable. Dimensionality is drawn from $d \sim \lfloor \mathcal{N}(240, 30^2) \rfloor$, truncated to $[2, 320]$, reflecting heterogeneous problem scales.
We make no claim that this distribution resembles any particular real-world application; it serves as an exemplary complex distribution to demonstrate that \textsc{PolarBear} operates where traditional methods struggle.
Full details on the generator are given in \cref{app:gpbbob}.

The budget axis is wall-clock time, discretized to $T=30$ uniformly spaced timepoints from 0.1 to 30 seconds.
This choice reflects practical deployment where actual computational cost, not only evaluation count, is the binding constraint.
For \textsc{PolarBear}, we use the B-spline model with MCMC inference (4 chains; 1000 warmup samples, 2000 samples per chain), joint elimination, and crossing resolution.

\paragraph{Discussion}

The posterior at key rounds is given in \cref{fig:gpbbob}, with \cref{fig:gpbbob-dominance-evolution} showing the evolution of dominance posterior probabilities over rounds.
A total of 14 rounds or 528 instances were necessary to identify the Pareto set $\hat{\mathcal{P}} = \{\texttt{NONE}, \texttt{SEP}\}$.
This directly answers the motivating question: on this specific high-dimensional instance distribution under wall-clock budget, expensive covariance adaptation does \emph{not} pay off.
The $O(d^3)$ eigendecomposition cost of \texttt{COV} and the $O(d^2)$ alternatives (\texttt{MAT}, \texttt{CHO}, \texttt{CMSA}) are dominated by the $O(d)$ diagonal adaptation (\texttt{SEP}) and no adaptation (\texttt{NONE}).
Notably, \texttt{NAT} is eliminated first, likely due to the computational cost of the matrix exponential.
Under uniform time preference, \texttt{NONE} and \texttt{SEP} are nearly indistinguishable, with \texttt{NONE} winning by a small margin (\cref{fig:value-posterior-gpbbob}).
The choice between them depends on whether early or late budgets matter more: \texttt{SEP} should be preferred if the budget lies below 10 seconds, and \texttt{NONE} for longer budgets.
This is detected by the crossing resolution, resolving this pair early and ending the race after \texttt{CHO} is eliminated in round 14.

This finding, i.e., that cheap or no adaptation dominates on this particular high-dimensional instance distribution under wall-clock budget on our specific hardware, does not necessarily generalize to other distributions or hardware.
But that is precisely the point: practitioners can now answer such questions for their specific conditions.
Without \textsc{PolarBear}, answering this question would require substantial methodological compromise.
Traditional anytime analysis requires known global optima for normalization, forcing practitioners to either construct artificial benchmarks with analytically tractable optima, sacrificing realism, or use observed minima as proxies, introducing instability as new algorithms shift the reference.
Wall-clock time as budget axis is rarely used because it depends on hardware and implementation details that confound cross-study comparison.
Heterogeneous dimensionality complicates aggregation further, as normalizing across instances of different scales conflates problem difficulty with dimensionality effects.
Finally, all algorithms must be run to completion on all instances before analysis can begin, since there is no principled early stopping.
\textsc{PolarBear} sidesteps these constraints: rankings require no normalization, any budget axis is valid, heterogeneous instances aggregate coherently, and adaptive sampling terminates when pairwise relationships are resolved.
Rather than measuring on controlled benchmarks and hoping conclusions transfer, practitioners can measure the actual quantity of interest, such as wall-clock performance on the target instance distribution and deployment hardware.

\begin{table}
  \centering
  \caption{Modular CMA-ES variants compared in the GP-BBOB case study, differing in covariance matrix adaptation mechanism. Complexity is per-generation cost in dimension $d$.}
  \label{tab:matrix-adaptation}
  \begin{tabular}{llc}
    \toprule
    Variant & Adaptation mechanism & Complexity \\
    \midrule
    \texttt{COV} & Full covariance with eigendecomposition & $O(d^3)$ \\
    \texttt{SEP} & Diagonal only & $O(d)$ \\
    \texttt{MAT} & Direct transformation matrix (rank-one + rank-$\mu$) & $O(d^2)$ \\
    \texttt{CHO} & Cholesky factor with rank updates & $O(d^2)$ \\
    \texttt{CMSA} & Exponential moving average + Cholesky recomputation & $O(d^2)$ \\
    \texttt{NAT} & Matrix exponential of natural gradient & $O(d^3)$ \\
    \texttt{NONE} & No matrix adaptation (step-size only) & $O(d)$ \\
    \bottomrule
  \end{tabular}
\end{table}

\begin{figure}
  \centering
  \begin{subfigure}[b]{0.32\textwidth}
      \centering
      \includegraphics[width=\textwidth]{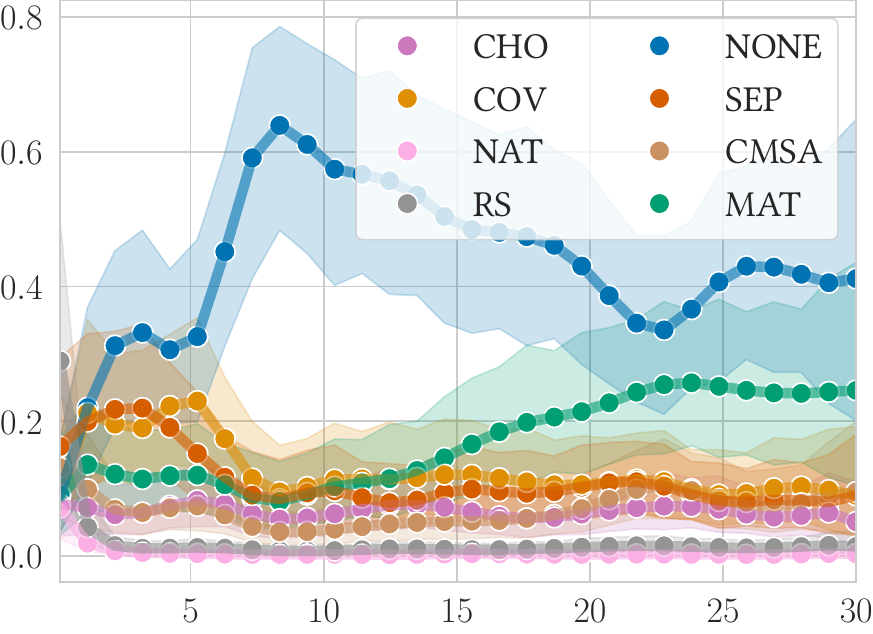}
      \caption{Round 2: \texttt{NAT} eliminated.}
      \label{fig:posterior-gpbbob-1}
  \end{subfigure}
  \hfill
  \begin{subfigure}[b]{0.32\textwidth}
      \centering
      \includegraphics[width=\textwidth]{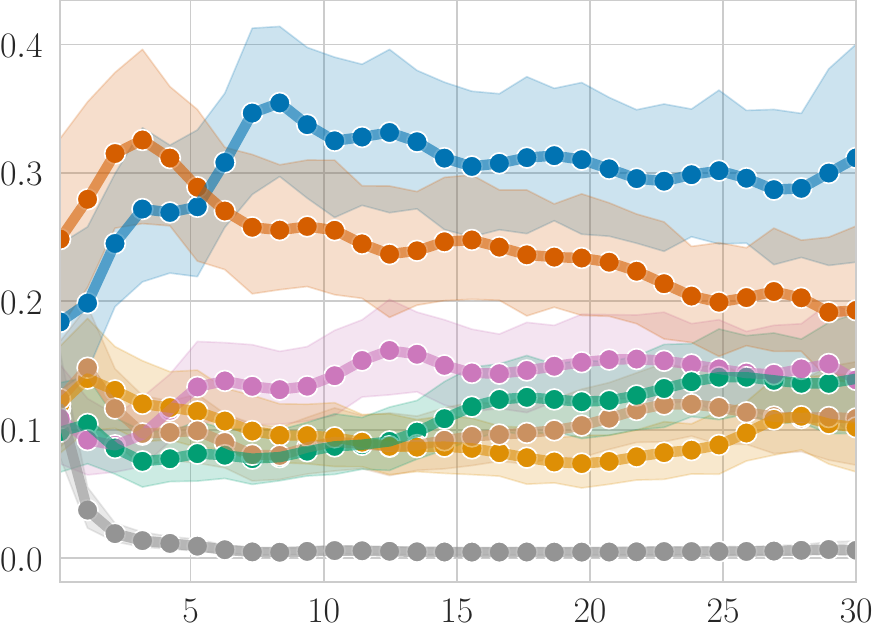}
      \caption{Round 5: \texttt{RS}, \texttt{MAT} eliminated.}
      \label{fig:posterior-gpbbob-5}
  \end{subfigure}  
  \hfill
  \begin{subfigure}[b]{0.32\textwidth}
      \centering
      \includegraphics[width=\textwidth]{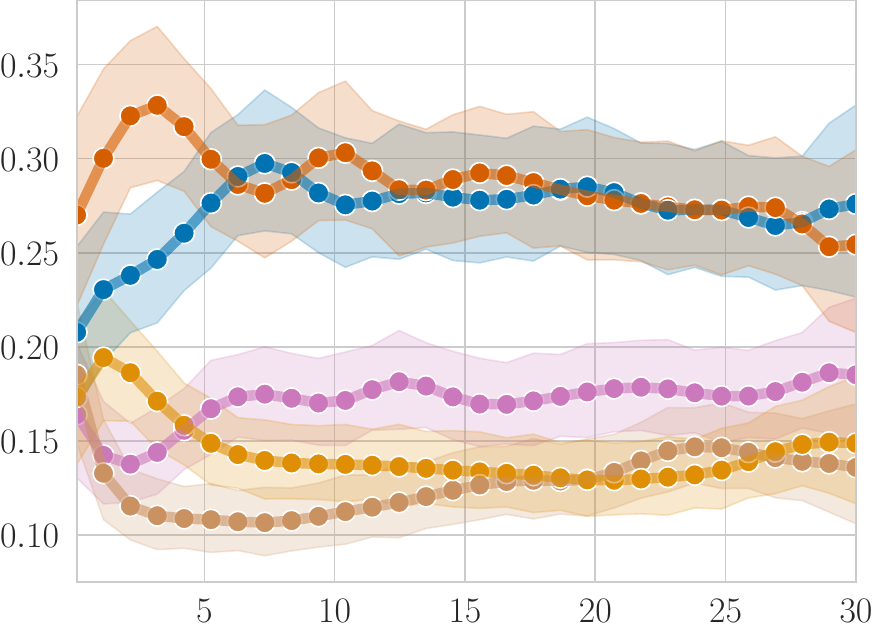}
      \caption{Round 8: \texttt{COV}, \texttt{CMSA} eliminated.}
      \label{fig:posterior-gpbbob-8}
  \end{subfigure}
  \hfill
  \begin{subfigure}[b]{0.32\textwidth}
      \centering
      \includegraphics[width=\textwidth]{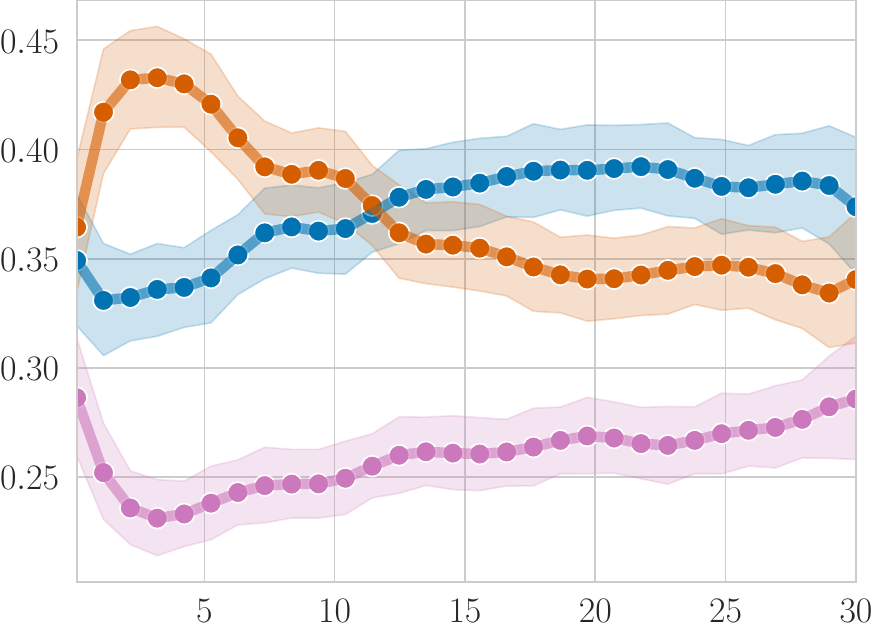}
      \caption{Round 14: \texttt{CHO} eliminated.}
      \label{fig:posterior-gpbbob-14}
  \end{subfigure}
  \hfill
  \begin{subfigure}[b]{0.32\textwidth}
      \centering
      \includegraphics[width=\textwidth]{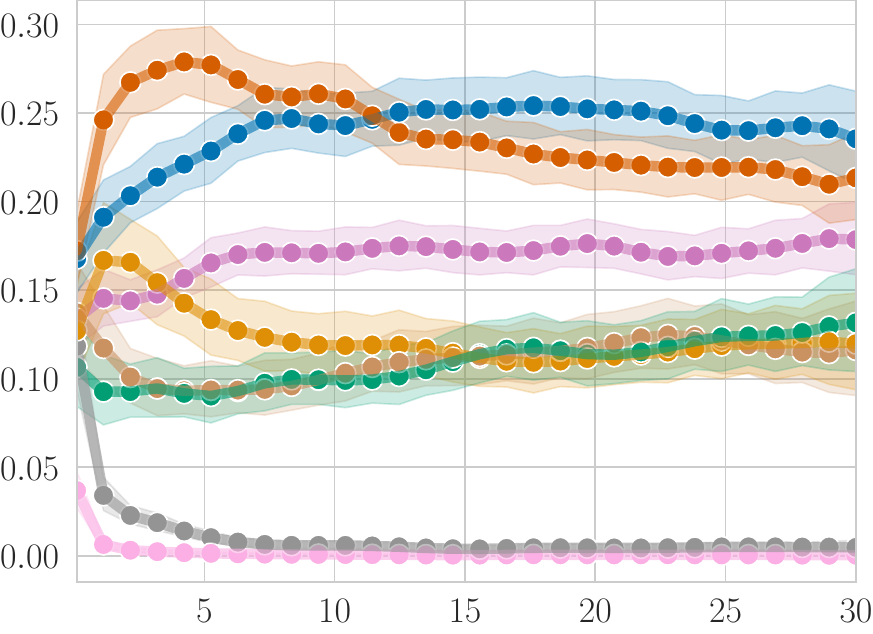}
      \caption{$\hat{\mathcal{P}} = \{\texttt{NONE}, \texttt{SEP}\}$.}
      \label{fig:final-posterior-gpbbob}
  \end{subfigure}
  \hfill
  \begin{subfigure}[b]{0.32\textwidth}
      \centering
      \includegraphics[width=\textwidth]{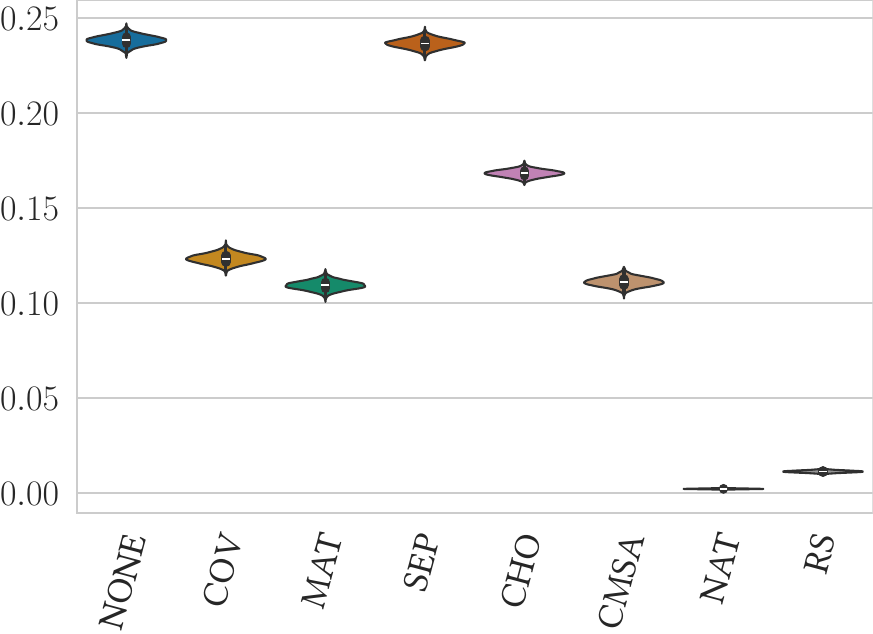}
      \caption{$\symbf{V}^u$ for $u(\theta) = \int \theta(t) \, dt$.}
      \label{fig:value-posterior-gpbbob}
  \end{subfigure}
  \caption{Key \textsc{PolarBear} rounds and posteriors for the GP-BBOB case study. Figures (a) to (e) show the mean and 95\% credible interval of the rating posterior (y-axis) over wall-clock time in seconds (x-axis). Figure (f) shows the resulting posterior over preference values for a uniform budget preference.}
  \label{fig:gpbbob}
\end{figure}

\begin{figure}
    \centering
    \includegraphics[width=0.8\linewidth]{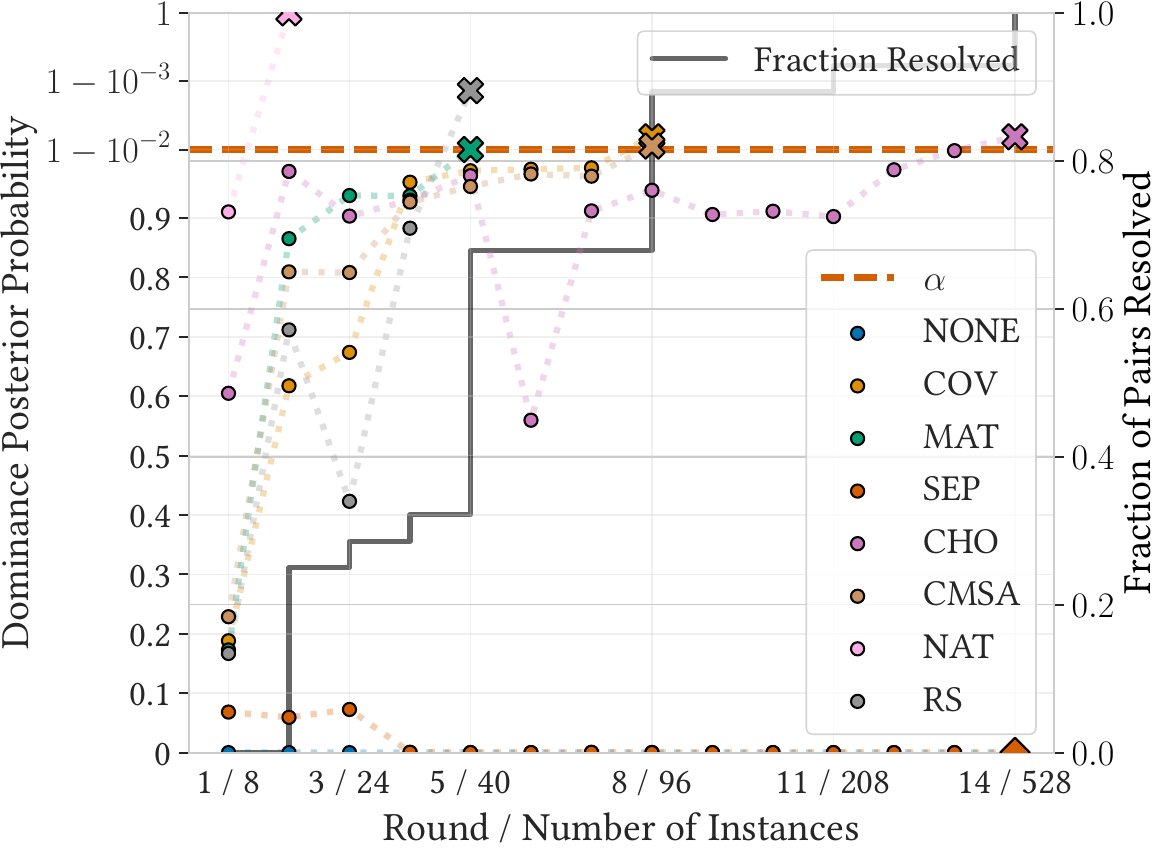}
    \caption{Dominance posterior probabilities over rounds for the GP-BBOB case study. Conventions as in \cref{fig:synthetic-dominance-evolution}. The drop in dominance posterior probability of \texttt{CHO} in round 6 can be attributed to high uncertainty whether it might cross \texttt{NONE} and \texttt{SEP} at 0.1s and 30s, respectively, given current observations (cf. \cref{fig:posterior-gpbbob-5}). }
    \label{fig:gpbbob-dominance-evolution}
\end{figure}

\section{Related Work}

Our approach directly builds on the Bayesian ranking-model perspective introduced by \citet{calvo2018,calvo2019}, who advocate PL models as a principled way to transform repeated-run outcomes into probabilistic statements about algorithm orderings, including calibrated uncertainty and winning probabilities.
\citet{rojas-delgado2022} further develop this view by emphasizing inference on relative orderings and expanding posterior summaries for algorithm comparison.

A close point of comparison is the work by \citet{jesus2020}, which also targets dynamic (deployment-time) interruption preferences, but solves it in a very different way.
They model an anytime algorithm by an empirical performance profile over the time--quality plane, with quality being the best objective value obtained if the run is interrupted at a specific time.
A decision maker specifies a utility density (known only at selection time), and the profile is reduced to a scalar score via a weighted volume.
The algorithm maximizing this volume is selected for deployment, where the performance profile for a new instance is predicted by aggregating traces from similar training instances.
Because the profile is defined over a common quality axis, this approach relies on normalization (in their case, relative hypervolume using best-known values) to make qualities comparable across instances.

\citet{fawcett2023} derive statistically robust rankings from competition data via bootstrap resampling with Holm--Bonferroni correction.
\citet{rook2024} extend this idea to multi-objective assessment by replacing scalar scores with non-dominated sorting: each bootstrap replicate induces non-dominated layers, and solvers are robustly ranked based on their (replicate-wise) non-dominated ranks and dominance-based testing.
Building on this line, \citet{vermetten2025} apply robust ranking \emph{over time} by repeating the (single- or multi-objective) robust ranking procedure at a sequence of fixed evaluation budgets, yielding a time series of snapshot rankings (with ties) that can change across budgets.
A key limitation of budget-sliced robust ranking is that it yields a collection of budget-conditioned total preorders; interpreting anytime behavior then typically relies on comparing ranks across budgets.
However, ranks (and rank differences) are inherently portfolio-dependent (i.e., violate IIA): adding or removing a solver can change all ranks even if the underlying pairwise relation between two solvers is unchanged.
It therefore attributes meaning to budget-to-budget changes in 
portfolio-relative ranks.

\citet{ravber2024} propose rating-based confidence bands for dynamic benchmarking: at each evaluation cutpoint $t$, algorithm runs are converted into win/loss/tie outcomes using error-to-known-optimum, and a Glicko-2 rating $R_t$ with rating deviation $RD_t$ is computed, yielding rating intervals $[R_t\pm 2RD_t]$; plotting these intervals over $t$ provides an interpretable visualization of performance changes and a heuristic interpretation of significance across stages of execution via interval overlap.
It is therefore fundamentally a sequence of budget-conditioned snapshot rankings (one tournament per cutpoint) rather than inference about anytime dominance.

\citet{yan2022} analyze benchmarking paradoxes arising when conclusions change as solvers are added or removed.
They trace this to algorithm-dependent success filters (e.g., defining success relative to the best value in the pool) and propose algorithm-independent filters anchored to known optima or fixed targets.
This achieves set-invariance but requires problem-side anchors.
The PL model provides invariance through model structure: IIA guarantees that pairwise win probabilities are unaffected by the presence of other algorithms, without requiring known optima.

\citet{campelo2020} provide a principled frequentist design algorithm comparison, including (1) instance-level repetition allocation that greedily reduces the worst standard error among paired comparisons until a target standard error is met, and (2) instance sample-size calculations that control family-wise error (e.g., via Holm) while achieving a desired power for detecting a minimally relevant effect size.
Our methodology is similar in spirit in that it adaptively allocates evaluations until a worst-case stopping criterion is satisfied, but differs in the object of inference and stopping rule: rather than powering mean-difference tests at a fixed performance indicator, we terminate when pairwise relations are sufficiently resolved to support anytime dominance claims (including practical equivalence via a ROPE), which we interpret as minimizing worst-case regret over preference functionals.
Moreover, we exploit temporal structure (e.g., early detection of crossings) and allow posterior-driven, user-specified early stopping criteria that are not captured by fixed-$\alpha$/power design targets.

\citet{eftimov2025} propose an online heuristic for choosing the number of repeated runs of a stochastic optimizer on a fixed instance.
They incrementally add runs until the sample skewness (after centering by the sample mean and optionally removing outliers) lies within a preset band, treating approximate symmetry around the mean as an indicator that the collected runs are sufficiently representative.
However, outcome distributions are often bounded, heavy-tailed, or mixture-like (e.g., rare very good runs), for which persistent skewness is expected.
Symmetry is thus not directly tied to the decision-relevant uncertainty.
Accordingly, our stopping criterion is formulated in terms of resolving pairwise relations to the desired posterior confidence, rather than enforcing a distribution-shape property of marginal outcomes.

Classical racing methods such as F-Race~\cite{birattari2010} eliminate candidates online by evaluating all surviving candidates on a growing sequence of instances (a blocked design) and applying nonparametric tests on within-instance ranks; when a family-wise test detects differences, candidates significantly worse than the current best-ranked one are discarded and the race continues.
Iterated racing, as implemented in \texttt{irace}~\cite{lopez-ibanez2016}, wraps this idea in an outer loop that repeatedly samples new candidate configurations from an adaptive model and uses racing as a selection heuristic inside each iteration; importantly, the statistical tests are treated as heuristic filters rather than as calibrated sequential inference (e.g., \texttt{irace} explicitly avoids multiple-comparison corrections because they can prevent elimination).
SPRINT-Race targets multi-objective model/solver selection by sequentially resolving pairwise dominance/non-dominance via a sequential probability ratio test with an indifference zone, and uses a Bonferroni argument to control the probability of erroneous eliminations over all pairwise tests in a fixed-confidence setting~\cite{zhang2015}.
At a high level, \textsc{PolarBear} is closest in spirit to fixed-confidence racing on a given instance distribution, but it differs in two ways that matter for anytime algorithm design.
First, whereas F-Race/\texttt{irace} optimize a scalar objective (such as AOCC) and SPRINT-Race targets Pareto optimality in objective space, our comparison target is anytime dominance across budgets.
Second, the rank-based blocked tests used in F-Race/\texttt{irace} are inherently portfolio-dependent: adding a new candidate can change historical ranks and thus the underlying test statistics, so ``late entry'' is not naturally compositional (even though manual mid-race injection has been explored)~\cite{birattari2010}.
By contrast, the IIA structure of the PL model makes candidate insertion at any point well-defined.

Multi-fidelity methods such as successive halving and Hyperband~\cite{li2018a} eliminate candidates based on partial evaluations, but rely on the assumption that low-fidelity performance predicts high-fidelity performance.
When this assumption fails, as with crossing trajectories where one algorithm dominates early and another dominates late, such methods may incorrectly eliminate candidates.
\textsc{PolarBear} makes no extrapolation assumption; anytime Pareto optimality explicitly preserves algorithms with different budget-dependent strengths.

\section{Conclusion and Future Work}

This paper introduced a Bayesian framework for identifying Pareto-optimal anytime optimization algorithms.
The framework rests on three ideas: (1) treating each timepoint as a separate objective yields an anytime Pareto set that preserves tradeoffs over time, (2) using rankings rather than objective values enables coherent aggregation across heterogeneous instances without normalization, and (3) Bayesian inference with Plackett-Luce models provides calibrated uncertainty about algorithm performance.
\textsc{PolarBear} instantiates this framework as an adaptive racing procedure, enabling automated evaluation.
The IIA property of Plackett-Luce is central to its efficiency: dominated algorithms can be eliminated early without invalidating inference about survivors, and new algorithms can be added at any time without restarting the analysis.
Termination occurs when all pairwise relations are resolved to a specified belief threshold, at which point the output Pareto set supports selection under arbitrary strictly monotonic time preferences and risk profiles.
To our knowledge, this is the first approach to combine scale-free ranking models, anytime Pareto optimality, and sequential Bayesian experiment design.
It removes assumptions required by traditional anytime evaluation (such as requiring optima, meaningful objective scales, and manual interpretation), and enables benchmarking under conditions that directly match deployment.
The motivating question---\emph{what should we compute offline to support selection under any budget information that may arise at deployment?}---therefore has a precise answer: the anytime Pareto set with calibrated posteriors.
This set contains exactly the algorithms that could be optimal under some time preference, and the posterior supports principled selection under any such preference without additional experiments.
Case studies demonstrate that \textsc{PolarBear} agrees qualitatively with traditional methods where both are applicable, while adaptively focusing computational effort exactly where needed (resulting in 59\% fewer function evaluations in the MA-BBOB study), and enabling comparisons under conditions where traditional methods struggle.
Inference may be slow for data with many ties or fine temporal grids, particularly for temporal models; however, these are computational rather than methodological limitations.

Several directions extend the framework presented here.
The current approach marginalizes over instances by pooling rankings.
When instance features are available or instance sets are small, hierarchical models can capture algorithm-instance interactions, i.e., a global rating with instance-specific offsets, or a process over the joint time-instance feature space.
This would enable instance-conditional algorithm selection while retaining scale-free comparison.

Comparing multi-objective algorithms via indicators such as hypervolume reduces Pareto dominance to a total order but reintroduces domain knowledge (e.g., through a reference point).
A scale-free alternative would model dominance relations directly: for each instance, observe whether the Pareto front produced by $A$ dominates that of $B$, vice versa, or neither (as when fronts cover non-overlapping regions of the objective space).
Aggregating these ternary outcomes across instances requires modeling a distribution over partial orders.
While each observed instance yields a valid transitive relation, parametric families for such distributions are less developed than ranking models, and tractable analogs of Plackett-Luce with properties like IIA remain an open problem.

The IIA property enables adding algorithms to the race at any time without invalidating existing inference.
This opens a natural integration with automated algorithm design, where a proposal strategy (e.g., Bayesian optimization, evolutionary search, or LLM-based generation) suggests new configurations based on the current Pareto set, \textsc{PolarBear} evaluates them against existing candidates, and the loop continues until the Pareto set stabilizes or a budget is exhausted.
This would enable fully automated algorithm design pipelines where the Pareto set emerges from an iterative generate-and-test process, with \textsc{PolarBear} providing the statistical backbone for principled comparison throughout.



\bibliographystyle{ACM-Reference-Format}
\bibliography{references}


\begin{thebibliography}{43}


\ifx \showCODEN    \undefined \def \showCODEN     #1{\unskip}     \fi
\ifx \showISBNx    \undefined \def \showISBNx     #1{\unskip}     \fi
\ifx \showISBNxiii \undefined \def \showISBNxiii  #1{\unskip}     \fi
\ifx \showISSN     \undefined \def \showISSN      #1{\unskip}     \fi
\ifx \showLCCN     \undefined \def \showLCCN      #1{\unskip}     \fi
\ifx \shownote     \undefined \def \shownote      #1{#1}          \fi
\ifx \showarticletitle \undefined \def \showarticletitle #1{#1}   \fi
\ifx \showURL      \undefined \def \showURL       {\relax}        \fi
\providecommand\bibfield[2]{#2}
\providecommand\bibinfo[2]{#2}
\providecommand\natexlab[1]{#1}
\providecommand\showeprint[2][]{arXiv:#2}

\bibitem[{Abril-Pla} et~al\mbox{.}(2023)]%
        {pymc2023}
\bibfield{author}{\bibinfo{person}{Oriol {Abril-Pla}}, \bibinfo{person}{Virgile Andreani}, \bibinfo{person}{Colin Carroll}, \bibinfo{person}{Larry Dong}, \bibinfo{person}{Christopher~J. Fonnesbeck}, \bibinfo{person}{Maxim Kochurov}, \bibinfo{person}{Ravin Kumar}, \bibinfo{person}{Junpeng Lao}, \bibinfo{person}{Christian~C. Luhmann}, \bibinfo{person}{Osvaldo~A. Martin}, \bibinfo{person}{Michael Osthege}, \bibinfo{person}{Ricardo Vieira}, \bibinfo{person}{Thomas Wiecki}, {and} \bibinfo{person}{Robert Zinkov}.} \bibinfo{year}{2023}\natexlab{}.
\newblock \showarticletitle{{{PyMC}}: A Modern, and Comprehensive Probabilistic Programming Framework in {{Python}}}.
\newblock \bibinfo{journal}{\emph{PeerJ Computer Science}}  \bibinfo{volume}{9} (\bibinfo{date}{Sept.} \bibinfo{year}{2023}), \bibinfo{pages}{e1516}.
\newblock
\showISSN{2376-5992}
\href{https://doi.org/10.7717/peerj-cs.1516}{doi:\nolinkurl{10.7717/peerj-cs.1516}}


\bibitem[Birattari et~al\mbox{.}(2010)]%
        {birattari2010}
\bibfield{author}{\bibinfo{person}{Mauro Birattari}, \bibinfo{person}{Zhi Yuan}, \bibinfo{person}{Prasanna Balaprakash}, {and} \bibinfo{person}{Thomas St{\"u}tzle}.} \bibinfo{year}{2010}\natexlab{}.
\newblock \showarticletitle{F-{{Race}} and {{Iterated F-Race}}: {{An Overview}}}.
\newblock In \bibinfo{booktitle}{\emph{Experimental {{Methods}} for the {{Analysis}} of {{Optimization Algorithms}}}}, \bibfield{editor}{\bibinfo{person}{Thomas {Bartz-Beielstein}}, \bibinfo{person}{Marco Chiarandini}, \bibinfo{person}{Lu{\'i}s Paquete}, {and} \bibinfo{person}{Mike Preuss}} (Eds.). \bibinfo{publisher}{Springer}, \bibinfo{address}{Berlin, Heidelberg}, \bibinfo{pages}{311--336}.
\newblock
\showISBNx{978-3-642-02538-9}


\bibitem[Bradley and Terry(1952)]%
        {bradley1952}
\bibfield{author}{\bibinfo{person}{Ralph~Allan Bradley} {and} \bibinfo{person}{Milton~E. Terry}.} \bibinfo{year}{1952}\natexlab{}.
\newblock \showarticletitle{Rank {{Analysis}} of {{Incomplete Block Designs}}: The {{Method}} of {{Paired Comparisons}}}.
\newblock \bibinfo{journal}{\emph{Biometrika}} \bibinfo{volume}{39}, \bibinfo{number}{3-4} (\bibinfo{date}{Dec.} \bibinfo{year}{1952}), \bibinfo{pages}{324--345}.
\newblock
\showISSN{0006-3444}
\href{https://doi.org/10.1093/biomet/39.3-4.324}{doi:\nolinkurl{10.1093/biomet/39.3-4.324}}


\bibitem[Calvo et~al\mbox{.}(2018)]%
        {calvo2018}
\bibfield{author}{\bibinfo{person}{Borja Calvo}, \bibinfo{person}{Josu Ceberio}, {and} \bibinfo{person}{Jose~A. Lozano}.} \bibinfo{year}{2018}\natexlab{}.
\newblock \showarticletitle{Bayesian Inference for Algorithm Ranking Analysis}. In \bibinfo{booktitle}{\emph{Proceedings of the {{Genetic}} and {{Evolutionary Computation Conference Companion}}}} \emph{(\bibinfo{series}{{{GECCO}} '18})}. \bibinfo{publisher}{Association for Computing Machinery}, \bibinfo{address}{New York, NY, USA}, \bibinfo{pages}{324--325}.
\newblock
\showISBNx{978-1-4503-5764-7}
\href{https://doi.org/10.1145/3205651.3205658}{doi:\nolinkurl{10.1145/3205651.3205658}}


\bibitem[Calvo et~al\mbox{.}(2019)]%
        {calvo2019}
\bibfield{author}{\bibinfo{person}{Borja Calvo}, \bibinfo{person}{Ofer~M. Shir}, \bibinfo{person}{Josu Ceberio}, \bibinfo{person}{Carola Doerr}, \bibinfo{person}{Hao Wang}, \bibinfo{person}{Thomas B{\"a}ck}, {and} \bibinfo{person}{Jose~A. Lozano}.} \bibinfo{year}{2019}\natexlab{}.
\newblock \showarticletitle{Bayesian Performance Analysis for Black-Box Optimization Benchmarking}. In \bibinfo{booktitle}{\emph{Proceedings of the {{Genetic}} and {{Evolutionary Computation Conference Companion}}}} \emph{(\bibinfo{series}{{{GECCO}} '19})}. \bibinfo{publisher}{Association for Computing Machinery}, \bibinfo{address}{New York, NY, USA}, \bibinfo{pages}{1789--1797}.
\newblock
\showISBNx{978-1-4503-6748-6}
\href{https://doi.org/10.1145/3319619.3326888}{doi:\nolinkurl{10.1145/3319619.3326888}}


\bibitem[Campelo and Wanner(2020)]%
        {campelo2020}
\bibfield{author}{\bibinfo{person}{Felipe Campelo} {and} \bibinfo{person}{Elizabeth~F. Wanner}.} \bibinfo{year}{2020}\natexlab{}.
\newblock \showarticletitle{Sample Size Calculations for the Experimental Comparison of Multiple Algorithms on Multiple Problem Instances}.
\newblock \bibinfo{journal}{\emph{Journal of Heuristics}} \bibinfo{volume}{26}, \bibinfo{number}{6} (\bibinfo{date}{Dec.} \bibinfo{year}{2020}), \bibinfo{pages}{851--883}.
\newblock
\showISSN{1572-9397}
\href{https://doi.org/10.1007/s10732-020-09454-w}{doi:\nolinkurl{10.1007/s10732-020-09454-w}}


\bibitem[{de Nobel} et~al\mbox{.}(2021)]%
        {denobel2021}
\bibfield{author}{\bibinfo{person}{Jacob {de Nobel}}, \bibinfo{person}{Diederick Vermetten}, \bibinfo{person}{Hao Wang}, \bibinfo{person}{Carola Doerr}, {and} \bibinfo{person}{Thomas B{\"a}ck}.} \bibinfo{year}{2021}\natexlab{}.
\newblock \showarticletitle{Tuning as a Means of Assessing the Benefits of New Ideas in Interplay with Existing Algorithmic Modules}. In \bibinfo{booktitle}{\emph{Proceedings of the {{Genetic}} and {{Evolutionary Computation Conference Companion}}}} \emph{(\bibinfo{series}{{{GECCO}} '21})}. \bibinfo{publisher}{Association for Computing Machinery}, \bibinfo{address}{New York, NY, USA}, \bibinfo{pages}{1375--1384}.
\newblock
\showISBNx{978-1-4503-8351-6}
\href{https://doi.org/10.1145/3449726.3463167}{doi:\nolinkurl{10.1145/3449726.3463167}}


\bibitem[Eftimov and Koro{\v s}ec(2025)]%
        {eftimov2025}
\bibfield{author}{\bibinfo{person}{Tome Eftimov} {and} \bibinfo{person}{Peter Koro{\v s}ec}.} \bibinfo{year}{2025}\natexlab{}.
\newblock \showarticletitle{Adaptive {{Estimation}} of the {{Number}} of {{Algorithm Runs}} in {{Stochastic Optimization}}}.
\newblock In \bibinfo{booktitle}{\emph{Proceedings of the {{Genetic}} and {{Evolutionary Computation Conference}}}}. \bibinfo{publisher}{Association for Computing Machinery}, \bibinfo{address}{New York, NY, USA}, \bibinfo{pages}{719--727}.
\newblock
\showISBNx{979-8-4007-1465-8}


\bibitem[Fawcett et~al\mbox{.}(2023)]%
        {fawcett2023}
\bibfield{author}{\bibinfo{person}{Chris Fawcett}, \bibinfo{person}{Mauro Vallati}, \bibinfo{person}{Holger~H. Hoos}, {and} \bibinfo{person}{Alfonso~E. Gerevini}.} \bibinfo{year}{2023}\natexlab{}.
\newblock \bibinfo{title}{Competitions in {{AI}} -- {{Robustly Ranking Solvers Using Statistical Resampling}}}.
\newblock
\showeprint[arxiv]{2308.05062}~[cs]
\href{https://doi.org/10.48550/arXiv.2308.05062}{doi:\nolinkurl{10.48550/arXiv.2308.05062}}


\bibitem[Gelman et~al\mbox{.}(1995)]%
        {gelman1995}
\bibfield{author}{\bibinfo{person}{Andrew Gelman}, \bibinfo{person}{John~B. Carlin}, \bibinfo{person}{Hal~S. Stern}, {and} \bibinfo{person}{Donald~B. Rubin}.} \bibinfo{year}{1995}\natexlab{}.
\newblock \bibinfo{booktitle}{\emph{Bayesian {{Data Analysis}}}}.
\newblock \bibinfo{publisher}{{Chapman and Hall/CRC}}, \bibinfo{address}{New York}.
\newblock
\showISBNx{978-0-429-25841-1}
\href{https://doi.org/10.1201/9780429258411}{doi:\nolinkurl{10.1201/9780429258411}}


\bibitem[Hansen(2023)]%
        {hansen2023}
\bibfield{author}{\bibinfo{person}{Nikolaus Hansen}.} \bibinfo{year}{2023}\natexlab{}.
\newblock \bibinfo{title}{The {{CMA Evolution Strategy}}: {{A Tutorial}}}.
\newblock
\showeprint[arxiv]{1604.00772}~[cs]
\href{https://doi.org/10.48550/arXiv.1604.00772}{doi:\nolinkurl{10.48550/arXiv.1604.00772}}


\bibitem[Hansen et~al\mbox{.}(2009)]%
        {hansen2009}
\bibfield{author}{\bibinfo{person}{Nikolaus Hansen}, \bibinfo{person}{Steffen Finck}, \bibinfo{person}{Raymond Ros}, {and} \bibinfo{person}{Anne Auger}.} \bibinfo{year}{2009}\natexlab{}.
\newblock \bibinfo{booktitle}{\emph{Real-{{Parameter Black-Box Optimization Benchmarking}} 2009: {{Noiseless Functions Definitions}}}}.
\newblock \bibinfo{type}{Report}. \bibinfo{institution}{INRIA}.
\newblock


\bibitem[Homan and Gelman(2014)]%
        {homan2014}
\bibfield{author}{\bibinfo{person}{Matthew~D. Homan} {and} \bibinfo{person}{Andrew Gelman}.} \bibinfo{year}{2014}\natexlab{}.
\newblock \showarticletitle{The {{No-U-turn}} Sampler: Adaptively Setting Path Lengths in {{Hamiltonian Monte Carlo}}}.
\newblock \bibinfo{journal}{\emph{J. Mach. Learn. Res.}} \bibinfo{volume}{15}, \bibinfo{number}{1} (\bibinfo{date}{Jan.} \bibinfo{year}{2014}), \bibinfo{pages}{1593--1623}.
\newblock
\showISSN{1532-4435}


\bibitem[Jesus et~al\mbox{.}(2020)]%
        {jesus2020}
\bibfield{author}{\bibinfo{person}{Alexandre~D. Jesus}, \bibinfo{person}{Arnaud Liefooghe}, \bibinfo{person}{Bilel Derbel}, {and} \bibinfo{person}{Lu{\'i}s Paquete}.} \bibinfo{year}{2020}\natexlab{}.
\newblock \showarticletitle{Algorithm Selection of Anytime Algorithms}. In \bibinfo{booktitle}{\emph{Proceedings of the 2020 {{Genetic}} and {{Evolutionary Computation Conference}}}} \emph{(\bibinfo{series}{{{GECCO}} '20})}. \bibinfo{publisher}{Association for Computing Machinery}, \bibinfo{address}{New York, NY, USA}, \bibinfo{pages}{850--858}.
\newblock
\showISBNx{978-1-4503-7128-5}
\href{https://doi.org/10.1145/3377930.3390185}{doi:\nolinkurl{10.1145/3377930.3390185}}


\bibitem[Kerschke et~al\mbox{.}(2019)]%
        {kerschke2019}
\bibfield{author}{\bibinfo{person}{Pascal Kerschke}, \bibinfo{person}{Holger~H. Hoos}, \bibinfo{person}{Frank Neumann}, {and} \bibinfo{person}{Heike Trautmann}.} \bibinfo{year}{2019}\natexlab{}.
\newblock \showarticletitle{Automated {{Algorithm Selection}}: {{Survey}} and {{Perspectives}}}.
\newblock \bibinfo{journal}{\emph{Evolutionary Computation}} \bibinfo{volume}{27}, \bibinfo{number}{1} (\bibinfo{date}{March} \bibinfo{year}{2019}), \bibinfo{pages}{3--45}.
\newblock
\showISSN{1063-6560}
\href{https://doi.org/10.1162/evco_a_00242}{doi:\nolinkurl{10.1162/evco_a_00242}}


\bibitem[Kruschke(2018)]%
        {kruschke2018}
\bibfield{author}{\bibinfo{person}{John~K. Kruschke}.} \bibinfo{year}{2018}\natexlab{}.
\newblock \showarticletitle{Rejecting or {{Accepting Parameter Values}} in {{Bayesian Estimation}}}.
\newblock \bibinfo{journal}{\emph{Advances in Methods and Practices in Psychological Science}} \bibinfo{volume}{1}, \bibinfo{number}{2} (\bibinfo{date}{June} \bibinfo{year}{2018}), \bibinfo{pages}{270--280}.
\newblock
\showISSN{2515-2459}
\href{https://doi.org/10.1177/2515245918771304}{doi:\nolinkurl{10.1177/2515245918771304}}


\bibitem[Kucukelbir et~al\mbox{.}(2017)]%
        {kucukelbir2017}
\bibfield{author}{\bibinfo{person}{Alp Kucukelbir}, \bibinfo{person}{Dustin Tran}, \bibinfo{person}{Rajesh Ranganath}, \bibinfo{person}{Andrew Gelman}, {and} \bibinfo{person}{David~M. Blei}.} \bibinfo{year}{2017}\natexlab{}.
\newblock \showarticletitle{Automatic Differentiation Variational Inference}.
\newblock \bibinfo{journal}{\emph{J. Mach. Learn. Res.}} \bibinfo{volume}{18}, \bibinfo{number}{1} (\bibinfo{date}{Jan.} \bibinfo{year}{2017}), \bibinfo{pages}{430--474}.
\newblock
\showISSN{1532-4435}


\bibitem[Lancaster(1965)]%
        {lancaster1965}
\bibfield{author}{\bibinfo{person}{H.~O. Lancaster}.} \bibinfo{year}{1965}\natexlab{}.
\newblock \showarticletitle{The {{Helmert Matrices}}}.
\newblock \bibinfo{journal}{\emph{The American Mathematical Monthly}} \bibinfo{volume}{72}, \bibinfo{number}{1} (\bibinfo{year}{1965}), \bibinfo{pages}{4--12}.
\newblock
\showISSN{0002-9890}
\showeprint[jstor]{2312989}
\href{https://doi.org/10.2307/2312989}{doi:\nolinkurl{10.2307/2312989}}


\bibitem[Li et~al\mbox{.}(2018)]%
        {li2018a}
\bibfield{author}{\bibinfo{person}{Lisha Li}, \bibinfo{person}{Kevin Jamieson}, \bibinfo{person}{Giulia DeSalvo}, \bibinfo{person}{Afshin Rostamizadeh}, {and} \bibinfo{person}{Ameet Talwalkar}.} \bibinfo{year}{2018}\natexlab{}.
\newblock \showarticletitle{Hyperband: {{A Novel Bandit-Based Approach}} to {{Hyperparameter Optimization}}}.
\newblock \bibinfo{journal}{\emph{Journal of Machine Learning Research}} \bibinfo{volume}{18}, \bibinfo{number}{185} (\bibinfo{year}{2018}), \bibinfo{pages}{1--52}.
\newblock
\showISSN{1533-7928}


\bibitem[{L{\'o}pez-Ib{\'a}{\~n}ez} et~al\mbox{.}(2016)]%
        {lopez-ibanez2016}
\bibfield{author}{\bibinfo{person}{Manuel {L{\'o}pez-Ib{\'a}{\~n}ez}}, \bibinfo{person}{J{\'e}r{\'e}mie {Dubois-Lacoste}}, \bibinfo{person}{Leslie P{\'e}rez~C{\'a}ceres}, \bibinfo{person}{Mauro Birattari}, {and} \bibinfo{person}{Thomas St{\"u}tzle}.} \bibinfo{year}{2016}\natexlab{}.
\newblock \showarticletitle{The Irace Package: {{Iterated}} Racing for Automatic Algorithm Configuration}.
\newblock \bibinfo{journal}{\emph{Operations Research Perspectives}}  \bibinfo{volume}{3} (\bibinfo{date}{Jan.} \bibinfo{year}{2016}), \bibinfo{pages}{43--58}.
\newblock
\showISSN{2214-7160}
\href{https://doi.org/10.1016/j.orp.2016.09.002}{doi:\nolinkurl{10.1016/j.orp.2016.09.002}}


\bibitem[{L{\'o}pez-Ib{\'a}{\~n}ez} and St{\"u}tzle(2014)]%
        {lopez-ibanez2014}
\bibfield{author}{\bibinfo{person}{Manuel {L{\'o}pez-Ib{\'a}{\~n}ez}} {and} \bibinfo{person}{Thomas St{\"u}tzle}.} \bibinfo{year}{2014}\natexlab{}.
\newblock \showarticletitle{Automatically Improving the Anytime Behaviour of Optimisation Algorithms}.
\newblock \bibinfo{journal}{\emph{European Journal of Operational Research}} \bibinfo{volume}{235}, \bibinfo{number}{3} (\bibinfo{date}{June} \bibinfo{year}{2014}), \bibinfo{pages}{569--582}.
\newblock
\showISSN{0377-2217}
\href{https://doi.org/10.1016/j.ejor.2013.10.043}{doi:\nolinkurl{10.1016/j.ejor.2013.10.043}}


\bibitem[{L{\'o}pez-Ib{\'a}{\~n}ez} et~al\mbox{.}(2024)]%
        {lopez-ibanez2024}
\bibfield{author}{\bibinfo{person}{Manuel {L{\'o}pez-Ib{\'a}{\~n}ez}}, \bibinfo{person}{Diederick Vermetten}, \bibinfo{person}{Johann Dreo}, {and} \bibinfo{person}{Carola Doerr}.} \bibinfo{year}{2024}\natexlab{}.
\newblock \showarticletitle{Using the {{Empirical Attainment Function}} for {{Analyzing Single-Objective Black-Box Optimization Algorithms}}}.
\newblock \bibinfo{journal}{\emph{IEEE Transactions on Evolutionary Computation}} (\bibinfo{year}{2024}), \bibinfo{pages}{1--1}.
\newblock
\showISSN{1941-0026}
\href{https://doi.org/10.1109/TEVC.2024.3462758}{doi:\nolinkurl{10.1109/TEVC.2024.3462758}}


\bibitem[Luce(1959)]%
        {luce1979}
\bibfield{author}{\bibinfo{person}{R.~Duncan Luce}.} \bibinfo{year}{1979 c1959}\natexlab{}.
\newblock \bibinfo{booktitle}{\emph{Individual Choice Behavior: A Theoretical Analysis}}.
\newblock \bibinfo{publisher}{Greenwood Press}, \bibinfo{address}{Westport, Conn.}
\newblock
\showISBNx{978-0-313-20778-5}


\bibitem[Neal(2011)]%
        {neal2011}
\bibfield{author}{\bibinfo{person}{Radford~M. Neal}.} \bibinfo{year}{2011}\natexlab{}.
\newblock \showarticletitle{{{MCMC Using Hamiltonian Dynamics}}}.
\newblock In \bibinfo{booktitle}{\emph{Handbook of {{Markov Chain Monte Carlo}}}}. \bibinfo{publisher}{{Chapman and Hall/CRC}}.
\newblock


\bibitem[Plackett(1975)]%
        {plackett1975}
\bibfield{author}{\bibinfo{person}{R.~L. Plackett}.} \bibinfo{year}{1975}\natexlab{}.
\newblock \showarticletitle{The {{Analysis}} of {{Permutations}}}.
\newblock \bibinfo{journal}{\emph{Journal of the Royal Statistical Society. Series C (Applied Statistics)}} \bibinfo{volume}{24}, \bibinfo{number}{2} (\bibinfo{year}{1975}), \bibinfo{pages}{193--202}.
\newblock
\showISSN{0035-9254}
\showeprint[jstor]{2346567}
\href{https://doi.org/10.2307/2346567}{doi:\nolinkurl{10.2307/2346567}}


\bibitem[{Qui{\~n}onero-Candela} and Rasmussen(2005)]%
        {quinonero-candela2005}
\bibfield{author}{\bibinfo{person}{Joaquin {Qui{\~n}onero-Candela}} {and} \bibinfo{person}{Carl~Edward Rasmussen}.} \bibinfo{year}{2005}\natexlab{}.
\newblock \showarticletitle{A {{Unifying View}} of {{Sparse Approximate Gaussian Process Regression}}}.
\newblock \bibinfo{journal}{\emph{Journal of Machine Learning Research}} \bibinfo{volume}{6}, \bibinfo{number}{65} (\bibinfo{year}{2005}), \bibinfo{pages}{1939--1959}.
\newblock
\showISSN{1533-7928}


\bibitem[Rasmussen and Williams(2005)]%
        {rasmussen2005}
\bibfield{author}{\bibinfo{person}{Carl~Edward Rasmussen} {and} \bibinfo{person}{Christopher K.~I. Williams}.} \bibinfo{year}{2005}\natexlab{}.
\newblock \bibinfo{booktitle}{\emph{Gaussian {{Processes}} for {{Machine Learning}}}}.
\newblock \bibinfo{publisher}{The MIT Press}.
\newblock
\showISBNx{978-0-262-25683-4}
\href{https://doi.org/10.7551/mitpress/3206.001.0001}{doi:\nolinkurl{10.7551/mitpress/3206.001.0001}}


\bibitem[Ravber et~al\mbox{.}(2024)]%
        {ravber2024}
\bibfield{author}{\bibinfo{person}{Miha Ravber}, \bibinfo{person}{Marjan Mernik}, \bibinfo{person}{Shih-Hsi Liu}, \bibinfo{person}{Marko {\v S}mid}, {and} \bibinfo{person}{Matej {\v C}repin{\v s}ek}.} \bibinfo{year}{2024}\natexlab{}.
\newblock \showarticletitle{Confidence {{Bands Based}} on {{Rating Demonstrated}} on the {{CEC}} 2021 {{Competition Results}}}. In \bibinfo{booktitle}{\emph{2024 {{IEEE Congress}} on {{Evolutionary Computation}} ({{CEC}})}}. \bibinfo{pages}{1--8}.
\newblock
\href{https://doi.org/10.1109/CEC60901.2024.10611955}{doi:\nolinkurl{10.1109/CEC60901.2024.10611955}}


\bibitem[{Riutort-Mayol} et~al\mbox{.}(2022)]%
        {riutort-mayol2022}
\bibfield{author}{\bibinfo{person}{Gabriel {Riutort-Mayol}}, \bibinfo{person}{Paul-Christian B{\"u}rkner}, \bibinfo{person}{Michael~R. Andersen}, \bibinfo{person}{Arno Solin}, {and} \bibinfo{person}{Aki Vehtari}.} \bibinfo{year}{2022}\natexlab{}.
\newblock \showarticletitle{Practical {{Hilbert}} Space Approximate {{Bayesian Gaussian}} Processes for Probabilistic Programming}.
\newblock \bibinfo{journal}{\emph{Statistics and Computing}} \bibinfo{volume}{33}, \bibinfo{number}{1} (\bibinfo{date}{Dec.} \bibinfo{year}{2022}), \bibinfo{pages}{17}.
\newblock
\showISSN{1573-1375}
\href{https://doi.org/10.1007/s11222-022-10167-2}{doi:\nolinkurl{10.1007/s11222-022-10167-2}}


\bibitem[{Rojas-Delgado} et~al\mbox{.}(2022)]%
        {rojas-delgado2022}
\bibfield{author}{\bibinfo{person}{Jairo {Rojas-Delgado}}, \bibinfo{person}{Josu Ceberio}, \bibinfo{person}{Borja Calvo}, {and} \bibinfo{person}{Jose~A. Lozano}.} \bibinfo{year}{2022}\natexlab{}.
\newblock \showarticletitle{Bayesian {{Performance Analysis}} for {{Algorithm Ranking Comparison}}}.
\newblock \bibinfo{journal}{\emph{IEEE Transactions on Evolutionary Computation}} \bibinfo{volume}{26}, \bibinfo{number}{6} (\bibinfo{date}{Dec.} \bibinfo{year}{2022}), \bibinfo{pages}{1281--1292}.
\newblock
\showISSN{1941-0026}
\href{https://doi.org/10.1109/TEVC.2022.3208110}{doi:\nolinkurl{10.1109/TEVC.2022.3208110}}


\bibitem[Rook et~al\mbox{.}(2024)]%
        {rook2024}
\bibfield{author}{\bibinfo{person}{Jeroen Rook}, \bibinfo{person}{Holger~H. Hoos}, {and} \bibinfo{person}{Heike Trautmann}.} \bibinfo{year}{2024}\natexlab{}.
\newblock \showarticletitle{Multi-Objective {{Ranking}} Using {{Bootstrap Resampling}}}. In \bibinfo{booktitle}{\emph{Proceedings of the {{Genetic}} and {{Evolutionary Computation Conference Companion}}}} \emph{(\bibinfo{series}{{{GECCO}} '24 {{Companion}}})}. \bibinfo{publisher}{Association for Computing Machinery}, \bibinfo{address}{New York, NY, USA}, \bibinfo{pages}{155--158}.
\newblock
\showISBNx{979-8-4007-0495-6}
\href{https://doi.org/10.1145/3638530.3654436}{doi:\nolinkurl{10.1145/3638530.3654436}}


\bibitem[{Stan Development Team}(2026)]%
        {stan2026}
\bibfield{author}{\bibinfo{person}{{Stan Development Team}}.} \bibinfo{year}{2026}\natexlab{}.
\newblock \bibinfo{title}{Stan Modeling Language Users Guide and Reference Manual, Version 2.38.0}.
\newblock


\bibitem[Tierney and Kadane(1986)]%
        {tierney1986}
\bibfield{author}{\bibinfo{person}{Luke Tierney} {and} \bibinfo{person}{Joseph~B. Kadane}.} \bibinfo{year}{1986}\natexlab{}.
\newblock \showarticletitle{Accurate {{Approximations}} for {{Posterior Moments}} and {{Marginal Densities}}}.
\newblock \bibinfo{journal}{\emph{J. Amer. Statist. Assoc.}} \bibinfo{volume}{81}, \bibinfo{number}{393} (\bibinfo{date}{March} \bibinfo{year}{1986}), \bibinfo{pages}{82--86}.
\newblock
\showISSN{0162-1459}
\href{https://doi.org/10.1080/01621459.1986.10478240}{doi:\nolinkurl{10.1080/01621459.1986.10478240}}


\bibitem[Turner et~al\mbox{.}(2020)]%
        {turner2020}
\bibfield{author}{\bibinfo{person}{Heather~L. Turner}, \bibinfo{person}{Jacob {van Etten}}, \bibinfo{person}{David Firth}, {and} \bibinfo{person}{Ioannis Kosmidis}.} \bibinfo{year}{2020}\natexlab{}.
\newblock \showarticletitle{Modelling Rankings in {{R}}: The {{PlackettLuce}} Package}.
\newblock \bibinfo{journal}{\emph{Computational Statistics}} \bibinfo{volume}{35}, \bibinfo{number}{3} (\bibinfo{date}{Sept.} \bibinfo{year}{2020}), \bibinfo{pages}{1027--1057}.
\newblock
\showISSN{1613-9658}
\href{https://doi.org/10.1007/s00180-020-00959-3}{doi:\nolinkurl{10.1007/s00180-020-00959-3}}


\bibitem[{van Stein} et~al\mbox{.}(2024)]%
        {vanstein2024a}
\bibfield{author}{\bibinfo{person}{Niki {van Stein}}, \bibinfo{person}{Diederick Vermetten}, \bibinfo{person}{Anna~V. Kononova}, {and} \bibinfo{person}{Thomas B{\"a}ck}.} \bibinfo{year}{2024}\natexlab{}.
\newblock \bibinfo{title}{Explainable {{Benchmarking}} for {{Iterative Optimization Heuristics}}}.
\newblock
\showeprint[arxiv]{2401.17842}~[cs]
\href{https://doi.org/10.48550/arXiv.2401.17842}{doi:\nolinkurl{10.48550/arXiv.2401.17842}}


\bibitem[Vehtari et~al\mbox{.}(2021)]%
        {vehtari2021}
\bibfield{author}{\bibinfo{person}{Aki Vehtari}, \bibinfo{person}{Andrew Gelman}, \bibinfo{person}{Daniel Simpson}, \bibinfo{person}{Bob Carpenter}, {and} \bibinfo{person}{Paul-Christian B{\"u}rkner}.} \bibinfo{year}{2021}\natexlab{}.
\newblock \showarticletitle{Rank-{{Normalization}}, {{Folding}}, and {{Localization}}: {{An Improved R{\textasciicircum}}} for {{Assessing Convergence}} of {{MCMC}} (with {{Discussion}})}.
\newblock \bibinfo{journal}{\emph{Bayesian Analysis}} \bibinfo{volume}{16}, \bibinfo{number}{2} (\bibinfo{date}{June} \bibinfo{year}{2021}), \bibinfo{pages}{667--718}.
\newblock
\showISSN{1936-0975, 1931-6690}
\href{https://doi.org/10.1214/20-BA1221}{doi:\nolinkurl{10.1214/20-BA1221}}


\bibitem[Vermetten et~al\mbox{.}(2025)]%
        {vermetten2025}
\bibfield{author}{\bibinfo{person}{Diederick Vermetten}, \bibinfo{person}{Jeroen Rook}, \bibinfo{person}{Oliver~L. Preu{\ss}}, \bibinfo{person}{Jacob {de Nobel}}, \bibinfo{person}{Carola Doerr}, \bibinfo{person}{Manuel {L{\'o}pez-Iba{\~n}ez}}, \bibinfo{person}{Heike Trautmann}, {and} \bibinfo{person}{Thomas B{\"a}ck}.} \bibinfo{year}{2025}\natexlab{}.
\newblock \showarticletitle{{{MO-IOHinspector}}: {{Anytime Benchmarking}} of~{{Multi-objective Algorithms Using IOHprofiler}}}. In \bibinfo{booktitle}{\emph{Evolutionary {{Multi-Criterion Optimization}}}}, \bibfield{editor}{\bibinfo{person}{Hemant Singh}, \bibinfo{person}{Tapabrata Ray}, \bibinfo{person}{Joshua Knowles}, \bibinfo{person}{Xiaodong Li}, \bibinfo{person}{Juergen Branke}, \bibinfo{person}{Bing Wang}, {and} \bibinfo{person}{Akira Oyama}} (Eds.). \bibinfo{publisher}{Springer Nature}, \bibinfo{address}{Singapore}, \bibinfo{pages}{242--256}.
\newblock
\showISBNx{978-981-96-3506-1}
\href{https://doi.org/10.1007/978-981-96-3506-1_17}{doi:\nolinkurl{10.1007/978-981-96-3506-1_17}}


\bibitem[Vermetten et~al\mbox{.}(2022)]%
        {vermetten2022a}
\bibfield{author}{\bibinfo{person}{Diederick Vermetten}, \bibinfo{person}{Hao Wang}, \bibinfo{person}{Manuel {L{\'o}pez-Iba{\~n}ez}}, \bibinfo{person}{Carola Doerr}, {and} \bibinfo{person}{Thomas B{\"a}ck}.} \bibinfo{year}{2022}\natexlab{}.
\newblock \showarticletitle{Analyzing the Impact of Undersampling on the Benchmarking and Configuration of Evolutionary Algorithms}. In \bibinfo{booktitle}{\emph{Proceedings of the {{Genetic}} and {{Evolutionary Computation Conference}}}} \emph{(\bibinfo{series}{{{GECCO}} '22})}. \bibinfo{publisher}{Association for Computing Machinery}, \bibinfo{address}{New York, NY, USA}, \bibinfo{pages}{867--875}.
\newblock
\showISBNx{978-1-4503-9237-2}
\href{https://doi.org/10.1145/3512290.3528799}{doi:\nolinkurl{10.1145/3512290.3528799}}


\bibitem[Vermetten et~al\mbox{.}(2023)]%
        {vermetten2023}
\bibfield{author}{\bibinfo{person}{Diederick Vermetten}, \bibinfo{person}{Furong Ye}, \bibinfo{person}{Thomas B{\"a}ck}, {and} \bibinfo{person}{Carola Doerr}.} \bibinfo{year}{2023}\natexlab{}.
\newblock \showarticletitle{{{MA-BBOB}}: {{Many-Affine Combinations}} of {{BBOB Functions}} for {{Evaluating AutoML Approaches}} in {{Noiseless Numerical Black-Box Optimization Contexts}}}. In \bibinfo{booktitle}{\emph{Proceedings of the {{Second International Conference}} on {{Automated Machine Learning}}}}. \bibinfo{publisher}{PMLR}, \bibinfo{pages}{7/1--14}.
\newblock
\showISSN{2640-3498}


\bibitem[Yan et~al\mbox{.}(2022)]%
        {yan2022}
\bibfield{author}{\bibinfo{person}{Yuan Yan}, \bibinfo{person}{Qunfeng Liu}, \bibinfo{person}{Li}, {and} \bibinfo{person}{{Yun}}.} \bibinfo{year}{2022}\natexlab{}.
\newblock \showarticletitle{Paradox-Free Analysis for Comparing the Performance of Optimization Algorithms}.
\newblock \bibinfo{journal}{\emph{IEEE Transactions on Evolutionary Computation}} (\bibinfo{year}{2022}), \bibinfo{pages}{1--1}.
\newblock
\showISSN{1941-0026}
\href{https://doi.org/10.1109/tevc.2022.3199647}{doi:\nolinkurl{10.1109/tevc.2022.3199647}}


\bibitem[Ye et~al\mbox{.}(2022)]%
        {ye2022}
\bibfield{author}{\bibinfo{person}{Furong Ye}, \bibinfo{person}{Carola Doerr}, \bibinfo{person}{Hao Wang}, {and} \bibinfo{person}{Thomas B{\"a}ck}.} \bibinfo{year}{2022}\natexlab{}.
\newblock \showarticletitle{Automated {{Configuration}} of {{Genetic Algorithms}} by {{Tuning}} for {{Anytime Performance}}}.
\newblock \bibinfo{journal}{\emph{IEEE Transactions on Evolutionary Computation}} \bibinfo{volume}{26}, \bibinfo{number}{6} (\bibinfo{date}{Dec.} \bibinfo{year}{2022}), \bibinfo{pages}{1526--1538}.
\newblock
\showISSN{1941-0026}
\href{https://doi.org/10.1109/tevc.2022.3159087}{doi:\nolinkurl{10.1109/tevc.2022.3159087}}


\bibitem[Zhang et~al\mbox{.}(2022)]%
        {zhang2022a}
\bibfield{author}{\bibinfo{person}{Lu Zhang}, \bibinfo{person}{Bob Carpenter}, \bibinfo{person}{Andrew Gelman}, {and} \bibinfo{person}{Aki Vehtari}.} \bibinfo{year}{2022}\natexlab{}.
\newblock \showarticletitle{Pathfinder: {{Parallel}} Quasi-{{Newton}} Variational Inference}.
\newblock \bibinfo{journal}{\emph{Journal of Machine Learning Research}} \bibinfo{volume}{23}, \bibinfo{number}{306} (\bibinfo{year}{2022}), \bibinfo{pages}{1--49}.
\newblock
\showISSN{1533-7928}


\bibitem[Zhang et~al\mbox{.}(2015)]%
        {zhang2015}
\bibfield{author}{\bibinfo{person}{Tiantian Zhang}, \bibinfo{person}{Michael Georgiopoulos}, {and} \bibinfo{person}{Georgios~C. Anagnostopoulos}.} \bibinfo{year}{2015}\natexlab{}.
\newblock \showarticletitle{{{SPRINT Multi-Objective Model Racing}}}. In \bibinfo{booktitle}{\emph{Proceedings of the 2015 {{Annual Conference}} on {{Genetic}} and {{Evolutionary Computation}}}} \emph{(\bibinfo{series}{{{GECCO}} '15})}. \bibinfo{publisher}{Association for Computing Machinery}, \bibinfo{address}{New York, NY, USA}, \bibinfo{pages}{1383--1390}.
\newblock
\showISBNx{978-1-4503-3472-3}
\href{https://doi.org/10.1145/2739480.2754791}{doi:\nolinkurl{10.1145/2739480.2754791}}


\end{thebibliography}

\appendix

\section{Helmert Transformation}
\label{app:helmert}

The Helmert matrix $Q \in \mathbb{R}^{n \times (n-1)}$ is an orthogonal contrast matrix used to parameterize the $(n-1)$-dimensional subspace $\{x \in \mathbb{R}^n : \sum_{i=1}^n x_i = 0\}$~\cite{lancaster1965}. 
The $(i,k)$-th entry for $k = 1, \ldots, n-1$ is defined as:
\begin{equation}
Q_{i,k} = \begin{cases}
\sqrt{\frac{1}{k(k+1)}} & \text{if } i \leq k \\[0.5em]
-\sqrt{\frac{k}{k+1}} & \text{if } i = k+1 \\[0.5em]
0 & \text{if } i > k+1
\end{cases}
\end{equation}

It has the following properties:
\begin{itemize}
\item Orthonormality: $Q^\top Q = I_{n-1}$ where $I_{n-1}$ is the identity matrix.
\item Sum-to-zero: For any $\eta \in \mathbb{R}^{n-1}$, we have $\sum_{i=1}^n [Q \eta]_i = 0$.
\end{itemize}

In the context of Plackett-Luce models, the Helmert transformation ensures identifiability by constraining log-utilities to sum to zero, since adding a constant to all log-utilities does not change the ranking probabilities under the softmax transformation.

\section{CMA-ES Configuration}
\label{app:cmaes-config}

All modular CMA-ES variants use the following baseline configuration:
\begin{itemize}
  \item Initial step size $\sigma_0$: $2/10$ of search space domain
  \item \texttt{active}: True
  \item \texttt{repelling\_restart}: True  
  \item \texttt{threshold\_convergence}: False
  \item \texttt{bound\_correction}: COTN
  \item \texttt{restart\_strategy}: IPOP
  \item \texttt{center\_placement}: UNIFORM
\end{itemize}

Case study 2 (MA-BBOB) fixes \texttt{matrix\_adaptation=COV} and varies step-size adaptation.
Case study 3 (GP-BBOB) fixes \texttt{ssa=CSA} and varies matrix adaptation.

These choices ensure budget-agnostic behavior: IPOP (rather than e.g., BIPOP) avoids budget-dependent restart scheduling, and disabling threshold convergence prevents early termination based on progress heuristics.
As discussed in \cref{sec:anytime-performance}, budget-dependent configurations define algorithm families rather than single anytime algorithms; we avoid this complication by using configurations whose trajectories are independent of any configured budget.

\section{GP-BBOB: Input-Dependent Sparse Mixtures of BBOB Functions}
\label{app:gpbbob}

GP-BBOB extends MA-BBOB~\cite{vermetten2023} by making the mixture weights depend on the decision vector $x$.
Let $\{f_i\}_{i=1}^{24}$ denote the BBOB base functions with fixed instances (here from the \texttt{SBOX} variant, i.e., optima are uniformly within the entire domain per function).
The benchmark objective is defined as
\begin{equation}
f(x) \;=\; \sum_{i=1}^{24} w_i(x)\, f_i(x) \;+\; y_0 ,
\end{equation}
where $y_0$ is an optional offset and $w(x)\in\Delta^{24}$ is an input-dependent weight vector.

We draw a latent logit field $z(x)\in\mathbb{R}^{24}$ from a (stationary) Gaussian process prior (implemented via random Fourier features):
\begin{equation}
z_i(x) \sim \mathcal{GP}(0, k_{\ell}(\cdot,\cdot)), \qquad i=1,\dots,24 \,,
\end{equation}
where $k_{\ell}$ is an RBF kernel with lengthscale $\ell$ (and an amplitude $\sigma$).
Given logits $z(x)$ and a temperature $T>0$, we compute a sparse weight vector by restricting to the top-$k(x)$ indices,
\begin{align}
S(x) &= \mathrm{TopK}\bigl(z(x),\, k(x)\bigr) \,, \\
w_i(x) &=
\begin{cases}
\displaystyle \frac{\exp(z_i(x)/T)}{\sum_{j\in S(x)} \exp(z_j(x)/T)} \,, & i\in S(x) \,,\\[8pt]
0 \,, & i\notin S(x) \,.
\end{cases}
\end{align}
Thus, at each $x$, only $k(x)$ base functions contribute.

The sparsity level $k(x)\in\{1,\dots,24\}$ is itself input-dependent.
We draw an independent scalar GP field $h(x)$ (again via random Fourier features),
\begin{equation}
h(x) \sim \mathcal{GP}(0, k_{\ell_k}(\cdot,\cdot)) \,,
\end{equation}
map it to a uniform random field using the Gaussian CDF $\Phi$,
\begin{equation}
u(x) = \Phi(h(x)) \in (0,1) \,,
\end{equation}
and transform $u(x)$ to an integer via a discrete prior $K$ chosen to roughly match the MA-BBOB sparsity distribution:
\begin{equation}
  \label{eq:k-prior}
k(x) = F_K^{-1}\!\bigl(u(x)\bigr),
\end{equation}
where $F_K$ is the CDF of $K$ on $\{1,\dots,24\}$.

Unlike MA-BBOB (constant weights), the active set $S(x)$ changes across the domain, yielding a nonstationary, piecewise-smooth landscape.
Consequently, the global minimizer is generally not analytically recoverable from the mixture construction.
This is acceptable for our intended use as a hypothetical complex instance distribution where no domain knowledge is available.
The parameters are itself sampled from priors, the exact definitions of which are given in \cref{tab:priors-gpbbob}.
However, we emphasize that these were chosen ad-hoc, the only intention being that instances still have identifiable structure.
\Cref{fig:gpbbob-instance} shows an exemplary instance with $d = 2$.
The resulting landscape shows the piecewise structure arising from position-dependent weights.

\begin{table}
  \centering
  \caption{Priors used for GP-BBOB in the third case study. Here $\mathrm{LN}(\mu,\sigma)$ denotes a LogNormal distribution with underlying normal parameters, $\mathrm{DU}\{a,\dots,b\}$ a discrete uniform distribution on integers, and $\mathrm{Trunc}(\cdot;[a,b])$ truncation (not clipping) to the interval $[a,b]$.}
  \label{tab:priors-gpbbob}
  \begin{tabular}{lll}
    \toprule
    Parameter & Prior & Notes / support \\
    \midrule
    $d$ (dimension) & $\mathrm{Trunc}\!\left(\mathcal{N}(240,30);\,[2,320]\right)$ & integer cast \\
    $y_0$ (offset) & $1000 \cdot \mathrm{Cauchy}(0,1)$ & scalar \\
    $\mathbf{i}$ (instances) & $\mathrm{UnifInt}\{1,\dots,999\}^{24}$ & i.i.d.\ per base function \\
    \midrule
    $T$ (logits temperature) & $\mathrm{LN}(-0.5,\,0.5)$ & $T>0$ \\
    $m$ (RFF features for logits GP) & $\mathrm{DU}\{32,\dots,128\}$ & integer cast \\
    $\ell$ (logits GP lengthscale) & $\mathrm{LN}(2.0,\,1.0)$ & $\ell>0$ \\
    $\sigma$ (logits GP amplitude) & $\mathrm{LN}(0.0,\,0.7)$ & $\sigma>0$ \\
    \midrule
    $F_K$ (marginal for $k(x)$) & $\mathrm{Trunc}\!\left(\mathrm{LN}(1.1,\,0.5);\,[2,24]\right)$ &
    used as $F_K^{-1}$ \\
    $m_k$ (RFF features for $k(x)$ GP) & $\mathrm{DU}\{16,\dots,64\}$ & integer cast \\
    $\ell_k$ ($k(x)$ GP lengthscale) & $\mathrm{LN}(0.0,\,1.0)$ & $\ell_k>0$ \\
    $\sigma_k$ ($k(x)$ GP amplitude) & $\mathrm{LN}(0.0,\,0.7)$ & $\sigma_k>0$ \\
    \bottomrule
  \end{tabular}
\end{table}

\begin{figure}
  \centering
  \begin{subfigure}[b]{0.98\textwidth}
      \centering
      \includegraphics[width=\textwidth]{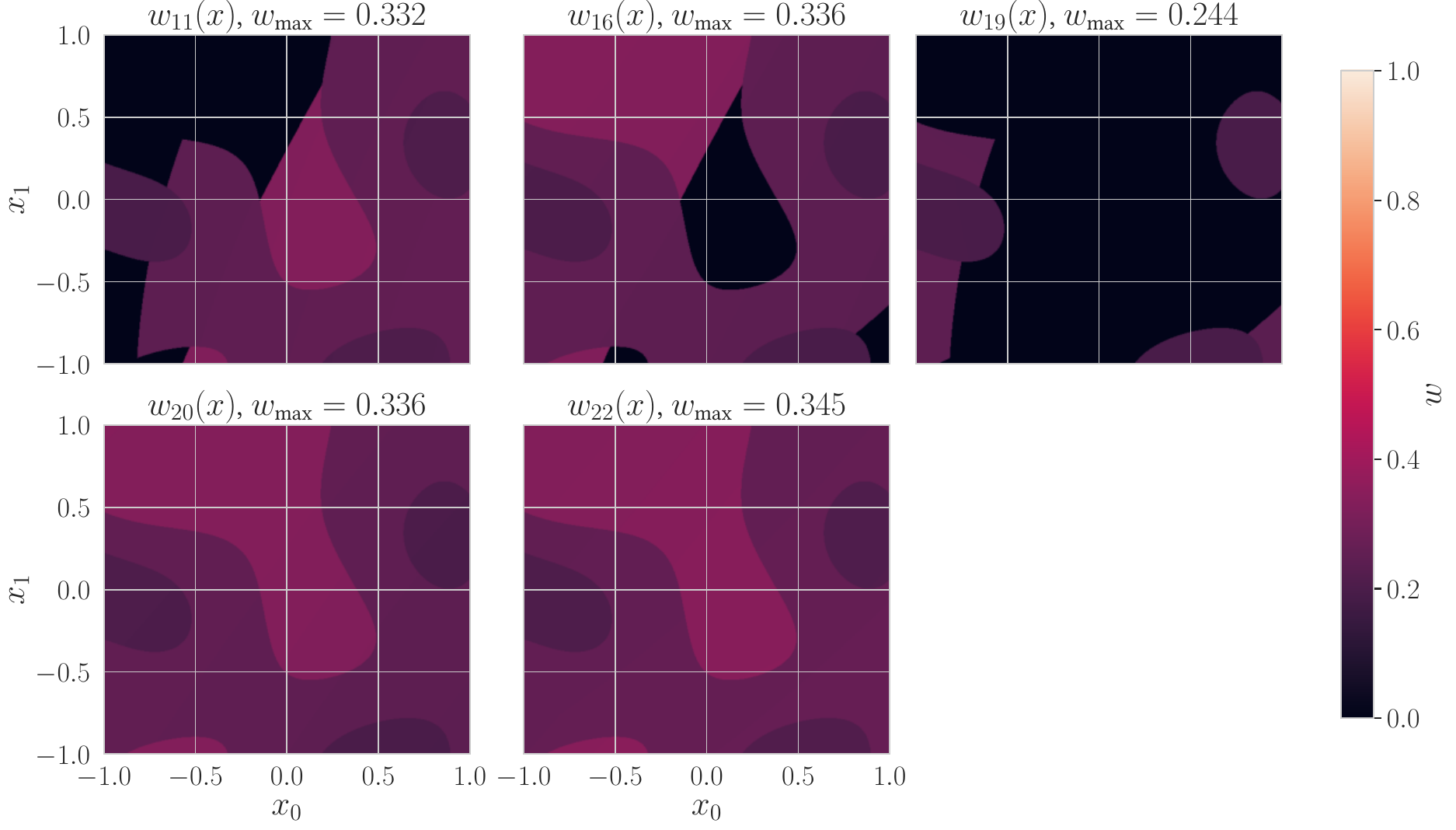}
      \caption{$w(x)$}
      \label{fig:gpbbob-wx}
  \end{subfigure}
  \hfill
  \begin{subfigure}[b]{0.48\textwidth}
      \centering
      \includegraphics[width=\textwidth]{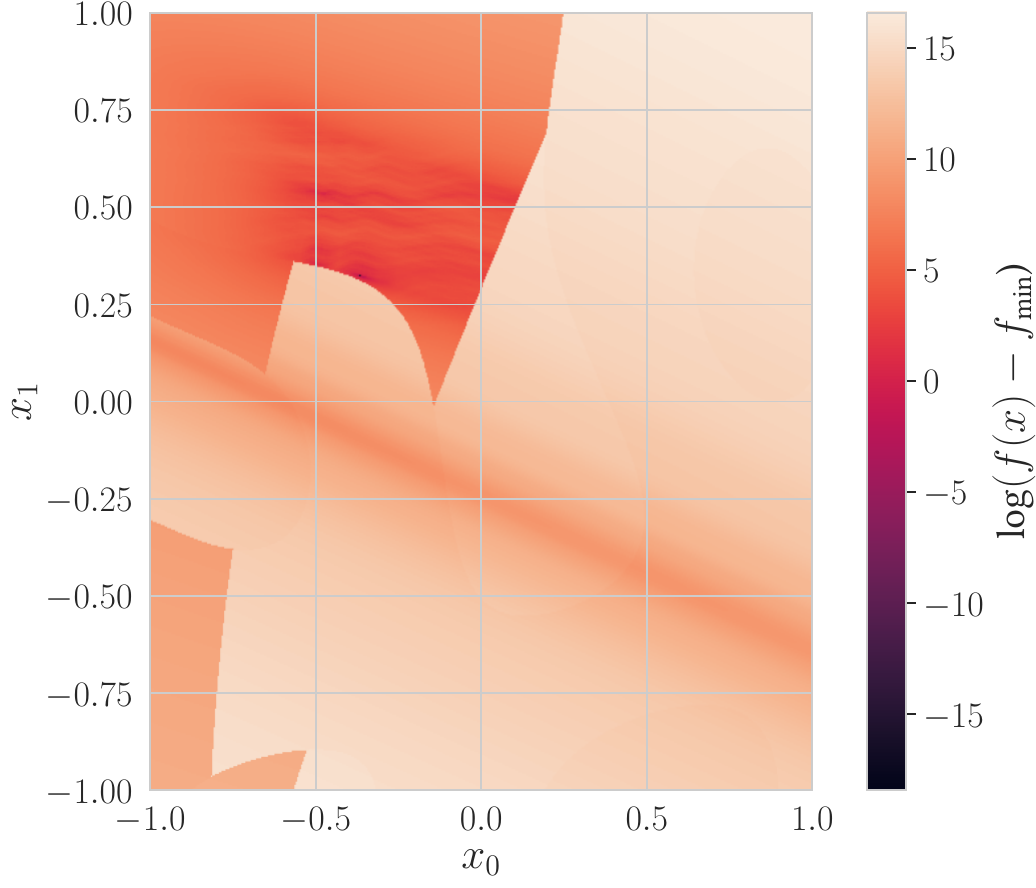}
      \caption{$f(x)$}
      \label{fig:gpbbob-fx}
  \end{subfigure}
  \hfill
  \begin{subfigure}[b]{0.48\textwidth}
      \centering
      \includegraphics[width=\textwidth]{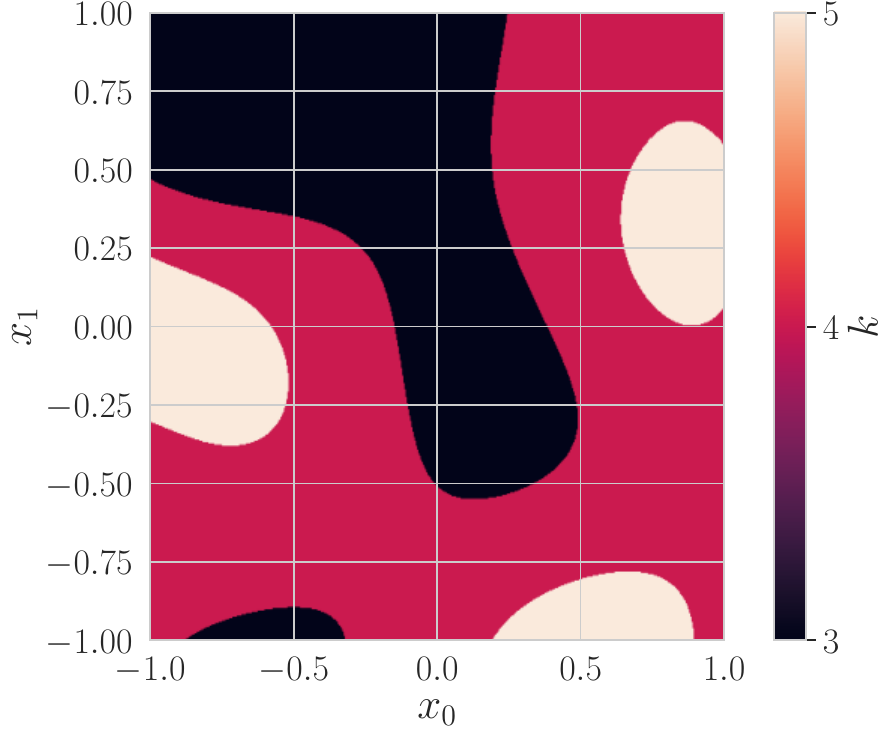}
      \caption{$k(x)$}
      \label{fig:gpbbob-kx}
  \end{subfigure}
  \caption{A GP-BBOB instance with $d=2$. Figure (a) shows $w(x)$ for all instances with any $w(x) > 0.001$. Figure (b) shows the sparsity level $k(x)$, and (c) shows the resulting landscape $f(x)$ with minimal observed value $f_{\min}$ (from the plotting grid, not necessarily the global optimum) subtracted, then log-scaled.}
  \label{fig:gpbbob-instance}
\end{figure}

\end{document}